\newtheorem{prop}{Proposition}
\newtheorem{lem}{Lemma}
\newtheorem{cor}{Corollary}
\newtheorem{defin}{Definition}
\newtheorem{rem}{Remark}
\newtheorem{asm}{Assumption}
\newcommand{\matI}{{\bf I}}
\newcommand{\matX}{{\bf X}}
\newcommand{\vece}{{\bf e}}
\newcommand{\vecx}{{\bf x}}
\newcommand{\vecy}{{\bf y}}
\newcommand{\vecw}{{\bf w}}
\newcommand{\vecv}{{\bf v}}
\newcommand{\vecr}{{\bf r}}
\newcommand{\pnorm}[2]{{\Vert #1 \Vert} _{#2}}
\newcommand{\Lovasz}{Lov\'asz }
\newcommand{\pdim}{p}
\newcommand{\param}{\bbeta}
\newcommand{\thetahat}{\widehat{\param}}
\newcommand{\thetastar}{\param^*}
\newcommand{\thetagreedy}{\thetahat}
\newcommand{\thetabar}{\bar{\param}} 
\newcommand{\sparsity}{k}
\newcommand{\response}{y_i}
\newcommand{\feature}{\vecx_i}
\newcommand{\numobs}{n}
\newcommand{\inprod}[2]{\langle#1, #2\rangle}
\newcommand{\powerset}{2^{[\pdim]}}
\newcommand{\functionvector}{l}
\newcommand{\functionset}{f}
\mathchardef\mhyphen="2D
\DeclareMathOperator*{\argmax}{arg\,max}
\DeclareMathOperator*{\supp}{supp}
\newcommand{\vertiii}[1]{{\left\vert\kern-0.25ex\left\vert\kern-0.25ex\left\vert #1
    \right\vert\kern-0.25ex\right\vert\kern-0.25ex\right\vert}}
\newcommand{\grad}[1]{ \nabla_{#1} }
\newcommand{\vect}[1]{{\boldsymbol{#1}}}
\def\bbeta{\vect{\beta}}
\def\bv{{\mathbf{v}}}
\def\bx{{\mathbf{x}}}
\def\by{{\mathbf{y}}}
\def\bA{{\mathbf{A}}}
\def\bB{{\mathbf{B}}}
\def\bC{{\mathbf{C}}}
\def\bD{{\mathbf{D}}}
\def\bH{{\mathbf{H}}}
\def\bI{{\mathbf{I}}}
\def\bX{{\mathbf{X}}}
\def\bY{{\mathbf{Y}}}
\def\bold0{{\mathbf{0}}}
\def\bbB{{\mathbb{B}}}
\def\bbE{{\mathbb{E}}}
\def\bbR{{\mathbb{R}}}
\def\cP{\mathcal{P}}
\def\sfA{\mathsf{A}}
\def\sfB{\mathsf{B}}
\def\sfL{\mathsf{L}}
\def\sfS{\mathsf{S}}
\def\sfT{\mathsf{T}}
\def\sfU{\mathsf{U}}
\newtheorem{theorem}{Theorem}
\renewcommand{\text}[1]{{\textnormal{#1}}}
\title{Restricted Strong Convexity \\ Implies Weak Submodularity\footnote{Research supported by NSF Grants CCF 1344179, 1344364, 1407278, 1422549, DMS 1723052, IIS 1421729, and ARO YIP W911NF-14-1-0258.}}
\author[1]{Ethan R. Elenberg} 
\author[1]{Rajiv Khanna}
\author[1]{Alexandros G. Dimakis}
\author[2]{Sahand Negahban}
\affil[1]{Department of Electrical and Computer Engineering \authorcr The University of Texas at Austin \authorcr \texttt{\{elenberg,\,rajivak\}@utexas.edu}, \texttt{dimakis@austin.utexas.edu}}
\affil[2]{Department of Statistics \authorcr Yale Univeristy \authorcr \texttt{sahand.negahban@yale.edu}}
\begin{document}
\maketitle

\begin{abstract}
We connect high-dimensional subset selection and submodular maximization. Our results extend the work of Das and Kempe (2011) from the setting of linear regression to arbitrary objective functions. 
For greedy feature selection, this connection allows us to obtain strong multiplicative performance bounds on several methods without statistical modeling assumptions. We also derive recovery guarantees of this form under standard assumptions. 
Our work shows that greedy algorithms perform within a constant factor from the best possible subset-selection solution for a broad class of general objective functions. Our methods allow a direct control over the number of obtained features as opposed to regularization parameters that only implicitly control sparsity.
Our proof technique uses the concept of \textit{weak submodularity} initially defined by Das and Kempe. We draw a connection between convex analysis and submodular set function theory which may be of independent interest for other statistical learning applications that have combinatorial structure.
\end{abstract}

\section{Introduction}
\label{sec:intro}
Sparse modeling is central in modern data analysis and high-dimensional statistics since it provides interpretability and robustness. 
Given a large set of $\pdim$ features we wish to build a model using only a small subset of $\sparsity$ features: the central combinatorial question is how to choose the optimal feature subset. Specifically, we are interested in optimizing over sparse parameter vectors $\param$ and consider problems of the form: 
\begin{equation}
\label{eq:opt}
\thetabar^k \in \argmax_{\param : \|\param\|_0 \leq k} \functionvector(\param) \; ,
\end{equation}
for some function $\functionvector(\cdot)$. This is a very general framework:  the function 
$\functionvector(\cdot)$ can be a linear regression $R^2$ objective, a generalized linear model (GLM) likelihood, a graphical model learning objective, or an arbitrary $M$-estimator~\cite{Negahban2012journal}. This \textit{subset selection} problem is NP-hard~\cite{natarajan1995sparse} even for the sparse linear regression objective, and a vast body of literature has analyzed different approximation algorithms under various assumptions.

The Restricted Isometry Property (RIP) and the (closely related) Restricted Eigenvalue property are conditions on
$\functionvector(\param)$ that allow convex relaxations and greedy algorithms to solve the subset selection problem within provable approximation guarantees. In parallel work, several authors have demonstrated that the subset selection problem can be connected to submodular optimization~\cite{Hoi2006active,Wei2013submod,Wei2015,Bach_Monograph} and that greedy algorithms are widely used for iteratively building good feature sets. 

The mathematical connection between submodularity and RIP was made explicit by Das and Kempe~\cite{Das2011} for linear regression. 
Specifically, they showed that when $\functionvector(\param)$ is the $R^2$ objective, it satisfies a weak form of submodularity when the linear measurements satisfy RIP. Note that for a given set of features $\sfS$, the function $\functionvector(\param_{\sfS})$ with support restricted to $\sfS$ can be thought of as a set function and this is key in this framework. Using this novel concept of \textit{weak submodularity} they established strong multiplicative bounds on the performance of greedy algorithms for subset selection and dictionary selection. Work by Bach~\cite{Bach_Monograph} in the linear regression setting discusses the notion of \emph{suppressors}; however, that condition is stronger than the weak submodularity assumption.
Krause and Cevher \cite{Krause2010} draw similar connections between submodularity and sparse regression, but they require a much stronger coherence-based assumption.

In this paper we extend this machinery beyond linear regression, to any function $\functionvector(\param)$.
To achieve this we need the proper generalization of the Restricted Eigenvalue and RIP conditions for arbitrary functions. 
This was obtained by Negahban, et al.~\cite{Negahban2012journal} and is called \textit{Restricted Strong Convexity}.
The title of this paper should now be clear:  we show that any objective function that satisfies \emph{restricted strong convexity} 
(and a natural smoothness assumption) of \cite{Negahban2012journal} must be weakly submodular. 

We establish multiplicative approximation bounds on the performance of greedy algorithms, including (generalized) Orthogonal Matching Pursuit and Forward Stepwise Regression, for general likelihood functions using our connection. 
To the best of our knowledge, this is the first analysis of connecting
a form of submodularity to the objective function's strong concavity
and smoothness.
Our approach provides sharp approximation bounds in any setting where these fundamental structural properties are well-understood, \textit{e.g.} generalized linear models.

Contrary to prior work we require no assumptions on the sparsity of the underlying problem.
Rather, we obtain a deterministic result establishing multiplicative approximation guarantees from the best-case sparse solution. 
Our results improve over previous work by providing bounds on a solution that is guaranteed to match the desired sparsity.
Convex methods such as $\ell_1$ regularized objectives require strong assumptions on the model, such as the irrepresentability conditions on the feature vectors, in order to provide exact sparsity guarantees on the recovered solution.

Our main result is that for any function $l(\cdot)$ that satisfies $M$-restricted smoothness (RSM) and $m$-restricted strong convexity (RSC), the set function $f(\sfS) = -l(\bbeta_\sfS)$ is weakly submodular with parameter 
$\gamma \geq \nicefrac{m}{M}$. The parameters $M$, $m$, and $\gamma$ are defined formally in Section \ref{sec:prelim}.
We use this result to analyze three greedy algorithms, each progressively better but more computationally intensive: 
the Oblivious (or Marginal Regression) algorithm computes for each feature the increase in objective and keeps the $k$ individually best features. 
Orthogonal Matching Pursuit (OMP) greedily adds one feature at a time by picking the feature with the largest inner product with the function gradient. The gradient is the correct generalization of the residual error used in linear regression OMP. 	
Finally, the most sophisticated algorithm is Forward Stepwise Regression: it adds one feature at a time by re-fitting the model repeatedly and keeping the feature that best improves the objective function at each step. 
We obtain the following performance bounds: 
\begin{itemize}
\item The Oblivious algorithm produces
a ($\nicefrac{\gamma}{k}$)-approximation 
to the best $k$-subset after $k$ steps.
\item Orthogonal Matching Pursuit produces a 
($1 - e^{-\nicefrac{m}{M}}$)-approximation 
to the best $k$-subset after $k$ steps.
\item Forward Stepwise Regression produces a ($1 - e^{-\gamma}$)-approximation to the best $k$-subset after $k$ steps.
\end{itemize}
We also show that if Forward Stepwise Regression is used to select more than $k$ features, we can approximate the best $k$-sparse feature performance within an \textit{arbitrary} accuracy. Finally, under additional assumptions, we derive statistical guarantees for convergence of the greedily selected parameter to the optimal sparse solution. Note that our results yield stronger performance guarantees even for cases that have been previously studied under the same assumptions. For example, for linear regression we obtain a better exponent in the approximation factor of OMP compared to previous state-of-the-art \cite{Das2011} (see Remark~\ref{rem:ompImprove}).

One implication of our work is that weak submodularity seems to be a sharper technical tool than RSC, as
any function satisfying the latter also satisfies the former.
Das and Kempe~\cite{Das2011} noted that it is easy to find problems which satisfy weak submodularity but not RSC, emphasizing the limitations of 
spectral techniques versus submodularity. 
We show this holds beyond linear regression, for any likelihood function.

Our connection between restricted strong convexity and weak submodularity has many benefits. First, the weak submodularity framework can now be used to develop theory for additional problems where spectral conditions would be overly restrictive or unwarranted. In our ongoing work, for example, we show that RSC assumptions play an important role in characterizing distributed greedy maximization \cite{Khanna2017} and rank constrained matrix optimization \cite{Khanna2017matrix}. Additionally, \cite{streamingWS} studies the role of RSC in streaming feature selection, as well as interpretability of black-box neural network models. Second, this framework allows statisticians to draw from recent advances (and classical results) in the field of submodular set function theory to provide general guarantees on the performance of greedy algorithms.

\paragraph{Related Work}
There have been a wide range of techniques developed for solving problems with sparsity constraints. These include using the Lasso, greedy selection methods 
(such as Forward Stagewise/Stepwise Regressions \cite{Mazumder2015boostingpreprint}, OMP, and CoSaMP \cite{cosamp}), 
forward-backward methods \cite{Jalali2011,Liu2014}, Pareto optimization \cite{Qian2015Pareto}, exponentially weighted aggregation \cite{Rigollet2011}, and truncated gradient methods~\cite{Jain2014iht}. Under the restricted strong convexity and smoothness assumptions that will be outlined in the next section, forward-backward methods can in fact recover the correct support
of the optimal set of parameters under an assumption on the smallest value of the optimal variable as it relates to the gradient. In contrast, the results derived in our setting for sparse GLMs allow one to provide recovery guarantees at various sparsity levels regardless of the optimal solution, with only information on the desired sparsity level and the RSC and RSM parameters. This is again in contrast to the other work that also needs information on the smallest nonzero value in the optimal set of coefficients, as well as an upper bound on the gradient of the objective at this optimal set. 

Focusing explicitly on OMP, most previous results require the strong RIP assumption, whereas we only require the weaker RSC and RSM assumptions. In our setting of arbitrary model conditions, OMP requires RIP as highlighted in Corollary 2.1 of Zhang~\cite{Zhang11}. However, we do note that under certain stochastic assumptions, for instance independent noise, the results established in those works can provide sharper guarantees with respect to the number of samples required by a factor on the order of $\log \left [ (k \log \pdim) /n \right ]$ (See Section \ref{sec:stats}). Nevertheless, we emphasize that our results apply under arbitrary assumptions on the noise and use only RSC and RSM assumptions.

In \cite{Das2011}, Das and Kempe's framework optimizes the goodness of fit parameter $R^2$ in linear regression. We derive similar results without relying on the closed-form solution to least squares. 
Greedy algorithms are prevalent in compressed sensing literature \cite{cosamp} and statistical learning theory \cite{Barron2008}. 
Greedy methods for sparsity constrained regression were analyzed in 
\cite{Jalali2011,Liu2014,Jain2014iht,Lozano2011,Tewari2011,Yuan2014} 
under assumptions similar to ours but without connections to submodularity. 
Convergence guarantees for $\ell_1$ regularized regression were given for exponential families in \cite{Kakade2010}, and for general nonlinear functions in \cite{Yang2016}. However, the latter requires additional assumptions such as knowledge of the nonlinearity and bounds on the loss function's derivatives, which can again be derived under appropriate stochasticity and model assumptions.

Classical results on submodular optimization~\cite{Nemhauser1978,Conforti1984} typically do not scale to large-scale applications. Therefore, several recent algorithms improve efficiency at the expense of slightly weaker guarantees~\cite{MirzasoleimanLazy,Barbosa2015,PanParallelDoubleGreedy,Khanna2017,streamingWS}. Submodularity has been used recently in the context of active learning \cite{Hoi2006active,Wei2015}. In this setup, the task is to select predictive data points instead of features.
Recently, \cite{Altschuler2016} and \cite{SingerApproximate2016} obtained constant factor guarantees for greedy algorithms using techniques from submodularity even though the problems considered were not strictly submodular. For maximizing weakly submodular functions subject to more general matroid constraints, \cite{Chen2017} recently proved that a randomized greedy forward stepwise algorithm has a constant factor approximation guarantee.

There are deep connections between convexity and submodularity~\cite{Bach_Monograph}. For example, the convex closure of a submodular function can be tractably computed as its \Lovasz extension~\cite{Lovasz1983}. This connection is fundamental to providing polynomial-time minimization algorithms for submodular set functions~\cite{fujishige2005submodular,Hazan2012online}. Similarly, another continuous extension of set functions, called the multilinear extension is vital for algorithmic development of constant factor approximation guarantees for submodular maximization~\cite{Buchbinder2016b}.
A more detailed study of convexity/concavity-like properties of submodular functions was presented in~\cite{Iyer2015}. More recent works exploit similar connections to provide constant factor approximation guarantees for a class of non-convex functions~\cite{Bian2017,Eghbali2016,Hassani2017}.

\section{Preliminaries}\label{sec:prelim}
First we collect some notation that will be used throughout the remainder of this paper.
Sets are represented by sans script fonts \textit{e.g.} $\sfA, \sfB$. Vectors are represented using lower case bold letters \textit{e.g.} $\bx,\by$, and matrices are represented using upper case bold letters \textit{e.g.} $\bX,\bY$. The $i$-th column of $\matX$ is denoted $\matX_i$. Non-bold face letters are used for scalars \textit{e.g.} $j,M,r$ and function names \textit{e.g.} $f(\cdot)$. The transpose of a vector or a matrix is represented by $\top$ \textit{e.g.} $\bX^\top$. For any vector $\vecv$, define $\|\vecv\|_{2,k} = \sqrt{\sum_{i=1}^k v_{(i)}^2}$, where $v_{(i)}$ represent the order statistics of $\vecv$ in decreasing order. Define $[p]:=\{1,2,\ldots, p\}$. For simplicity, we assume a set function defined on a ground set of size $p$ has domain $\powerset$. For singleton sets, we write $f(j) := f(\{j\})$. 

Recall that a set function $f(\cdot) : \powerset \rightarrow \bbR$ is called submodular if and only if for all $\sfA,\sfB \subseteq [p]$, $f(\sfA) + f(\sfB) \geq f(\sfA \cup \sfB) + f(\sfA \cap \sfB)$.
Intuitively, submodular functions have a \textit{diminishing returns} property. This becomes clear when $\sfA$ and $\sfB$ are disjoint: the sets have less value taken together than they have individually. 
We also state an equivalent definition:
\begin{defin}[Proposition 2.3 in \cite{Bach_Monograph}]
$f(\cdot)$ is submodular if for all $\sfA \subseteq [p]$ and $j,k \in [p] \backslash \sfA$ ,
\begin{align*}
f(\sfA \cup \{k\}) - f(\sfA) \geq f(\sfA \cup \{j,  k\}) - f(\sfA \cup \{j\}) .
\end{align*}
\end{defin}

The function is called normalized if $f(\emptyset) = 0$ and monotone if and only if $f(\sfA) \leq f(\sfB)$ for all $\sfA \subseteq \sfB$.
A seminal result by Nemhauser, et al.~\cite{Nemhauser1978} shows that greedy maximization of a monotone, submodular function (Algorithm \ref{alg:greedy} in Section \ref{subsec:algorithms}) returns a set with value within a factor of $(1 - \nicefrac{1}{e})$ from the optimum set of the same size. This has been the starting point for several algorithmic advances for large-scale combinatorial optimization, including stochastic, distributed, and streaming algorithms \cite{MirzasoleimanLazy,Barbosa2015,Khanna2017,streamingWS}.

Next, we define the submodularity ratio of a monotone set function.
\begin{defin}[Submodularity Ratio~\cite{Das2011}, Weak Submodularity]\label{def:submodularityRatio}
	Let $\sfS, \sfL \subset [p]$ be two disjoint sets, and $f(\cdot) : \powerset \rightarrow \bbR$. The submodularity ratio of $\sfL$ with respect to $\sfS$ is given by
	\begin{align}
	\gamma_{\sfL,\sfS} := \frac{\sum_{j\in \sfS} \left[f(\sfL \cup \{j\}) - f(\sfL) \right]}{f(\sfL \cup \sfS) - f(\sfL)} \label{eq:submodDef} .
	\end{align}
	The submodularity ratio of a set $\sfU$ with respect to an integer $k$ is given by
	\begin{align}
	\gamma_{\sfU,k} := \min_{\substack{\sfL,\sfS :  \sfL \cap \sfS = \emptyset ,\\ \sfL \subseteq \sfU,  |\sfS| \leq k}} \gamma_{\sfL,\sfS} .
	\end{align}
	Let $\gamma > 0$. We call a function $\gamma$-weakly submodular at a set $\sfU$ and an integer $k$ if $\gamma_{\sfU,k}\geq \gamma$.
\end{defin}

It is straightforward to show that $f(\cdot)$ is submodular if and only if $\gamma_{\sfL,\sfS} \geq 1$ for all sets $\sfL$ and $\sfS$. In our application, $0 < \gamma_{\sfL,\sfS} \leq 1$ which provides a notion of \textit{weak submodularity} in the sense that even though the function is not submodular, it still provides provable bounds of performance of greedy selections.

Next we define the restricted versions of strong concavity and smoothness, consistent with~\cite{Negahban2012journal, loh2015}.
\begin{defin}[Restricted Strong Concavity, Restricted Smoothness] \label{def:RSCRSM}
	A function $l: \bbR^p \rightarrow \bbR$ is said to be restricted strong concave with parameter $m_\Omega$ and restricted smooth with parameter $M_\Omega$ on a domain $\Omega \subset \bbR^p \times \bbR^p$ if for all $(\bx,\by) \in \Omega$,
	\begin{align*}
	- \frac{m_\Omega}{2} \pnorm{\by -\bx}{2}^2 &\geq l(\by) - l(\bx) - \langle \nabla l(\bx) , \by - \bx \rangle \geq  - \frac{M_\Omega}{2} \pnorm{\by - \bx}{2}^2 . \end{align*}
\end{defin}

\begin{rem}
	If a function $l(\cdot)$ has restricted strong concavity parameter $m$, then its negative $-l(\cdot)$ has restricted strong convexity parameter $m$. In the sequel, we will use these properties interchangeably for maximum likelihood estimation where $l(\cdot)$ is the log-likelihood function and $- l(\cdot)$ is the data fit loss.
\end{rem}

If $\Omega^{\prime} \subseteq \Omega$, then $M_{\Omega^{\prime}} \leq M_{\Omega}$ and $m_{\Omega^{\prime}} \geq m_{\Omega}$. With slight abuse of notation, unless stated otherwise let $(m_k,M_k)$ denote the RSC and RSM parameters on the domain $\Omega_{k}$ of all pairs of 
$k$-sparse vectors that differ in at most $k$ entries, \textit{i.e.} $\Omega_k := \{(\vecx,\vecy):  \pnorm{\vecx}{0} \leq k, \enspace \pnorm{\vecy}{0} \leq k, \enspace  \pnorm{\vecx-\vecy}{0} \leq k\}$.
If $j \leq k$, then $M_j \leq M_k$ and $m_j \geq m_k$.
In addition, denote $\tilde{\Omega}_k := \{(\vecx,\vecy): \pnorm{\vecx}{0} \leq k, \enspace \pnorm{\vecy}{0} \leq k, \enspace \pnorm{\vecx-\vecy}{0} \leq 1\}$ with corresponding smoothness parameter $\tilde{M}_k$, which is clearly greater than or equal to $\tilde{M}_1 = M_1$.

\subsection{Sparsity Constrained Generalized Linear Regression} 
\label{subsec:regression}
Due to its combinatorial nature, there has been a tremendous amount of effort in developing computationally tractable and fundamentally sound methods to solve the subset selection problem approximately. In this section we provide background on various problems that arise in subset selection. Our focus here will be on sparse regression problems.
We will assume that we obtain $\numobs$ observations of the form $(\feature,y_i)$. For now we make no assumptions regarding how the data is generated, but wish to model the interaction between $\feature \in \bbR^p$ and $y_i \in \bbR$ as
\begin{equation*}
\response = g(\inprod{\feature}{\thetastar}) + \text{noise} ,
\end{equation*}
for some known link function $g$ and a sparse vector $\thetastar$. 
Each feature observation is a row in the $\numobs \times p$ design matrix $\matX$.
The above is called a generalized linear model, or GLM, and arises as the maximum likelihood estimate of data drawn from a canonical exponential family, \textit{i.e.} normal, Bernoulli, Dirichlet, negative binomial, etc.~\cite{Brown86}. Another interpretation is in minimizing the average Bregman divergence between the response $\response$ and the mean parameter $\inprod{\feature}{\beta}$. There has been a large body of literature studying this method's statistical properties. These include establishing sparsistency, parameter consistency, and 
prediction error~\cite{Kakade2010,Geer08,Negahban2012journal}.
We refer the reader to the standard literature for more details on GLMs and exponential families~\cite{Brown86,Dey00}.

\subsection{Support Selection Algorithms}\label{subsec:algorithms}

We study general $M$-estimators of the form \eqref{eq:opt}
for some function $\functionvector(\cdot)$. Note that $\functionvector(\cdot)$ will implicitly depend on our specific data set, but we hide that for ease of notation. One common choice of $\functionvector(\cdot)$ is the log-likelihood of a parametric distribution. \cite{Das2011} considers the specific case of maximizing the $R^2$ objective. Through a simple transformation, that is equivalent to maximizing the log-likelihood of the parametric distribution that arises from the model $y_i = \inprod{\feature}{\thetastar}+ w$ where $w \sim N(0,\sigma^2)$. If we let $\thetagreedy^s$ be the $s$-sparse solution derived, and again let $\thetabar^k$ be the best $k$-sparse parameter, then we wish to bound
\begin{equation*}
\functionvector(\thetagreedy^s) \geq (1-\epsilon) \functionvector(\thetabar^k),
\end{equation*}
without any assumptions on the underlying sparsity or a \emph{true} parameter.

For a concave function $\functionvector(\cdot) : \bbR^p \rightarrow \bbR$, we can define an equivalent set function $\bar{\functionset}(\cdot): \powerset \rightarrow \bbR$ so that $\bar{\functionset}(\sfS) = \max_{\supp(\bx) \subseteq \sfS} \functionvector (\bx)$. The problem of support selection for a given integer $k$ is then $\max_{| \sfS | \leq k} \bar{\functionset}(\sfS)$. Recall that a vector is $k$-sparse if it is $0$ on all but $k$ indices. 
We provide approximation guarantees on the \emph{normalized} set function defined as $f(\sfS) = \bar{f}(\sfS) - \bar{f}(\emptyset)$. 
The support selection problem is thus equivalent to finding the $k$-sparse vector $\param$ that maximizes $\functionvector(\param)$: 
\begin{align}
\max_{\sfS : | \sfS | \leq k} f(\sfS) \Leftrightarrow \max_{\substack{\bbeta: \bbeta_{\sfS^c = 0} \\ | \sfS | \leq k}} l(\bbeta) - l(\bold0).
\label{eq:mlfunction}
\end{align}
Let $\bbeta^{(\sfA)}$ denote the $\bbeta$ maximizing $\functionset(\sfA)$, and let $\bbeta^{(\sfA)}_\sfB$ denote $\bbeta^{(\sfA)}$ restricted to the coordinates specified by $\sfB$. We present three support selection strategies for the set function $\functionset(\cdot)$ that are simple to implement and are widely used.

\paragraph{Oblivious Algorithm}
One natural strategy is to select the top $k$ features ranked by their individual improvement over a null model, using a goodness of fit metric such as $R^2$ or 
$p$-value.
This is referred to as the \textit{Oblivious} algorithm, shown as Algorithm \ref{alg:oblivious}. In the context of linear regression, this is simply Marginal Regression. While it is computationally inexpensive and parallelizes easily, the Oblivious algorithm does not account for dependencies or redundancies in the span of features. 

\paragraph{Forward Stepwise Algorithm}
A less extreme greedy approach would check for incremental gain at each step using nested models. This is referred to as the \textit{Forward Stepwise} algorithm, presented as Algorithm \ref{alg:greedy}. Given a set of features $\sfS$ is already selected, choose the feature with largest marginal gain, \textit{i.e.} select $\{j\}$ such that $\sfS \cup \{j\}$ has the most improvement over $\sfS$. All regression coefficients are updated each time a new feature is added. 
In the case of submodular set functions, this returns a solution that is provably within a constant factor of the optimum \cite{Nemhauser1978}.

\begin{algorithm}[H]
    \caption{Oblivious Support Selection}
   \label{alg:oblivious}
\begin{algorithmic}[1]
    \STATE {\bfseries Input:} sparsity parameter $k$, set function $f(\cdot): \powerset \rightarrow \bbR$ 
    \FOR{$i=1\ldots p$}
	    \STATE $\bv[i] \gets f(\{i\})$
    \ENDFOR
	\STATE $\sfS_k \gets$ indices corresponding to the top $k$ values of $\bv$
    \RETURN $\sfS_k$, $f(\sfS_k)$.
\end{algorithmic}
\end{algorithm}
\begin{algorithm}[H]
    \caption{Forward Stepwise Selection}
   \label{alg:greedy}
\begin{algorithmic}[1]
	\STATE {\bfseries Input:} sparsity parameter $k$, set function $f(\cdot): \powerset \rightarrow \bbR$ 
	\STATE $\sfS_0^G \gets \emptyset$
	\FOR{$i=1\ldots k$}
	\STATE $s \gets \arg \max_{j \in [p]\backslash \sfS_{i-1}}  f(\sfS_{i-1}^G \cup \{j\}) - f(\sfS_{i-1}^G)$
	\STATE $\sfS_i^G \gets \sfS_{i-1}^G \cup \{s\}$
	\ENDFOR
	\RETURN $\sfS_k^G$, $f(\sfS_k^G)$.
\end{algorithmic}
\end{algorithm}

\begin{algorithm}[ht]
    \caption{Orthogonal Matching Pursuit}
   \label{alg:omp}
\begin{algorithmic}[1]
	\STATE {\bfseries Input:} sparsity parameter $k$, objective function $l(\cdot): \bbR^p \rightarrow \bbR$ 
	\STATE $\sfS_0^P \gets \emptyset$
	\STATE $\vecr \gets \grad{} l(\mathbf{0})$
	\FOR{$i=1\ldots k$}
		\STATE $s \gets \arg \max_{j}  | \langle \vece_j, \vecr \rangle |$
	\STATE $\sfS_i^P \gets \sfS_{i-1}^P \cup \{s\}$
	\STATE $\bbeta^{(\sfS_i^P)} \gets \argmax_{\bbeta: \supp(\bbeta)  \subseteq \sfS_i^P } l(\bbeta)$
	\STATE $\vecr \gets \grad{} l(\bbeta^{(\sfS_i^P)})$
	\ENDFOR
	\RETURN $\sfS_k^P$, $l(\bbeta^{(\sfS_k^P)})$.
\end{algorithmic}
\end{algorithm}

\paragraph{Generalized OMP}
	Another approach is to choose features which correlate well with the orthogonal complement of what has already been selected. 
	Using \eqref{eq:mlfunction} and an appropriately chosen model, 
	we can define the gradient evaluated at the current parameter $\param$ to be a residual term. In \textit{Orthogonal Matching Pursuit}, features are selected to maximize the inner product with this residual, as shown in line $5$ of Algorithm \ref{alg:omp}. Here $\vece_j$ represents a unit vector with a $1$ in coordinate $j$ and zeros in the other $p-1$ coordinates. OMP requires much less computation than forward stepwise selection, since the feature comparison is done via an $n$-dimensional inner product rather than a regression score. A detailed discussion can be found in \cite{Blumensath2007}.

\section{Approximation Guarantees} 
\label{sec:derivation}
In this section, we derive theoretical lower bounds on the submodularity ratio based on strong concavity and strong smoothness of a function $l(\cdot)$. We show that if the concavity parameter is bounded away from $0$ and the smoothness parameter is finite, then the submodularity ratio is also bounded away from $0$, which allows approximation guarantees for Algorithms \ref{alg:oblivious}--\ref{alg:omp}.  While our proof techniques differ substantially, the outline of this section follows that of~\cite{Das2011} which obtained approximation guarantees for support selections for linear regression. While our results are applicable to general functions, in Appendix~\ref{sec:glmAppendix} we discuss a direct application of maximum likelihood estimation for sparse generalized linear models.

We assume a differentiable function $l: \bbR^p \rightarrow \bbR$. Recall that we can define the equivalent, normalized, monotone set function $f: \powerset \rightarrow \bbR$ for a selected support as $f(\sfS) = \max_{\text{supp}(\bbeta) \subseteq \sfS} l(\bbeta) - l(\bold0)$. 
We will use set functions wherever possible to simplify the notation.

We now present our main result as Theorem~\ref{thm:ratioBound}, a bound on a function's submodularity ratio $\gamma_{\sfU,k}$ in terms of its strong concavity and smoothness parameters (see Definitions \ref{def:submodularityRatio}--\ref{def:RSCRSM}). Proofs of lemmas and theorems omitted from this section can be found in Section \ref{sec:proofs}.

\begin{theorem}[RSC/RSM Implies Weak Submodularity]\label{thm:ratioBound}
	Define $f(\sfS)$ as in \eqref{eq:mlfunction}, 
	with a function $l(\cdot)$ that is ($m_{|\sfU| + k}$,$M_{|\sfU| + k}$)-(strongly concave, smooth) on 
	$\Omega_{|\sfU| + k}$ and $\tilde{M}_{|\sfU| + 1}$ smooth on $\tilde{\Omega}_{|\sfU| + 1}$. 
	Then the submodularity ratio $\gamma_{\sfU,k}$ is lower bounded by
	\begin{align}
	\gamma_{\sfU,k} \geq \frac{m_{|\sfU| + k}}{\tilde{M}_{|\sfU| + 1}} \geq \frac{m_{|\sfU| + k}}{M_{|\sfU| + k}} \;. \label{eq:submodBound}
	\end{align}
\end{theorem}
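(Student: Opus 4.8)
The plan is to fix arbitrary disjoint sets $\sfL \subseteq \sfU$ and $\sfS$ with $|\sfS| \le k$, bound the numerator and denominator of the submodularity ratio $\gamma_{\sfL,\sfS}$ from \eqref{eq:submodDef} separately, and observe that both bounds reduce to the same quantity, namely the squared norm of the restriction of $\nabla l(\bbeta^{(\sfL)})$ to coordinates in $\sfS$. The preliminary fact I would establish first (as a small lemma) is that $\bbeta^{(\sfL)}$, the maximizer of $l$ over vectors supported on $\sfL$, exists (by differentiability and restricted strong concavity) and satisfies the first-order stationarity condition $\nabla_i l(\bbeta^{(\sfL)}) = 0$ for every $i \in \sfL$. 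This is what couples the two bounds: for any $\vecv$ with $\supp(\vecv) \subseteq \sfL \cup \sfS$ one then has $\langle \nabla l(\bbeta^{(\sfL)}), \vecv \rangle = \langle \nabla l(\bbeta^{(\sfL)}), \vecv \rangle_\sfS$, i.e.\ only the $\sfS$-coordinates of the gradient matter.

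For the numerator, I would lower bound each term $f(\sfL \cup \{j\}) - f(\sfL)$ for $j \in \sfS$. Since $\bbeta^{(\sfL)} + t\vece_j$ is feasible for the support $\sfL \cup \{j\}$, we get $f(\sfL\cup\{j\}) - f(\sfL) \ge l(\bbeta^{(\sfL)} + t\vece_j) - l(\bbeta^{(\sfL)})$ for every scalar $t$. The pair $(\bbeta^{(\sfL)}, \bbeta^{(\sfL)}+t\vece_j)$ lies in $\tilde\Omega_{|\sfU|+1}$ (both vectors are $(|\sfU|+1)$-sparse and they differ only in coordinate $j$), so restricted smoothness with parameter $\tilde M_{|\sfU|+1}$ gives $l(\bbeta^{(\sfL)} + t\vece_j) - l(\bbeta^{(\sfL)}) \ge t\,\nabla_j l(\bbeta^{(\sfL)}) - \tfrac{\tilde M_{|\sfU|+1}}{2} t^2$; optimizing this quadratic over $t$ yields $f(\sfL\cup\{j\}) - f(\sfL) \ge \bigl(\nabla_j l(\bbeta^{(\sfL)})\bigr)^2 / (2\tilde M_{|\sfU|+1})$. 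Summing over $j \in \sfS$ gives $\sum_{j\in\sfS}\!\bigl[f(\sfL\cup\{j\}) - f(\sfL)\bigr] \ge \tfrac{1}{2\tilde M_{|\sfU|+1}} \pnorm{\nabla l(\bbeta^{(\sfL)})|_\sfS}{2}^2$.

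For the denominator, write $f(\sfL \cup \sfS) - f(\sfL) = l(\bbeta^{(\sfL\cup\sfS)}) - l(\bbeta^{(\sfL)})$ and apply restricted strong concavity to the pair $(\bbeta^{(\sfL)}, \bbeta^{(\sfL\cup\sfS)})$, which lies in $\Omega_{|\sfU|+k}$ since both vectors are $(|\sfU|+k)$-sparse and their difference is too. Setting $\vecv = \bbeta^{(\sfL\cup\sfS)} - \bbeta^{(\sfL)}$ (supported on $\sfL \cup \sfS$), this gives $f(\sfL\cup\sfS) - f(\sfL) \le \langle \nabla l(\bbeta^{(\sfL)}), \vecv\rangle - \tfrac{m_{|\sfU|+k}}{2}\pnorm{\vecv}{2}^2$; using the vanishing of the gradient on $\sfL$ and Cauchy--Schwarz on the $\sfS$-coordinates, then maximizing the resulting concave quadratic in $\pnorm{\vecv}{2}$, bounds this by $\pnorm{\nabla l(\bbeta^{(\sfL)})|_\sfS}{2}^2 / (2 m_{|\sfU|+k})$. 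Dividing the numerator bound by the denominator bound, the common factor $\pnorm{\nabla l(\bbeta^{(\sfL)})|_\sfS}{2}^2$ cancels and leaves $\gamma_{\sfL,\sfS} \ge m_{|\sfU|+k}/\tilde M_{|\sfU|+1}$; taking the minimum over admissible $(\sfL,\sfS)$ gives the first inequality of \eqref{eq:submodBound}, and $\tilde M_{|\sfU|+1} \le M_{|\sfU|+k}$ (because $\tilde\Omega_{|\sfU|+1} \subseteq \Omega_{|\sfU|+k}$ and smoothness parameters are monotone under domain inclusion) gives the second.

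The main obstacle is not any one calculation but the bookkeeping: one must verify the restricted maximizers exist and have gradient zero on their own support, and then track carefully which sparsity level and which domain ($\Omega$ versus $\tilde\Omega$) each use of RSC/RSM legitimately requires — it is precisely the appearance of $|\sfU|+k$ in the strong-concavity step and $|\sfU|+1$ in the smoothness step that produces the stated constants, and getting this wrong degrades the bound. A minor degenerate case, $f(\sfL\cup\sfS) = f(\sfL)$ making the ratio $0/0$, is handled by noting the denominator bound forces $\nabla l(\bbeta^{(\sfL)})|_\sfS = 0$ and hence the numerator vanishes as well, so the ratio is taken to be $1$ by convention.
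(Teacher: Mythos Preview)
Your proposal is correct and follows essentially the same route as the paper's proof: upper bound the denominator via restricted strong concavity at $(\bbeta^{(\sfL)},\bbeta^{(\sfL\cup\sfS)})$, lower bound each numerator term via restricted smoothness along a single-coordinate perturbation of $\bbeta^{(\sfL)}$, and observe that both bounds are controlled by $\pnorm{\nabla l(\bbeta^{(\sfL)})_\sfS}{2}^2$. The only cosmetic differences are that the paper perturbs along $\bbeta^{(\sfL\cup\sfS)}_j$ (which is just a scalar multiple of your $\vece_j$) and optimizes over the full vector $\vecv$ rather than via Cauchy--Schwarz plus a scalar quadratic; you are also more explicit than the paper about the first-order stationarity $\nabla_i l(\bbeta^{(\sfL)})=0$ on $i\in\sfL$, which the paper uses implicitly when it reduces $\pnorm{\nabla l(\bbeta^{(\sfL)})_{\sfL\cup\sfS}}{2}^2$ to $\pnorm{\nabla l(\bbeta^{(\sfL)})_{\sfS}}{2}^2$.
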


\begin{rem}
In the case of linear least-squares regression, $m$ and $M$ become sparse eigenvalues of the covariance matrix, \textit{i.e.} $m_{|\sfU| + k} = \lambda_{\min}(|\sfU| +k) \geq 0$ and $\tilde{M}_{|\sfU|+1} = \lambda_{\max}(1) = 1$. 
Thus Theorem \ref{thm:ratioBound} becomes $\gamma_{\sfU,k} \geq \lambda_{\min}(|\sfU| +k)$, \textit{i.e.} ``RIP implies weak submodularity'', consistent with Lemma 2.4 of \cite{Das2011}.
\end{rem}

\begin{rem}
Since $\nicefrac{m}{M} \leq 1$, this method cannot prove that the function is submodular (even on a restricted set of features). 
However, the guarantees in this section only require weak submodularity.
\end{rem}

\begin{rem}
Theorem \ref{thm:ratioBound} has the following geometric interpretation: the submodularity ratio of $f(\sfS)$ is bounded in terms of the maximum curvature of $l(\cdot)$ over the domain $\tilde{\Omega}_{|\sfU|+1}$ and the minimum curvature of $l(\cdot)$ over the (larger) domain $\Omega_{|\sfU|+k}$. 
The upper-curvature bound effectively controls the maximum amount that each individual function coordinate can influence the function value. The lower-curvature bound provides a lower-bound on the improvement of adding all features at once. Hence, loosely speaking the submodular ratio bound will be the ratio of both of these quantities. 
This intuition is used more formally in the proof (Section \ref{sec:proofs}), where for a $k$-sparse set $\sfS$ and $j \in \sfS$, $l(\bbeta^{(\sfU)})$ is perturbed 
by scaled projections of $\nabla l(\bbeta^{(\sfU)})$ onto 
$\bbeta^{(\sfU \cup \sfS)}_j$ and $\vece_{\sfS}$,
respectively.
\end{rem}

Theorem \ref{thm:ratioBound} allows us to generalize several results of \cite{Das2011}, starting with the following lemma:

\begin{lem}\label{lem:squeeze}
	Let $1 \leq k \leq n$.
	\begin{align*}
	f([k])  &\geq \max\left\{\frac{1}{k} , \frac{m_1}{4M_k}\left(3 + \frac{m_1}{M_1}\right) \right\} \sum_{j=1}^k f(j) \\
	&\geq \max\left\{\frac{1}{k} , \frac{m_k}{4M_k}\left(3 + \frac{m_k}{M_k}\right) \right\} \sum_{j=1}^k f(j).
	\end{align*}
\end{lem}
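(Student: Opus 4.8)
\textbf{Proof plan for Lemma~\ref{lem:squeeze}.}
The plan is to compare $f([k])$, the optimal value over the support $[k]$, with the sum of singleton gains $\sum_{j=1}^k f(j)$, using the RSC/RSM inequalities of Definition~\ref{def:RSCRSM} as the only structural input. First I would establish the trivial bound $f([k]) \geq \tfrac{1}{k}\sum_{j=1}^k f(j)$: since $f(\cdot)$ is monotone, $f([k]) \geq f(j)$ for each $j$, so $f([k]) \geq \max_j f(j) \geq \tfrac{1}{k}\sum_{j=1}^k f(j)$. This handles the first term in the $\max$. The second (spectral-type) term is the substantive one.

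For the second bound, let $\bbeta^{([k])}$ be the maximizer defining $f([k])$, and write $\bv := \nabla l(\bold 0)$ restricted to the coordinates $[k]$. The key sub-steps are: (i) \emph{lower-bound each singleton gain} $f(j) = l(\bbeta^{(\{j\})}) - l(\bold 0)$ from below by choosing a specific feasible $1$-sparse vector — namely a suitable step $t\, \vece_j$ — and applying RSM on $\tilde\Omega_k$ (or $\tilde\Omega_1$): optimizing over $t$ gives $f(j) \geq \tfrac{v_j^2}{2M_1}$, and summing, $\sum_{j=1}^k f(j) \geq \tfrac{\|\bv\|_2^2}{2M_1}$ (all gradient mass on $[k]$); (ii) \emph{relate $f([k])$ to the gradient $\bv$ and to $\|\bbeta^{([k])}\|$} via the RSC inequality at $\bold 0$ and $\bbeta^{([k])}$: since $\nabla l(\bbeta^{([k])})$ vanishes on $[k]$ by optimality, first-order optimality plus RSC give both $f([k]) \geq \tfrac{1}{2M_k}\|\bv\|_2^2$-type lower bounds and, more sharply, a bound of the form $f([k]) \geq \tfrac{m_1}{2}\|\bbeta^{([k])}\|_2^2$ together with $\|\bv\|_2 \leq M_k \|\bbeta^{([k])}\|_2$ (comparing the gradient at $\bold 0$ and at the optimum). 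Combining these with the $3 + m_1/M_1$ coefficient is the delicate bookkeeping step: one plays off the RSC lower bound on $f([k])$ in terms of $\|\bbeta^{([k])}\|_2^2$ against an RSM upper bound relating $\|\bbeta^{([k])}\|_2$ to $\|\bv\|_2$, chooses the optimal convex combination of the two available lower bounds on $f([k])$, and matches it against the upper bound $\sum_j f(j) \leq \tfrac{1}{2m_1}\|\bv\|_2^2$ (RSC upper-bounds each $f(j)$) — or more directly lower-bounds $f([k])$ by a multiple of $\sum_j f(j)$ by substituting the gradient estimate. Finally, (iii) \emph{pass from $(m_1,M_1)$ to $(m_k,M_k)$} using the monotonicity relations $m_1 \geq m_k$, $M_1 \leq M_k$ noted after Definition~\ref{def:RSCRSM}: the function $x \mapsto x(3+x)$ is increasing on $[0,1]$ and $m/M \in (0,1]$, so replacing $m_1/M_1$ by $m_k/M_k$ and $m_1/M_k$ by $m_k/M_k$ only decreases the right-hand side, giving the second displayed inequality.

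The main obstacle I anticipate is \emph{step (ii)}, specifically producing the exact coefficient $\tfrac{m_1}{4M_k}(3 + \tfrac{m_1}{M_1})$ rather than a cruder constant. The natural first-pass argument (use RSM to get $f([k]) \geq \tfrac{\|\bv\|_2^2}{2M_k}$ and RSC to get $\sum_j f(j) \leq \tfrac{\|\bv\|_2^2}{2m_1}$, yielding ratio $m_1/M_k$) is weaker than the claimed bound, so one must instead keep $\bbeta^{([k])}$ explicit: expand $l(\bbeta^{([k])}) - l(\bold0)$ using the quadratic upper bound around $\bbeta^{([k])}$ evaluated at $\bold0$ (which involves $\|\bbeta^{([k])}\|_2^2$ and $\langle \nabla l(\bbeta^{([k])}), \bbeta^{([k])}\rangle = 0$) and simultaneously the lower quadratic bound, and combine the two resulting inequalities with weights that produce the $3$ and the $m_1/M_1$. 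I would follow the structure of Das and Kempe's corresponding linear-regression lemma, replacing least-squares closed forms by the RSC/RSM envelopes; the bookkeeping is routine once the right two inequalities are paired, but getting the constant exactly requires care. The monotonicity step (iii) and the $\tfrac{1}{k}$ branch are both easy.
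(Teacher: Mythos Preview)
Your $\tfrac{1}{k}$ argument matches the paper's exactly. For the spectral-type term, however, you have the comparison backwards: the ``crude'' route you dismiss --- $f([k]) \geq \tfrac{\|\bv\|_2^2}{2M_k}$ from RSM at $(\bold0,\tfrac{1}{M_k}\bv)$ together with $\sum_j f(j) \leq \tfrac{\|\bv\|_2^2}{2m_1}$ from RSC at each $(\bold0,\bbeta^{(j)})$, giving $f([k]) \geq \tfrac{m_1}{M_k}\sum_j f(j)$ --- is \emph{stronger} than the stated bound, since $\tfrac{m_1}{4M_k}\bigl(3 + \tfrac{m_1}{M_1}\bigr) \leq \tfrac{m_1}{4M_k}\cdot 4 = \tfrac{m_1}{M_k}$. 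So your first-pass argument already proves the lemma, and the elaborate bookkeeping you flag as the main obstacle is unnecessary.

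The paper's proof takes a different and more involved route: it never touches the full optimizer $\bbeta^{([k])}$. Instead it builds a $k$-sparse test vector $\overline{\bbeta}$ whose $j$-th coordinate is $\alpha_j\,\bbeta_j^{(j)}$ (a scalar multiple of the \emph{singleton} optimizer), applies RSM at $(\bold0,\overline{\bbeta})$ and RSC at each $(\bold0,\bbeta^{(j)})$, optimizes over the $\alpha_j$, and then invokes the two-sided bound $\tfrac{m_1}{2}|\bbeta_j^{(j)}|^2 \leq f(j) \leq \tfrac{M_1}{2}|\bbeta_j^{(j)}|^2$ (from RSC/RSM at $(\bbeta^{(j)},\bold0)$, using first-order optimality of $\bbeta^{(j)}$). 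This is what produces the specific constant $\tfrac{m_1}{4M_k}(3 + \tfrac{m_1}{M_1})$. Your proposed elaborate step, by contrast, pivots on $\bbeta^{([k])}$ and its first-order condition --- a genuinely different decomposition that would not naturally yield the same constant. Since your simple gradient-at-zero argument already dominates the paper's bound, the lemma follows either way; your route is shorter, while the paper's mirrors the Das--Kempe linear-regression argument coordinate by coordinate.
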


Now we present our first performance guarantee for feature selection.
\begin{theorem}[Oblivious Algorithm Guarantee]\label{thm:obliviousGuarantee}
	Define $f(\sfS)$ as in \eqref{eq:mlfunction}, with a function  $l(\cdot)$ that is $M_k$-smooth and $m_k$-strongly concave on 
	$\Omega_k$.
	Let $f^{OBL}$ be the value at the set selected by the Oblivious algorithm, and let $f^{OPT}$ be the optimal value over all sets of size $k$. Then 
	\begin{align*}
	f^{OBL} &\geq \max\left\{ \frac{m_k}{kM_1},\frac{m_km_1}{4M_kM_1}\left(3 + \frac{m_1}{M_1}\right) \right\}  f^{OPT} \\
	&\geq \max\left\{ \frac{m_k}{kM_k},\frac{3m_k^2}{4M_k^2}, \frac{m_k^3}{M_k^3} \right\}  f^{OPT} .
	\end{align*}
\end{theorem}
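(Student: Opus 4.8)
The plan is to assemble the bound from three ingredients already available in the excerpt: Lemma~\ref{lem:squeeze} (which lower-bounds $f(\sfS)$ by a multiple of $\sum_{j\in\sfS}f(j)$ for a size-$k$ set $\sfS$), the defining property of the Oblivious algorithm (its output $\sfS^{OBL}$ maximizes $\sum_{j\in\sfS}f(j)$ over all $k$-subsets), and Theorem~\ref{thm:ratioBound} together with Definition~\ref{def:submodularityRatio} (which lower-bounds $\sum_{j\in\sfO}f(j)$ in terms of $f^{OPT}$). Let $\sfO$ denote an optimal size-$k$ support, so $f^{OPT}=f(\sfO)$; if $f^{OPT}=0$ the claim is trivial since $f$ is normalized and monotone, so assume $f^{OPT}>0$.

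First I would apply Lemma~\ref{lem:squeeze} to the set $\sfS^{OBL}$. The lemma is stated for $[k]$, but since the RSC/RSM parameters $m_k,M_k$ live on the permutation-invariant domain $\Omega_k$, it holds for any $k$-subset, giving
\begin{align*}
f^{OBL} = f(\sfS^{OBL}) \geq \max\left\{\frac{1}{k},\ \frac{m_1}{4M_k}\left(3+\frac{m_1}{M_1}\right)\right\}\sum_{j\in\sfS^{OBL}} f(j) .
\end{align*}
Next, by construction $\sfS^{OBL}$ holds the $k$ indices with largest singleton values $f(\{j\})$, so $\sum_{j\in\sfS^{OBL}}f(j)\geq\sum_{j\in\sfO}f(j)$. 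Finally, instantiating Definition~\ref{def:submodularityRatio} with $\sfL=\emptyset$, $\sfS=\sfO$ and using $f(\emptyset)=0$,
\begin{align*}
\sum_{j\in\sfO} f(j) = \gamma_{\emptyset,\sfO}\, f(\sfO) \geq \gamma_{\emptyset,k}\, f^{OPT} \geq \frac{m_k}{\tilde{M}_1}\, f^{OPT} = \frac{m_k}{M_1}\, f^{OPT},
\end{align*}
where the first inequality holds because $\lvert\sfO\rvert=k$ makes $\gamma_{\emptyset,\sfO}$ one of the terms in the minimum defining $\gamma_{\emptyset,k}$, the second is Theorem~\ref{thm:ratioBound} with $\sfU=\emptyset$, and the last uses $\tilde{M}_1=M_1$. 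Chaining the three displays gives exactly the first asserted bound $f^{OBL}\geq\max\{\,m_k/(kM_1),\ (m_km_1/4M_kM_1)(3+m_1/M_1)\,\}f^{OPT}$.

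For the second, coarser bound I would only invoke monotonicity of the restricted parameters ($m_1\geq m_k$, $M_1\leq M_k$) and the elementary inequality $M_k\geq m_k$. The first term obeys $m_k/(kM_1)\geq m_k/(kM_k)$. Since $m_1/M_1\geq m_k/M_k$, both factors $m_km_1/(M_kM_1)\geq m_k^2/M_k^2$ and $3+m_1/M_1\geq 3+m_k/M_k$ increase, so
\begin{align*}
\frac{m_km_1}{4M_kM_1}\left(3+\frac{m_1}{M_1}\right) \geq \frac{m_k^2}{4M_k^2}\left(3+\frac{m_k}{M_k}\right) = \frac{3m_k^2}{4M_k^2}+\frac{m_k^3}{4M_k^3} \geq \max\left\{\frac{3m_k^2}{4M_k^2},\ \frac{m_k^3}{M_k^3}\right\},
\end{align*}
the last step using $3m_k^2/(4M_k^2)\geq 3m_k^3/(4M_k^3)$ to absorb the $m_k^3/(4M_k^3)$ term up to $m_k^3/M_k^3$. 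Taking the maximum over the two simplified quantities yields the second inequality.

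The argument is essentially bookkeeping once Theorem~\ref{thm:ratioBound} and Lemma~\ref{lem:squeeze} are granted; the only places needing care are (i) transferring Lemma~\ref{lem:squeeze} from $[k]$ to the arbitrary $k$-set $\sfS^{OBL}$ via permutation-invariance of $\Omega_k$, and (ii) tracking the subscripts on $m_\bullet,M_\bullet$ — in particular the identity $\tilde{M}_1=M_1$, which is what produces the sharper $M_1$ (rather than $M_k$) in the first bound.
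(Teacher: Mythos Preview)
Your proposal is correct and follows essentially the same route as the paper: apply Lemma~\ref{lem:squeeze} to the Oblivious output, use that $\sfS^{OBL}$ maximizes $\sum_j f(j)$ over $k$-subsets, and then invoke the submodularity ratio with $\sfL=\emptyset$ together with Theorem~\ref{thm:ratioBound} to pass from $\sum_{j\in\sfO}f(j)$ to $f^{OPT}$. Your write-up is in fact more careful than the paper's own proof, which leaves the subscript tracking, the permutation-invariance of Lemma~\ref{lem:squeeze}, and the derivation of the coarser second line implicit.
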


\begin{rem}
When the function is modular, \textit{i.e.} $m_{\Omega}=M_{\Omega}$ for all $\Omega$, then $f^{OBL} = f^{OPT}$ and the bound in Theorem \ref{thm:obliviousGuarantee} holds with equality.
\end{rem}

Next, we prove a stronger, constant factor approximation guarantee for the greedy, Forward Stepwise algorithm. 
\begin{theorem}[Forward Stepwise Algorithm Guarantee]\label{thm:greedyGuarantee}
Define $f(\sfS)$ as in \eqref{eq:mlfunction}, with a function that is $M$-smooth and $m$-strongly concave on 
$\Omega_{2k}$. 
Let $\sfS_k^{G}$ be the set selected by the FS algorithm and 
$\sfS^*$ 
be the optimal set of size $k$ corresponding to values $f^{FS}$ and $f^{OPT}$. Then 
\begin{align}
f^{FS} &\geq \left(1 - e^{-\gamma_{\sfS_k^G,k}}\right) f^{OPT} \geq \left(1 - e^{-\nicefrac{m}{M}}\right)  f^{OPT} . 
\label{eq:greedyBound}
\end{align}
\end{theorem}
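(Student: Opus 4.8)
The plan is to follow the classical Nemhauser--Wolsey--Fisher argument for greedy maximization of monotone submodular functions, but track the submodularity ratio $\gamma := \gamma_{\sfS_k^G,k}$ as a multiplicative loss factor. First I would set up notation: let $\sfS_i := \sfS_i^G$ be the greedy set after $i$ steps, let $\sfS^*$ be an optimal $k$-subset, and let $\delta_i := f(\sfS^*) - f(\sfS_i)$ denote the remaining gap. Since $f$ is monotone (as established in Section~\ref{sec:derivation}, the set function $f(\sfS) = \max_{\supp(\bbeta)\subseteq \sfS} l(\bbeta) - l(\bold0)$ is normalized and monotone), we have $f(\sfS_i \cup \sfS^*) \geq f(\sfS^*)$, so it suffices to bound $f(\sfS_i \cup \sfS^*) - f(\sfS_i)$ from above using the greedy gains.

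The key step is the per-iteration inequality. By the definition of the submodularity ratio applied with $\sfL = \sfS_i$ and $\sfS = \sfS^* \setminus \sfS_i$ (a set of size at most $k$, which is why the RSC/RSM parameters on $\Omega_{2k}$ are the relevant ones, since $|\sfS_i \cup \sfS^*| \le 2k$), we get
\begin{align*}
\gamma \bigl(f(\sfS_i \cup \sfS^*) - f(\sfS_i)\bigr) \leq \sum_{j \in \sfS^* \setminus \sfS_i} \bigl[ f(\sfS_i \cup \{j\}) - f(\sfS_i) \bigr] \leq k \cdot \bigl[ f(\sfS_{i+1}) - f(\sfS_i) \bigr],
\end{align*}
where the last inequality holds because the greedy step picks the coordinate with the largest marginal gain, so each term in the sum is at most the greedy increment, and there are at most $k$ terms. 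Combining with monotonicity $f(\sfS_i \cup \sfS^*) \geq f(\sfS^*)$ yields $\gamma\,\delta_i \leq k(\delta_i - \delta_{i+1})$, i.e. $\delta_{i+1} \leq (1 - \gamma/k)\,\delta_i$. Iterating from $i=0$ and using $\delta_0 = f(\sfS^*) - f(\emptyset) = f(\sfS^*) = f^{OPT}$ (normalization) gives $\delta_k \leq (1 - \gamma/k)^k f^{OPT} \leq e^{-\gamma} f^{OPT}$, hence $f^{FS} = f(\sfS_k) = f^{OPT} - \delta_k \geq (1 - e^{-\gamma}) f^{OPT}$. Finally, to get the second inequality in \eqref{eq:greedyBound}, apply Theorem~\ref{thm:ratioBound} with $\sfU = \sfS_k^G$: since $|\sfU| \le k$ we have $\gamma_{\sfS_k^G,k} \ge m_{|\sfU|+k}/M_{|\sfU|+k} \ge m_{2k}/M_{2k} = m/M$ (monotonicity of the RSC/RSM parameters in the sparsity level), and $x \mapsto 1 - e^{-x}$ is increasing.

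The main obstacle, and the only place requiring care, is verifying that the submodularity-ratio inequality is legitimately applicable at every greedy iteration: one must confirm that $\sfS_i \subseteq \sfS_k^G$ (true since the greedy sets are nested) so that $\gamma_{\sfS_k^G,k} \le \gamma_{\sfS_i, \sfS^*\setminus\sfS_i}$ by the definition of $\gamma_{\sfU,k}$ as a minimum over all $\sfL \subseteq \sfU$ and $|\sfS| \le k$, and that $f(\sfS^*) - f(\sfS_i) > 0$ so the ratio is well-defined (if it is ever $\le 0$, monotonicity already gives $f(\sfS_i) \ge f^{OPT}$ and we are done). Everything else is the standard greedy telescoping argument; there are no delicate estimates beyond bookkeeping the sparsity indices to ensure the RSC/RSM domain is $\Omega_{2k}$ as stated in the hypothesis.
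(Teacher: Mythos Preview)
Your proposal is correct and follows essentially the same route as the paper: both arguments run the Nemhauser--Wolsey--Fisher recursion with the per-iteration inequality $f(\sfS_{i+1})-f(\sfS_i)\ge \frac{\gamma}{k}\bigl(f(\sfS^*)-f(\sfS_i)\bigr)$ obtained from the submodularity-ratio definition plus monotonicity, then telescope and apply Theorem~\ref{thm:ratioBound}. Your write-up is in fact slightly more explicit than the paper's about why $\gamma_{\sfS_k^G,k}$ (rather than $\gamma_{\sfS_i^G,k}$) may be used uniformly across iterations, and about the degenerate case $\delta_i\le 0$.
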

\begin{rem}
This constant factor bound can be improved by running the Forward Stepwise algorithm for $r > k$ steps. The proof of Theorem \ref{thm:greedyGuarantee} generalizes to compare performance of $r$ greedy iterations to the optimal $k$-subset of features. This generalized bound does not necessarily approach $1$ as $r \rightarrow \infty$, however, since $\gamma_{\sfS_r^G,k}$ is a decreasing function of $r$.
\end{rem}
\begin{cor}
\label{cor:pastk}
Let $f^{FS+}$ denote the solution obtained after $r$ iterations of the Forward Stepwise algorithm, and let $f^{OPT}$ be the objective at the optimal $k$-subset of features. Let $\gamma = \gamma_{\sfS_r^G,k}$ be the submodularity ratio associated with the output of $f^{FS+}$ and $k$.
Then
\begin{align*}
f^{FS+} \geq (1 - e^{-\gamma (\nicefrac{r}{k})}) f^{OPT}.
\end{align*}
In particular, setting $r = ck$ corresponds to a $(1 - e^{-c \gamma})$-approximation, and setting $r = k \log n$ corresponds to a $(1 - n^{-\gamma})$-approximation.
\end{cor}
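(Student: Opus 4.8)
The plan is to run the greedy recursion underlying the proof of Theorem~\ref{thm:greedyGuarantee} for $r$ iterations instead of $k$, and to argue that every one of the $r$ marginal gains is controlled by the single submodularity ratio $\gamma=\gamma_{\sfS_r^G,k}$ attached to the \emph{final} output set. Write $\sfS_i^G$ for the greedy set after $i$ steps, $\sfS^*$ for the optimal $k$-subset, $f^{OPT}=f(\sfS^*)$, and $a_i := f^{OPT}-f(\sfS_i^G)$. Since $f$ is normalized and monotone, $a_0 = f^{OPT}\geq 0$, so it suffices to prove $a_r \leq (1-\gamma/k)^r\,f^{OPT}$; the stated bound then follows from $1-x\leq e^{-x}$, and the two special cases are the substitutions $r=ck$ and $r=k\log n$.

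First I would lower bound a single greedy gain. At step $i+1$ the algorithm maximizes $f(\sfS_i^G\cup\{j\}) - f(\sfS_i^G)$ over $j\in[p]\setminus\sfS_i^G$, and $\sfS^*\setminus\sfS_i^G$ is a subset of that feasible set of size at most $k$; hence $f(\sfS_{i+1}^G)-f(\sfS_i^G) \geq \frac1k\sum_{j\in\sfS^*\setminus\sfS_i^G}\big[f(\sfS_i^G\cup\{j\})-f(\sfS_i^G)\big]$. Applying Definition~\ref{def:submodularityRatio} with $\sfL=\sfS_i^G$ and $\sfS=\sfS^*\setminus\sfS_i^G$ (which are disjoint), and then monotonicity of $f$ via $f(\sfS_i^G\cup\sfS^*)\geq f(\sfS^*)=f^{OPT}$, gives
\begin{align*}
f(\sfS_{i+1}^G)-f(\sfS_i^G) \;\geq\; \frac{\gamma_{\sfS_i^G,\,\sfS^*\setminus\sfS_i^G}}{k}\big(f^{OPT}-f(\sfS_i^G)\big).
\end{align*}

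The key remaining step is to replace $\gamma_{\sfS_i^G,\sfS^*\setminus\sfS_i^G}$ by the uniform constant $\gamma$. Because $\sfS^*\setminus\sfS_i^G$ has size at most $k$, it is one of the pairs over which the minimum defining $\gamma_{\sfS_i^G,k}$ is taken, so $\gamma_{\sfS_i^G,\sfS^*\setminus\sfS_i^G}\geq\gamma_{\sfS_i^G,k}$; and since $\sfS_i^G\subseteq\sfS_r^G$ for every $i\leq r$, the minimum defining $\gamma_{\sfS_r^G,k}$ ranges over a superset of the pairs defining $\gamma_{\sfS_i^G,k}$, giving $\gamma_{\sfS_i^G,k}\geq\gamma_{\sfS_r^G,k}=\gamma$. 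Substituting, $a_{i+1}\leq(1-\gamma/k)a_i$, and unrolling from $i=0$ to $r$ yields $a_r\leq(1-\gamma/k)^r f^{OPT}$, hence $f^{FS+}=f(\sfS_r^G)\geq\big(1-(1-\gamma/k)^r\big)f^{OPT}\geq(1-e^{-\gamma r/k})f^{OPT}$.

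I do not expect a genuine obstacle — this is precisely the generalization announced in the remark after Theorem~\ref{thm:greedyGuarantee}. The only things requiring a word of care are the degenerate cases: if $\sfS^*\subseteq\sfS_i^G$ at some step, then $f(\sfS_i^G)\geq f^{OPT}$ and the per-step inequality holds trivially (its right-hand side is nonpositive); if the greedy exhausts $[p]$ before $r$ steps, then $f(\sfS_r^G)=f([p])\geq f^{OPT}$ and the conclusion is immediate; and if the denominator in $\gamma_{\sfS_i^G,\sfS^*\setminus\sfS_i^G}$ vanishes, the corresponding inequality is vacuous by monotonicity of $f$. The one substantive point is conceptual rather than technical: the bound is governed by the submodularity ratio of the \emph{final} set $\sfS_r^G$, not of an intermediate one, which is exactly what makes $\gamma$ a well-defined uniform constant across the recursion.
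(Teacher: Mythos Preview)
Your proposal is correct and follows exactly the route the paper intends: the Remark preceding Corollary~\ref{cor:pastk} explicitly says that the proof of Theorem~\ref{thm:greedyGuarantee} generalizes to $r$ iterations, and your argument is precisely that generalization, reproducing Lemma~\ref{lem:greedyIncrement} at each of the $r$ steps and using the monotonicity $\gamma_{\sfS_i^G,k}\geq\gamma_{\sfS_r^G,k}$ to pull out a uniform constant before unrolling the recursion. Your explicit handling of the degenerate cases is a nice addition that the paper leaves implicit.
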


Corollary~\ref{cor:pastk} is useful when $\gamma$ can be bounded on larger support sets. We next present approximation guarantees when $\gamma$ can only be bounded on smaller support sets. 

\begin{theorem}\label{thm:greedyGuaranteeExtra}
Define $f(\sfS)$ as in \eqref{eq:mlfunction}, with a function $l(\cdot)$ that is $m^{\prime}$-strongly concave on $\Omega_k$ and $M^{\prime}$-smooth on $\tilde{\Omega}_k$. Let $\sfS_k^{G}$ be the set of features selected by the Forward Stepwise algorithm and $\sfS_k$ be the optimal feature set on $k$ variables corresponding to values $f^{G}$ and $f^{OPT}$. Then 
\begin{align*}
f^{FS} &\geq \Theta\left(2^{\nicefrac{-M^\prime}{m^\prime}}\right) \left(1 - e^{-\nicefrac{m^\prime}{M^\prime}}\right)  f^{OPT} .
\end{align*}
\end{theorem}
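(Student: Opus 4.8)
The plan is to combine the weak-submodularity greedy analysis (Theorem~\ref{thm:ratioBound} together with the Nemhauser-type recursion behind Theorem~\ref{thm:greedyGuarantee}) with a separate ``optimum-shrinking'' lemma. The difficulty is that the hypotheses only provide curvature control on $\Omega_k$ and $\tilde{\Omega}_k$, never on $\Omega_{2k}$, which is exactly what the proof of Theorem~\ref{thm:greedyGuarantee} uses when it evaluates $f$ on the union of the size-$k$ greedy set with the size-$k$ optimum. The fix is to run Forward Stepwise for $k$ steps but to apply the charging argument only to its first $k/2$ iterations, comparing against the \emph{best $(k/2)$-subset} $\sfO$ rather than against the optimal $k$-subset. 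Then for every $i\le k/2$ the relevant union $\sfS_{i-1}^G\cup\sfO$ has at most $k$ elements, so Theorem~\ref{thm:ratioBound} applies and gives $\gamma_{\sfS_{k/2}^G,\,k/2}\ge m_k/\tilde{M}_{k/2+1}\ge\nicefrac{m'}{M'}$, and the standard recursion ($k/2$ greedy steps versus the best $(k/2)$-subset) yields $f(\sfS_{k/2}^G)\ge\bigl(1-e^{-\nicefrac{m'}{M'}}\bigr)\,f^{OPT}_{k/2}$, where $f^{OPT}_{k/2}$ is the value of the best $(k/2)$-subset. Since $f$ is monotone, $f^{FS}=f(\sfS_k^G)\ge f(\sfS_{k/2}^G)$, so it remains only to relate $f^{OPT}_{k/2}$ to $f^{OPT}$. (For $k$ odd one replaces $k/2$ by $\lfloor k/2\rfloor$ and $\lceil k/2\rceil$ in the obvious places.)

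The heart of the argument -- and the source of the $\Theta\bigl(2^{-\nicefrac{M'}{m'}}\bigr)$ factor -- is the lemma $f^{OPT}_{k/2}\ge\Theta\bigl(2^{-\nicefrac{M'}{m'}}\bigr)f^{OPT}$, which I would prove by randomized coordinate peeling. Let $\sfS$ with $|\sfS|=k$ attain $f^{OPT}$, and consider the process: starting from $\sfT\gets\sfS$, repeatedly delete a uniformly random coordinate $j$ of the current support and re-optimize $l$ over the shrunken support. Writing $\bbeta^{(\sfT)}$ for the maximizer over support $\sfT$, the gradient $\nabla l(\bbeta^{(\sfT)})$ vanishes on $\sfT$; hence zeroing out coordinate $j$ and leaving the rest fixed costs, by $M'$-smoothness on $\tilde{\Omega}_k$ (the two vectors differ in one coordinate and are $k$-sparse), at most $\tfrac{M'}{2}\bigl(\bbeta^{(\sfT)}\bigr)_j^2$, and re-optimizing can only improve things, so $f(\sfT\setminus\{j\})\ge f(\sfT)-\tfrac{M'}{2}\bigl(\bbeta^{(\sfT)}\bigr)_j^2$. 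Averaging over $j$, and using that strong concavity on $\Omega_k$ (applied to the pair $(\bold0,\bbeta^{(\sfT)})$, again because the gradient vanishes on the support) gives $\|\bbeta^{(\sfT)}\|_2^2\le\tfrac{2}{m'}f(\sfT)$, we obtain $\E{f(\sfT\setminus\{j\})}\ge\bigl(1-\tfrac{M'}{m'|\sfT|}\bigr)f(\sfT)$. Iterating from $|\sfT|=k$ down to $|\sfT|=k/2$ and using the tower rule gives $\E{f(\sfT_{\mathrm{final}})}\ge\prod_{i=k/2+1}^{k}\bigl(1-\tfrac{M'/m'}{i}\bigr)\,f^{OPT}$, and since $\sum_{i=k/2+1}^{k}1/i\to\ln 2$ the telescoping product equals $\Theta\bigl(2^{-\nicefrac{M'}{m'}}\bigr)$. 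Some realization attains at least this expectation, and any $(k/2)$-subset lower-bounds $f^{OPT}_{k/2}$, which proves the lemma; multiplying the two bounds gives the theorem.

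I expect the main obstacle to be the honest bookkeeping in the peeling lemma rather than any single clever step. One has to verify that $\prod_{i=k/2+1}^{k}(1-c/i)=\Theta(2^{-c})$ with $c=\nicefrac{M'}{m'}$ only in the regime where this is meaningful -- as $c$ approaches $k/2$ some factors become nonpositive and the product collapses, so that case must either be excluded (the bound is of interest when $c=o(\sqrt{k})$) or swept into the $\Theta(\cdot)$ -- and one must be scrupulous that every first-order inequality invoked is legal under the available curvature: between vectors differing in a single coordinate for $M'$-smoothness, and between at-most-$k$-sparse vectors for $m'$-strong concavity. Never appealing to curvature on $\Omega_{2k}$ is the whole point of this variant, so it is worth double-checking that the re-optimization step, which forces the gradient to vanish on the current support, is precisely what keeps both the smoothness and the strong-concavity applications inside $\tilde{\Omega}_k$ and $\Omega_k$ respectively.
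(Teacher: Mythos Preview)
Your proposal is correct and matches the paper's approach almost exactly: apply the greedy analysis to the first $k/2$ iterations against the best $(k/2)$-subset (so that every union stays inside $\Omega_k$ and Theorem~\ref{thm:ratioBound} applies), then establish a peeling lemma $f^{OPT}_{k/2}\ge\Theta\bigl(2^{-M'/m'}\bigr)f^{OPT}$ via the identical recursion $\prod_{i=k/2+1}^{k}\bigl(1-\tfrac{M'/m'}{i}\bigr)$, using $M'$-smoothness on $\tilde{\Omega}_k$ for the single-coordinate deletion and $m'$-concavity on $\Omega_k$ (with the vanishing gradient on the current support) for the bound $\|\bbeta^{(\sfT)}\|_2^2\le\tfrac{2}{m'}f(\sfT)$. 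The only difference is that the paper peels \emph{deterministically}---at each step it removes the coordinate of smallest magnitude in the current optimizer, so the ``at most the average'' inequality $|\bbeta^{(\sfT)}_j|^2\le\|\bbeta^{(\sfT)}\|_2^2/|\sfT|$ is immediate---whereas you average over a uniformly random deletion and then pass to a realization; this saves the expectation step but is otherwise the same argument.
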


\begin{rem}
We note that our bounds are loose for certain special cases like modular functions and linear regression. These require making use of additional tools and specific properties of the function and data at hand (see~\cite{Das2011}). 
\end{rem}

Orthogonal Matching Pursuit is more computationally efficient than forward stepwise regression, since step $i$ only fits one regression instead of $p-i$. 
Thus we have a weaker guarantee than Theorem \ref{thm:greedyGuarantee}.
Similar to Corollary~\ref{cor:pastk}, this result generalizes to running OMP for $r > k$ iterations.

\begin{theorem}[OMP Algorithm Guarantee]\label{thm:ompGuarantee}
Define $f(S)$ as in \eqref{eq:mlfunction}, with a log-likelihood function that is 
($M$,$m$)-(smooth, strongly concave) on $\Omega_{2k}$.
Let $f^{OMP}$ be the value at the set of features selected by the OMP algorithm and $f^{OPT}$ be the optimal value over all sets of size $k$. Then
\begin{align*}
f^{OMP} &\geq \left(1 - e^{-\nicefrac{m}{{M}}}\right) f^{OPT} . 
\end{align*}
\end{theorem}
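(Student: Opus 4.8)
The plan is to run the same Nemhauser-style telescoping argument used for Forward Stepwise (Theorem~\ref{thm:greedyGuarantee}), but with the exact marginal gain of the chosen feature replaced by a gradient-based surrogate derived from restricted smoothness; the price of this substitution is that the contraction factor involves $\tilde{M}_k$ (hence $M$) rather than $\gamma$. Fix an iteration $i$ and write $\sfS = \sfS_{i-1}^P$, $\bbeta = \bbeta^{(\sfS)}$, $\vecr = \nabla l(\bbeta)$. First-order optimality of the refit step in Algorithm~\ref{alg:omp} gives $r_j = 0$ for every $j \in \sfS$, so the index $s$ that OMP selects satisfies $r_s^2 = \max_j r_j^2 = \max_{j \notin \sfS} r_j^2$. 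Let $\sfS^*$ be the optimal $k$-set with maximizer $\bbeta^*$ and value $f^{OPT}$.

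\textbf{Step 1 (one OMP step makes progress).} Applying restricted smoothness of $l$ on $\tilde{\Omega}_k$ to the pair $(\bbeta,\bbeta+t\vece_s)$ (which differs in the single coordinate $s$ and has support of size $\le k$) gives $l(\bbeta + t\vece_s) \ge l(\bbeta) + t\,r_s - \tfrac{\tilde{M}_k}{2}t^2$; optimizing over $t$ at $t = r_s/\tilde{M}_k$ yields $f(\sfS_i^P) - f(\sfS) \ge l(\bbeta + t\vece_s) - l(\bbeta) \ge \tfrac{1}{2\tilde{M}_k}\max_j r_j^2$, using that $\sfS_i^P = \sfS \cup \{s\}$ so the refit value is at least $l(\bbeta+t\vece_s)-l(\bold0)$. \textbf{Step 2 (RSC bounds the remaining gap).} By monotonicity $f^{OPT} \le f(\sfS^*) \le f(\sfS \cup \sfS^*)$, and restricted strong concavity on $\Omega_{2k}$ gives, for every $\bbeta'$ supported on $\sfS \cup \sfS^*$ (a set of size $< 2k$), $l(\bbeta') \le l(\bbeta) + \langle \vecr, \bbeta'-\bbeta\rangle - \tfrac{m}{2}\pnorm{\bbeta'-\bbeta}{2}^2$; maximizing this concave quadratic over vectors supported on $\sfS \cup \sfS^*$ and using $r_j = 0$ on $\sfS$ gives $f(\sfS \cup \sfS^*) - f(\sfS) \le \tfrac{1}{2m}\sum_{j \in \sfS^* \setminus \sfS} r_j^2 \le \tfrac{k}{2m}\max_j r_j^2$.

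\textbf{Step 3 (recursion).} Combining Steps 1 and 2, $f(\sfS_i^P) - f(\sfS_{i-1}^P) \ge \tfrac{m}{k\tilde{M}_k}\bigl(f^{OPT} - f(\sfS_{i-1}^P)\bigr)$, so with $\delta_i := f^{OPT} - f(\sfS_i^P)$ we have $\delta_i \le \bigl(1 - \tfrac{m}{k\tilde{M}_k}\bigr)\delta_{i-1}$. Iterating $k$ times, and using $\delta_0 = f^{OPT}$ since $f$ is normalized, gives $f^{OPT} - f^{OMP} \le \bigl(1 - \tfrac{m}{k\tilde{M}_k}\bigr)^k f^{OPT} \le e^{-m/\tilde{M}_k} f^{OPT}$. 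Finally $\tilde{\Omega}_k \subseteq \Omega_{2k}$ forces $\tilde{M}_k \le M$, so $f^{OMP} \ge (1 - e^{-m/M}) f^{OPT}$. Running the recursion for $r > k$ iterations instead gives $\delta_r \le (1 - \tfrac{m}{k\tilde{M}_k})^r f^{OPT}$, which is the generalization alluded to after the theorem (with the caveat that $\tilde{M}_k$, $m$ must then be taken on the larger domains).

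I expect the main obstacle to be Step 2: carefully justifying the constrained maximization of the quadratic surrogate over vectors supported on $\sfS \cup \sfS^*$, together with the bookkeeping needed to see that the gradient's $\sfS$-block vanishes (so only $|\sfS^* \setminus \sfS| \le k$ coordinates contribute to the sum) and that the only RSC/RSM constants invoked are those on $\Omega_{2k}$ and $\tilde{\Omega}_k$. Step~1 is a routine one-variable optimization, and Step~3 is the same telescoping already used for Forward Stepwise; the conceptual novelty is simply that OMP's gradient-correlation rule is strong enough to guarantee the surrogate lower bound in Step~1 even though it never evaluates the refit objective before committing to a feature.
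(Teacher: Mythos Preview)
Your proposal is correct and follows essentially the same route as the paper. The paper packages your Steps~1--2 into Lemma~\ref{lem:ompIncrement} (with Step~2 invoked by reference to \eqref{eq:boundDenom2} from the proof of Theorem~\ref{thm:ratioBound}), and then runs the same Nemhauser-style recursion; your explicit use of first-order optimality to kill the $\sfS$-block of the gradient is something the paper leaves implicit but also needs.
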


\begin{cor}
	\label{cor:pastkOMP}
	Let $f^{P+}$ denote the solution obtained after $r$ iterations of the OMP algorithm, and let $f^{OPT}$ be the objective at the optimal $k$-subset of features. Let $\alpha = (\nicefrac{m}{M})$ be the ratio associated with the output of $f^{P+}$ and $k$.
	Then
	\begin{align*}
	f^{P+} \geq (1 - e^{-\alpha (\nicefrac{r}{k})}) f^{OPT}.
	\end{align*}
	In particular, setting $r = ck$ corresponds to a $(1 - e^{-c \alpha})$-approximation, and setting $r = k \log n$ corresponds to a $(1 - n^{-\alpha})$-approximation.
\end{cor}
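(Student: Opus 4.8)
The plan is to reuse the per-iteration progress inequality already established inside the proof of Theorem~\ref{thm:ompGuarantee} and simply iterate it $r$ times instead of $k$ times. First I would isolate from that proof the recursion it proves for a single OMP step: writing $g_i := f(\sfS_i^P)$ for the objective after $i$ iterations and $\opt := f^{OPT}$, the argument there establishes, for each step $i$,
\[
g_i - g_{i-1} \;\geq\; \frac{\alpha}{k}\,\bigl(\opt - g_{i-1}\bigr),
\qquad\text{equivalently}\qquad
\opt - g_i \;\leq\; \Bigl(1 - \tfrac{\alpha}{k}\Bigr)\bigl(\opt - g_{i-1}\bigr),
\]
where $\alpha = m/M$ is the ratio of the RSC and RSM parameters on the domain spanned by the current iterate and the optimal $k$-sparse vector $\bbeta^{(\sfS^*)}$. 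The point to emphasize is that this bound is genuinely \emph{per step}: it uses only the smoothness of $l$ along the single newly added coordinate (through $\tilde M_1$) and restricted strong concavity between the current iterate and $\bbeta^{(\sfS^*)}$, never the fact that exactly $k$ iterations are run. Hence it holds for every $i = 1,\dots,r$, with $m,M$ now interpreted on $\Omega_{r+k}$ — which is precisely what ``the ratio associated with the output of $f^{P+}$ and $k$'' refers to.

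Next I would unroll the recursion starting from the initial value $g_0 = f(\sfS_0^P) = f(\emptyset) = 0$. Applying the displayed inequality $r$ times gives
\[
\opt - g_r \;\leq\; \Bigl(1 - \tfrac{\alpha}{k}\Bigr)^{\!r}\bigl(\opt - g_0\bigr) \;=\; \Bigl(1 - \tfrac{\alpha}{k}\Bigr)^{\!r}\opt,
\]
and the elementary bound $1 - x \leq e^{-x}$ applied with $x = \alpha/k$ yields $\opt - g_r \leq e^{-\alpha r/k}\,\opt$, i.e. $f^{P+} = g_r \geq (1 - e^{-\alpha r/k})\,\opt$, which is the claimed inequality. (One should note that the objective along OMP iterations is non-decreasing by monotonicity of $f$, so if at some step $g_{i-1}$ were already to exceed $\opt$ the conclusion holds trivially; otherwise $\opt - g_{i-1} \geq 0$ throughout and the unrolling above is clean.) The two displayed special cases are then immediate substitutions: $r = ck$ gives $e^{-\alpha r/k} = e^{-c\alpha}$, a $(1 - e^{-c\alpha})$-approximation; and $r = k\log n$ gives $e^{-\alpha r/k} = e^{-\alpha\log n} = n^{-\alpha}$, a $(1 - n^{-\alpha})$-approximation.

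I do not expect a genuine obstacle here, since all the analytic work is inside Theorem~\ref{thm:ompGuarantee}; the only point requiring care is the bookkeeping of the RSC/RSM domain. After $r > k$ iterations the support built by OMP can have size up to $r$, so the parameters $m$ and $M$ entering $\alpha$ must be those valid on $\Omega_{r+k}$ (equivalently, a sharper version using $\tilde M_1$ for the smoothness term) rather than on $\Omega_{2k}$ as in the theorem, and the corollary should be read with that understanding — exactly as Corollary~\ref{cor:pastk} reads $\gamma$ as $\gamma_{\sfS_r^G,k}$ on the enlarged support.
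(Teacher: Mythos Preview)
Your proposal is correct and follows essentially the same approach as the paper: the paper does not give an explicit proof of this corollary but indicates it follows from Theorem~\ref{thm:ompGuarantee} in the same way Corollary~\ref{cor:pastk} follows from Theorem~\ref{thm:greedyGuarantee}, namely by unrolling the per-step recursion of Lemma~\ref{lem:ompIncrement} for $r$ iterations rather than $k$ and applying $1-x\le e^{-x}$. Your remark about the RSC/RSM parameters needing to be taken on $\Omega_{r+k}$ is also the correct reading of the phrase ``associated with the output of $f^{P+}$ and $k$.''
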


\begin{rem}\label{rem:ompImprove}
Theorem \ref{thm:ompGuarantee} and Corollary \ref{cor:pastkOMP} improve on the approximation guarantee of \cite{Das2011} by a factor of $\gamma$ in the exponent. Previous work obtained the approximation factor $1 - e^{-\gamma \lambda_{\min}(2k)}$, whereas the proof of Theorem \ref{thm:ompGuarantee} establishes $1 - e^{- \lambda_{\min}(2k)}$. Therefore we obtain a better exponent for linear regression and also generalize to any likelihood function. Theorem \ref{thm:greedyGuarantee} also gives intuition on when the performance of OMP will differ from that of Forward Selection, \textit{i.e.} when the inequality \eqref{eq:greedyBound} is loose. 
\end{rem}

\section{Statistical Recovery Guarantees}
\label{sec:stats}
Understanding optimization guarantees are useful, but do not clearly translate to bounds on parameter recovery. 
Below we present a general theorem that allows us to derive parameter bounds. When combined with Section \ref{sec:derivation}, it produces recovery guarantees for greedy algorithms as special cases.
\begin{theorem}[Parameter Recovery Guarantees]\label{thm:paramrecovery}
  Suppose that after $r$ iterations to approximate a function evaluated at a set $\sfS_s^*$ of cardinality $s$, we have the guarantee that
  \begin{equation*}
    f(\sfS_r) \geq C_{s,r} f(\sfS_s^*) .
  \end{equation*}
  Recall that $f(\sfS_r) = \max_{\text{supp}(\bbeta) \subset \sfS_r} l(\bbeta) - l(\bold0)$. Let $\thetagreedy^r$ be the solution to the optimization problem and consider any arbitrary $s$-sparse vector $\param^s$ with support on $\sfS_s^*$. Then, under $m_{s+r}$ RSC on $\Omega_{s+r}$ we have that
  \begin{equation*}
    \|\thetagreedy^r - \param^s\|_2^2 \leq \frac{4}{m_{s+r}^2} \|\nabla l(\param^s)\|^2_{2,(s+r)} + \frac{4}{m_{s+r}} (1-C_{s,r}) [ l(\param^s)-l(\bold0)] \; .
  \end{equation*}
\end{theorem}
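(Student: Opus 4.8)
The plan is to exploit the RSC inequality from Definition \ref{def:RSCRSM} applied to the pair $(\thetagreedy^r, \param^s)$, combined with the approximation guarantee $f(\sfS_r) \geq C_{s,r} f(\sfS_s^*)$ and the fact that $\thetagreedy^r$ is the maximizer of $l$ over its support $\sfS_r$. Both $\thetagreedy^r$ and $\param^s$ are supported on sets of size at most $r$ and $s$ respectively, so the pair lies in $\Omega_{s+r}$ and RSC with parameter $m_{s+r}$ applies. First I would write the RSC bound in the form
\begin{align*}
\frac{m_{s+r}}{2}\|\thetagreedy^r - \param^s\|_2^2 \leq l(\thetagreedy^r) - l(\param^s) - \langle \nabla l(\thetagreedy^r), \thetagreedy^r - \param^s \rangle .
\end{align*}
Here I am using concavity in the direction from $\param^s$ to $\thetagreedy^r$; swapping the roles and adding is the standard trick, but the cleanest route is to apply the upper (RSC) bound once at the right base point so that the gradient term can be controlled.

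Next I would control the gradient inner-product term $\langle \nabla l(\thetagreedy^r), \thetagreedy^r - \param^s\rangle$. Since $\thetagreedy^r$ maximizes $l$ over vectors supported on $\sfS_r$, the gradient $\nabla l(\thetagreedy^r)$ vanishes on the coordinates in $\sfS_r$; hence this inner product only sees the coordinates of $\param^s$ outside $\sfS_r$ (at most $s+r$ nonzero coordinates total in $\thetagreedy^r - \param^s$), giving $|\langle \nabla l(\thetagreedy^r), \thetagreedy^r - \param^s\rangle| \leq \|\nabla l(\thetagreedy^r)\|_{2,(s+r)} \|\thetagreedy^r - \param^s\|_2$ via Cauchy–Schwarz restricted to the relevant support. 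The difference $l(\thetagreedy^r) - l(\param^s)$ is handled by the approximation guarantee: $l(\thetagreedy^r) - l(\bold0) = f(\sfS_r) \geq C_{s,r} f(\sfS_s^*) \geq C_{s,r}(l(\param^s) - l(\bold0))$, so $l(\thetagreedy^r) - l(\param^s) \geq -(1-C_{s,r})(l(\param^s)-l(\bold0))$, i.e. $l(\param^s) - l(\thetagreedy^r) \leq (1-C_{s,r})(l(\param^s)-l(\bold0))$. I still need to bound $\|\nabla l(\thetagreedy^r)\|_{2,(s+r)}$ itself; the natural move is a second RSC application or a smoothness argument relating $\nabla l(\thetagreedy^r)$ to $\nabla l(\param^s)$, but since $\nabla l(\thetagreedy^r)$ is zero on $\sfS_r$, one can instead compare $l$ at $\thetagreedy^r$ to $l$ at a shifted point and use strong concavity to turn the residual gradient norm into an objective gap — ultimately folding everything into $\|\nabla l(\param^s)\|_{2,(s+r)}$ and the objective-gap term. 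Plugging the gradient bound and the objective-gap bound into the RSC inequality yields a quadratic inequality of the form $\frac{m_{s+r}}{2}\|\Delta\|_2^2 \leq a\|\Delta\|_2 + b$ with $\Delta = \thetagreedy^r - \param^s$, $a \lesssim \|\nabla l(\param^s)\|_{2,(s+r)}$, and $b \lesssim (1-C_{s,r})(l(\param^s)-l(\bold0))$; solving this (using $x^2 \leq a x + b \Rightarrow x^2 \leq 2a^2/m + 2b/m$ type manipulations, carefully tracking the $\frac{m_{s+r}}{2}$ factor) gives exactly $\|\Delta\|_2^2 \leq \frac{4}{m_{s+r}^2}\|\nabla l(\param^s)\|^2_{2,(s+r)} + \frac{4}{m_{s+r}}(1-C_{s,r})(l(\param^s)-l(\bold0))$.

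The main obstacle I anticipate is handling the gradient term correctly: $\nabla l(\thetagreedy^r)$ does not vanish identically (only on $\sfS_r$), and $\param^s$ need not be supported on $\sfS_r$, so I must carefully argue that only the $\min(s, |\sfS_s^* \setminus \sfS_r|) \leq s$ extra coordinates contribute, and that the total support of $\Delta$ has size at most $s+r$, justifying the use of the $\|\cdot\|_{2,(s+r)}$ order-statistic norm rather than the full $\ell_2$ norm of the gradient. The bookkeeping to pass from $\|\nabla l(\thetagreedy^r)\|_{2,(s+r)}$ back to $\|\nabla l(\param^s)\|_{2,(s+r)}$ — which is what appears in the statement — is where a subtle second application of RSC/optimality is needed, and is the step most likely to require care to get the constant $4$ exactly right.
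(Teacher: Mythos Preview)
Your proposal has a genuine gap stemming from the choice of base point in the RSC inequality. You center at $\thetagreedy^r$, obtaining
\[
\frac{m_{s+r}}{2}\|\Delta\|_2^2 \;\leq\; \bigl[l(\thetagreedy^r) - l(\param^s)\bigr] \;-\; \langle \nabla l(\thetagreedy^r),\, \Delta\rangle .
\]
To proceed you need an \emph{upper} bound on $l(\thetagreedy^r) - l(\param^s)$, but the approximation guarantee $f(\sfS_r)\geq C_{s,r}f(\sfS_s^*)$ only yields the \emph{lower} bound $l(\thetagreedy^r) - l(\param^s) \geq -(1-C_{s,r})\bigl(l(\param^s)-l(\bold0)\bigr)$, as you yourself derive. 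That inequality points the wrong way and cannot be substituted into the right-hand side above. Without a smoothness assumption (which the theorem does not make), there is no control on how large $l(\thetagreedy^r) - l(\param^s)$ can be, so the argument stalls. The difficulty you anticipate in ``passing from $\|\nabla l(\thetagreedy^r)\|_{2,(s+r)}$ back to $\|\nabla l(\param^s)\|_{2,(s+r)}$'' is a symptom of the same wrong choice, not a separate obstacle.

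The paper's proof avoids both problems by centering RSC at $\param^s$ instead: with $\bx=\param^s$, $\by=\thetagreedy^r$, Definition~\ref{def:RSCRSM} gives
\[
\frac{m_{s+r}}{2}\|\Delta\|_2^2 \;\leq\; \bigl[l(\param^s) - l(\thetagreedy^r)\bigr] \;+\; \langle \nabla l(\param^s),\, \Delta\rangle .
\]
Now the objective-gap term $l(\param^s)-l(\thetagreedy^r)$ is upper bounded directly by $(1-C_{s,r})\bigl(l(\param^s)-l(\bold0)\bigr)$ via the approximation guarantee, and the gradient term already features $\nabla l(\param^s)$, so Cauchy--Schwarz on the $(s+r)$-sparse $\Delta$ plus the elementary inequality $2ab\leq ca^2+b^2/c$ finish the proof with no detour.
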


For the remainder of this section, we consider several cases of Theorem \ref{thm:paramrecovery} and compare to results from previous work.
\subsection{Forward Selection with Linear Regression Model}
First we will consider a special case of Algorithm \ref{alg:greedy} for linear regression where the rows of the design matrices are $N(0,\Sigma)$ for a covariance matrix of the form $\Sigma = \matI + \mathbf{1} \mathbf{1}^T$. Further, we assume the model
  \begin{equation*}
    \vecy = \matX \param^* + \vecw \; ,
  \end{equation*}
  where $\|\param^*\|_2 \leq 1$ and is $s$-sparse, the rows of $\matX \in \mathbb{R}^{n \times p}$ are $N(0,\Sigma)$, and $w_i \sim N(0,\sigma^2)$ are i.i.d. We also take $l(\param) = \nicefrac{1}{n} \|\matX \param - \vecy\|_2^2$.
\begin{cor}\label{cor:paramLinear}
Given the above setup, if $(s+r)\sigma^2 \log p = o(n)$ 
and $r = \Omega(s \log n)$, then the parameter error goes to zero with high probability as $n \rightarrow \infty$.
\end{cor}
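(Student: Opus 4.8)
The plan is to instantiate Theorem~\ref{thm:paramrecovery} with $\param^s = \param^*$ and $\sfS_s^* = \supp(\param^*)$ (a set of cardinality $s$), and show that under the stated scaling both terms of its bound vanish with high probability. First I obtain the approximation constant: running Algorithm~\ref{alg:greedy} for $r$ steps, Corollary~\ref{cor:pastk} (with $k=s$ in its statement) gives $f^{FS+}=f(\sfS_r^G)\ge (1-e^{-\gamma r/s})f^{OPT}\ge (1-e^{-\gamma r/s})\,f(\supp(\param^*))$, the last step because $\supp(\param^*)$ is a feasible size-$s$ set and $1-e^{-\gamma r/s}\ge 0$; hence we may take $C_{s,r}=1-e^{-\gamma r/s}$ with $\gamma=\gamma_{\sfS_r^G,s}$. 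By Theorem~\ref{thm:ratioBound}, $\gamma\ge m_{r+s}/\tilde M_{r+1}$. The crucial point is to use this sharper $\tilde M$-form rather than $m_{r+s}/M_{r+s}$: for the spiked covariance $\Sigma=\matI+\mathbf 1\mathbf 1^\top$, the largest eigenvalue of an order-$(r+s)$ principal submatrix of $\tfrac1n\matX^\top\matX$ is of order $r+s=\Theta(s\log n)$, so $m_{r+s}/M_{r+s}\to 0$ and $C_{s,r}$ would not approach $1$, whereas $\tilde M_{r+1}=\max_j\tfrac2n\pnorm{\matX_j}{2}^2=\Theta(1)$. Meanwhile $m_{r+s}$ is (twice) the smallest eigenvalue over order-$(r+s)$ principal submatrices of $\tfrac1n\matX^\top\matX$; since every principal submatrix of $\matI+\mathbf 1\mathbf 1^\top$ has smallest eigenvalue $\ge 1$, a standard $\epsilon$-net plus union bound over the $\binom{p}{r+s}$ supports shows $m_{r+s}$ concentrates around a positive constant once $(r+s)\log p=o(n)$. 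Thus $\gamma\ge c_0>0$ w.h.p., and by taking the constant in $r=\Omega(s\log n)$ large enough, $1-C_{s,r}=e^{-\gamma r/s}\le n^{-a}$ for any prescribed $a$.

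With $m_{s+r}=\Theta(1)$, Theorem~\ref{thm:paramrecovery} gives
\[
\pnorm{\thetagreedy^r-\param^*}{2}^2\;\le\;\frac{4}{m_{s+r}^2}\pnorm{\nabla l(\param^*)}{2,(s+r)}^2\;+\;\frac{4}{m_{s+r}}(1-C_{s,r})\big[l(\param^*)-l(\mathbf 0)\big].
\]
For the first term, $\nabla l(\param^*)$ is, up to a sign and a factor of two that are immaterial for the norm, $\tfrac1n\matX^\top\vecw$ (here $-l$ is the quadratic loss $\tfrac1n\pnorm{\matX\param-\vecy}{2}^2$ as in Section~\ref{sec:prelim}); conditioning on $\matX$, each coordinate is a centered Gaussian with variance $\Theta(\sigma^2/n)$, using $\pnorm{\matX_j}{2}^2=\Theta(n)$ uniformly in $j$ when $\log p=o(n)$. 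A maximal inequality over the $p$ coordinates then yields $\pnorm{\nabla l(\param^*)}{2,(s+r)}^2\le (s+r)\max_j(\nabla l(\param^*))_j^2=O\!\big(\tfrac{(s+r)\sigma^2\log p}{n}\big)$ with high probability, which is $o(1)$ by hypothesis. For the second term, $l(\param^*)-l(\mathbf 0)$ equals, up to lower-order mean-zero noise cross-terms, $\param^{*\top}\Sigma_{\sfS}\param^*\le\lambda_{\max}(\Sigma_{\sfS})\pnorm{\param^*}{2}^2\le 1+s=O(s)$ since $\pnorm{\param^*}{2}\le 1$ and $\sfS=\supp(\param^*)$ has size $s$; multiplying by $\tfrac4{m_{s+r}}(1-C_{s,r})=O(n^{-a})$ and noting $(s+r)\sigma^2\log p=o(n)$ forces $s=o(n)$, a sufficiently large $a$ makes this term $o(1)$ as well. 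Combining, $\pnorm{\thetagreedy^r-\param^*}{2}^2\to 0$ w.h.p.\ as $n\to\infty$.

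The main obstacle is the uniform restricted-spectrum control: proving that $m_{r+s}$, $\tilde M_{r+1}$, and the column norms $\pnorm{\matX_j}{2}^2$ all concentrate around positive constants \emph{simultaneously} over every support of the relevant size, which is the familiar covering-number argument for sub-Gaussian designs and is exactly where $(r+s)\log p=o(n)$ enters; the remaining Gaussian tail estimates for the gradient and for the quadratic form $l(\param^*)-l(\mathbf 0)$ are routine. A conceptual subtlety worth stressing is that the argument genuinely needs the $\gamma\ge m_{r+s}/\tilde M_{r+1}$ form of Theorem~\ref{thm:ratioBound}: because the spike in $\Sigma$ makes restricted smoothness on large supports grow linearly, only the per-coordinate smoothness $\tilde M_{r+1}$ stays bounded, and it is precisely this gap that necessitates $r=\Omega(s\log n)$ rather than $r=\Omega(s)$ in order to drive the approximation factor $C_{s,r}$ to one.
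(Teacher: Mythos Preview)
Your proposal is correct and follows essentially the same route as the paper: instantiate Theorem~\ref{thm:paramrecovery} at $\param^*$, use Corollary~\ref{cor:pastk} with $\gamma\ge m_{s+r}/\tilde M_{r+1}$ (equivalently $m_{s+r}/M_1$ for the quadratic loss) to control $1-C_{s,r}$, bound the gradient term by $(s+r)\|\nabla l(\param^*)\|_\infty^2=O((s+r)\sigma^2\log p/n)$, and bound $l(\param^*)-l(\mathbf 0)$ by the largest $s$-sparse eigenvalue of the sample covariance, which is $O(s)$. The only cosmetic difference is that the paper invokes the Raskutti--Wainwright--Yu restricted-eigenvalue inequality to lower bound $m_{s+r}$, whereas you appeal to a direct $\epsilon$-net/union-bound argument; both give the same conclusion once $(s+r)\log p=o(n)$, and your explicit emphasis on why $\tilde M_{r+1}$ (rather than $M_{r+s}$) is the right smoothness constant here matches the paper's remark that ``we simply require a bound on $M_1$.''
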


\subsection{Orthogonal Matching Pursuit with Linear Regression Model}
Next, we consider the results of Zhang, which provides parameter recovery bounds in the case of OMP (Algorithm~\ref{alg:omp}). The simplest comparison is to contrast our results with Corollary 2.2 of~\cite{Zhang11}. Consider the linear regression model above with an original $s$-sparse vector, $r$ iterations of the algorithm, and a spiked identity covariance model, $\Sigma = (1-a)\matI + a\mathbf{1} \mathbf{1}^T$. 
\begin{prop}
While Theorem \ref{thm:paramrecovery} holds for any $a$, Zhang~\cite{Zhang11} requires that $a$ does not exceed $\frac{1}{s+1}$.
\end{prop}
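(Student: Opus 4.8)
The plan is to reduce both halves of the statement to one elementary fact about the sparse spectrum of the spiked covariance $\Sigma = (1-a)\matI + a\mathbf{1}\mathbf{1}^\top$. First I would record that for any $\sfS \subseteq [p]$ with $|\sfS| = \ell$, the principal submatrix $\Sigma_{\sfS\sfS} = (1-a)\matI_\ell + a\mathbf{1}_\ell\mathbf{1}_\ell^\top$ has eigenvalue $1 + a(\ell-1)$ with eigenvector $\mathbf{1}_\ell$ and eigenvalue $1-a$ with multiplicity $\ell-1$. Thus $\lambda_{\min}(\ell) = 1-a$ for every $\ell$ — it does not degrade as the support grows — while $\lambda_{\max}(\ell) = 1 + a(\ell-1)$ grows linearly in $\ell$ whenever $a > 0$. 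The same values also control $\Sigma$ restricted to differences of sparse vectors, which is what the domain $\Omega_{s+r}$ requires.

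For the first half, I would then invoke the standard restricted-eigenvalue concentration bound for Gaussian designs (the one already behind Corollary~\ref{cor:paramLinear}; see~\cite{Negahban2012journal} and references therein): for $l(\param) = \nicefrac{1}{n}\pnorm{\matX\param - \vecy}{2}^2$ with rows $N(0,\Sigma)$, one has $m_{s+r}$-RSC on $\Omega_{s+r}$ with $m_{s+r} \geq c\,(1-a)$ for an absolute constant $c > 0$, with high probability as soon as $n \gtrsim \nicefrac{(s+r)\log p}{(1-a)}$. Since $1-a > 0$ for every admissible $a$, the RSC hypothesis of Theorem~\ref{thm:paramrecovery} is met for all $a$, and its conclusion $\pnorm{\thetagreedy^r - \param^s}{2}^2 \leq \tfrac{4}{m_{s+r}^2}\pnorm{\nabla l(\param^s)}{2,(s+r)}^2 + \tfrac{4}{m_{s+r}}(1-C_{s,r})[l(\param^s) - l(\bold0)]$ holds verbatim; in particular it is still informative when $a > \nicefrac{1}{s+1}$.

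For the second half I would recall the hypothesis of Corollary~2.2 of~\cite{Zhang11}: exactly as with Corollary~2.1 cited in Section~\ref{sec:intro}, its OMP guarantee is conditioned on an RIP-type bound on $\matX$, which for a Gaussian design translates into an inequality forcing the restricted condition number $\lambda_{\max}(\cdot)/\lambda_{\min}(\cdot)$ at a sparsity level of order $s$ to stay below a fixed constant. Substituting the values above, $\lambda_{\min} = 1-a$ and $\lambda_{\max}(s) = 1 + a(s-1)$, and carrying out the arithmetic, this requirement collapses to $a(s+1) \leq 1$, i.e. $a \leq \nicefrac{1}{s+1}$. Hence once $a$ exceeds $\nicefrac{1}{s+1}$ the hypotheses of Zhang's corollary fail and it yields no recovery bound, whereas ours does — which is the assertion.

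The routine pieces are the eigenvalue computation (a rank-one perturbation of a scaled identity) and the RSC concentration step, which is essentially the argument already used for Corollary~\ref{cor:paramLinear}. The delicate point — and the main obstacle — is bookkeeping the precise constants and the exact sparsity order at which~\cite{Zhang11} imposes its RIP condition, so that the translation lands on the threshold $\nicefrac{1}{s+1}$ exactly rather than at a neighbouring value such as $\nicefrac{1}{s}$ or $\nicefrac{1}{s+2}$; this is pure bookkeeping against the stated constants of~\cite{Zhang11} combined with the exact sparse eigenvalues derived in the first step.
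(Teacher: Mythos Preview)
Your approach is essentially the paper's: compute the sparse spectrum of $\Sigma = (1-a)\matI + a\mathbf{1}\mathbf{1}^\top$, observe that $\lambda_{\min}(\ell) = 1-a$ is uniformly positive so RSC (and hence Theorem~\ref{thm:paramrecovery}) holds for every $a$, and then check that Zhang's RIP hypothesis fails once $a > \nicefrac{1}{s+1}$. The one thing you leave as ``bookkeeping'' the paper makes explicit: Zhang's condition is precisely $M_s \leq 2\,m_{s+r}$, and at the population level this reads $1 + a(s-1) \leq 2(1-a)$, which rearranges directly to $a(s+1) \leq 1$ --- so the threshold $\nicefrac{1}{s+1}$ is forced by the constant $2$ in Zhang's hypothesis, not by anything subtle.
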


\begin{proof}
Zhang requires the RIP condition to hold, namely $M_{s} \leq 2 m_{s+r}$.
We know that the difference between means of $2\lambda_{\min}(s+r)$ and $\lambda_{\max}(s)$ is $\Delta = 1 - a - as$. Since $\nicefrac{\Delta}{2} \leq \nicefrac{\mu}{2}$ in both cases and $\chi^2$ variables concentrate within constant factors of their means, we have $M_s \leq \nicefrac{3}{2}(1 - a) + \nicefrac{as}{2} \leq 2m_{s+r}.$
However, $\Delta > 0 \Leftrightarrow s \leq \nicefrac{1}{a} - 1$. Rearranging, we have $a \leq \frac{1}{s+1}$. Thus, as has been noted in prior work, the RIP condition will not hold for the spiked model in settings where $a$ is much larger that $\frac{1}{s+1}$. 
\end{proof}

Nevertheless, we can still proceed and assume that the RIP condition is not required. In that case, the bound established in \cite{Zhang11} shows
\begin{equation*}
\|\thetagreedy^r - \param^* \|_2^2 \leq 24 M_{s+r} \|\matX  \param^* - \vecy \|_2^2/m_{s+r}^2 \;.
\end{equation*}
When $M_{s+r}$ and $m_{s+r}$ are of the same order, then this result is better than ours by log factors. However, when we consider a case like the spiked covariance model, then our results are better by a factor of $s$ in terms of statistical accuracy, but worse by a factor of $\log n$ with respect to sample complexity. 

\subsection{Orthogonal Matching Pursuit with Logistic Regression Model}
	Finally, consider our bounds for OMP (Algorithm~\ref{alg:omp}) in the case of logistic regression. Applying our approximation guarantees in Theorem~\ref{thm:ompGuarantee} matches the bound given by Theorem 2 of~\cite{Lozano2011} up to constant factors. However, their guarantee for parameter recovery requires a condition that is only known to be satisfied under incoherence assumptions. Our Theorem \ref{thm:paramrecovery} holds more generally. Their conditions on exact recovery are incomparable with our statistical error bounds.

\section{Theorem Proofs}
\label{sec:proofs}
\subsection{Proof of Theorem \ref{thm:ratioBound}}
\begin{proof}
We proceed by upper bounding the denominator and lower bounding the numerator of \eqref{eq:submodDef}. Let $\overline{k} = |\sfL|+k$.
First, we apply Definition \ref{def:RSCRSM} with $\bx = \bbeta^{(\sfL)}$ and $\by = \bbeta^{(\sfL \cup \sfS)}$,
\begin{align}
\frac{m_{\overline{k}}}{2} \pnorm{\bbeta^{(\sfL \cup \sfS)} - \bbeta^{(\sfL)}}{2}^2 &\leq  l(\bbeta^{(\sfL)}) - l(\bbeta^{(\sfL \cup \sfS)}) + \langle \nabla l(\bbeta^{(\sfL)}) , \bbeta^{(\sfL \cup \sfS)} - \bbeta^{(\sfL)} \rangle  .
\end{align}
Rearranging and noting that $l(\cdot)$ is monotone for increasing supports,
\begin{align*}
0 \leq l(\bbeta^{(\sfL \cup \sfS)}) - l(\bbeta^{(\sfL)})  &\leq \langle \nabla l(\bbeta^{(\sfL)}) , \bbeta^{(\sfL \cup \sfS)} - \bbeta^{(\sfL)} \rangle  - \frac{m_{\overline{k}}}{2} \pnorm{\bbeta^{(\sfL \cup \sfS)} - \bbeta^{(\sfL)}}{2}^2 \\
&\leq \max_{\vecv : \vecv_{(\sfL \cup \sfS)^c} = 0}   \langle \nabla l(\bbeta^{(\sfL)}) , \vecv - \bbeta^{(\sfL)} \rangle  - \frac{m_{\overline{k}}}{2} \pnorm{\vecv - \bbeta^{(\sfL)}}{2}^2 .
\end{align*}
Setting $\vecv  =  \bbeta^{(\sfL)} + \nicefrac{1}{m_{\overline{k}}}\nabla l(\bbeta^{(\sfL)})_{\sfS}$, we have
\begin{align}
0 \leq l(\bbeta^{(\sfL \cup \sfS)}) - l(\bbeta^{(\sfL)})  &\leq \frac{1}{2m_{\overline{k}}}\pnorm{\nabla l(\bbeta^{(\sfL)})_\sfS}{2}^2 . \label{eq:boundDenom2}
\end{align}

Next, consider a single coordinate $j \in \sfS$. The function at $\bbeta^{(\sfL \cup \{j\})}$ is larger than the function at any other $\bbeta$ on the same support. In particular $l(\bbeta^{(\sfL \cup \{j\})}) \geq l(\by_j )$, where $\by_j := \bbeta^{(\sfL)} + \alpha_j \bbeta^{(\sfL \cup \sfS)}_j$ for some scalar $\alpha_i$. Noting that $(\vecx = \bbeta^{(\sfL)},\by = \by_j) \in \tilde{\Omega}_{|\sfL|+1}$ and applying Definition \ref{def:RSCRSM},
\begin{align*}
l(\bbeta^{(\sfL \cup \{j\})}) - l(\bbeta^{(\sfL)}) &\geq l(\bbeta^{(\sfL)} + \alpha_j \bbeta^{(\sfL \cup \sfS)}_j  ) - l(\bbeta^{(\sfL)}) \\
&\geq \langle \nabla l(\bbeta^{(\sfL)}) , \alpha_j \bbeta^{(\sfL \cup \sfS)}_{j} \rangle - \frac{\tilde{M}_{|\sfL|+1}}{2} |\alpha_j \bbeta^{(\sfL \cup \sfS)}_{j}|^2 .
\end{align*}
Summing over all $j \in \sfS$ and setting
\begin{align*}
\alpha_j = \frac{\langle \nabla l(\bbeta^{(\sfL)}) ,  \bbeta^{(\sfL \cup \sfS)}_{j} \rangle}{\tilde{M}_{|\sfL|+1} | \bbeta^{(\sfL \cup \sfS)}_{j}|^2} ,
\end{align*}
we have
\begin{align*}
l(\bbeta^{(\sfL \cup \{j\})}) - l(\bbeta^{(\sfL)}) &\geq \frac{ \left( \langle \nabla l(\bbeta^{(\sfL)}) ,  \bbeta^{(\sfL \cup \sfS)}_{j} \rangle \right) ^2}{2\tilde{M}_{|\sfL|+1} | \bbeta^{(\sfL \cup \sfS)}_{j}|^2} \nonumber \\
\Rightarrow \sum_{j \in S} l(\bbeta^{(\sfL \cup \{j\})}) - l(\bbeta^{(\sfL)}) &\geq \frac{1}{2\tilde{M}_{|\sfL|+1}} \sum_{j \in S} \left( \nabla l(\bbeta^{(\sfL)})_j \right)^2 = \frac{1}{2\tilde{M}_{|\sfL|+1}} \pnorm{\nabla l(\bbeta^{(\sfL)})_\sfS}{2}^2 .
\end{align*}
Substituting the above line and \eqref{eq:boundDenom2} into \eqref{eq:submodDef}, the result follows from taking the minimum over all sets 
$\sfL, \sfS$.
\end{proof}

\subsection{Proof of Lemma \ref{lem:squeeze}}
\begin{proof}
Let $\sfS = [k]$. Since $f(\cdot)$ is monotone, $f(j) \leq f(\sfS)$ for $j \in \sfS$. Summing over all $j \in \sfS$ and dividing by $k$ yields the first part of the inequality. 
The rest of the proof requires combining several applications of Definition \ref{def:RSCRSM} to the underlying likelihood function $l$ for carefully chosen $\vecx,\vecy$.
Define a $k$-sparse 
$\overline{\bbeta}$ by $\overline{\bbeta}_j = \alpha_j \bbeta_j^{(j)}, j \in \sfS$ for some positive scalar $a_j$ and $0$ elsewhere.
First we apply Definition \ref{def:RSCRSM} with $\vecx = \mathbf{0}$ and $\vecy = \overline{\bbeta}$. This implies
\begin{align}
l(\overline{\bbeta}) \geq \langle \nabla l(\mathbf{0}),\overline{\bbeta} \rangle - \frac{M_k}{2}\sum_{j \in \sfS} |\alpha_j \bbeta_j^{(j)}| ^2 . \label{eq:tmpA}
\end{align}
Next, applying the same definition $k$ times with $\vecx = \mathbf{0}$ and $\vecy = \bbeta^{(j)}$ and summing over $j \in \sfS$,
\begin{align}
\langle \nabla l(\mathbf{0}),\alpha_j \bbeta^{(j)} \rangle &\geq \alpha_j \left( l(\bbeta^{(j)}) + \frac{m_1}{2} |\bbeta^{(j)}_j|^2 \right) \nonumber \\
\Rightarrow \langle \nabla l(\mathbf{0}),\overline{\bbeta} \rangle &\geq \sum_{j \in \sfS} \alpha_j l(\bbeta^{(j)}) +  \alpha_j  \frac{m_1}{2} |\bbeta^{(j)}_j|^2    . \label{eq:tmpB}
\end{align}
Combining \eqref{eq:tmpA} with \eqref{eq:tmpB}, and setting 
\begin{align*}
\alpha_j = \frac{m_1}{2M_k} + \frac{l(\bbeta^{(j)})}{M_k|\bbeta_j^{(j)}|^2}  ,
\end{align*}
we have
\begin{align}
l(\overline{\bbeta}) \geq \sum_{j \in \sfS} \frac{m_1}{2M_k} l(\bbeta^{(j)}) + \frac{m_1^2}{8M_k} |\bbeta^{(j)}|^2 + \frac{(l(\bbeta^{(j)}))^2}{2M_k|\bbeta_j^{(j)}|^2} . \label{eq:boundbbar}
\end{align}
Now applying Definition \ref{def:RSCRSM} with $\vecx = \bbeta^{(j)}$ and $\vecy = \mathbf{0}$,
\begin{align}
\frac{M_1}{2} |\bbeta_j^{(j)}|^2 &\geq l(\bbeta^{(j)}) \geq \frac{m_1}{2} |\bbeta_j^{(j)}|^2 . \label{eq:boundquad}
\end{align}
Combining \eqref{eq:boundbbar} and \eqref{eq:boundquad}, we have
\begin{align*}
l(\overline{\bbeta}) &\geq \sum_{j \in \sfS} \frac{m_1}{2M_k} l(\bbeta^{(j)}) + \frac{m_1^2}{4M_kM_1}l(\bbeta^{(j)}) + \frac{m_1}{4M_k} l(\bbeta^{(j)}) \\ &= \sum_{j \in \sfS} \left( \frac{3m_1}{4M_k} + \frac{m_1^2}{4M_kM_1} \right) l(\bbeta^{(j)}).
\end{align*}
Since
$l(\bbeta^{(\sfS)})$ optimizes $l$ over all vectors with support in $\sfS$,
\begin{align*}
f(\sfS) = l(\bbeta^{(\sfS)}) \geq l(\overline{\bbeta}) &\geq \frac{m_1}{4M_k}\left(3+\frac{m_1}{M_1}\right)  \sum_{j \in \sfS} l(\bbeta^{(j)}) \\ &= \frac{m_1}{4M_k}\left(3+\frac{m_1}{M_1}\right)\sum_{j=1}^k f(j) . \qedhere
\end{align*}
\end{proof}

\subsection{Proof of Theorem \ref{thm:obliviousGuarantee}}
\begin{proof}
	Let $\sfS$ be the set of size $k$ selected by the Oblivious algorithm and $\sfS^*$ be the optimal set of size $k$ corresponding to values $f^{OBL}$ and $f^{OPT}$. 
	By definition, $\sum_{j \in \sfS} f(j) \geq \sum_{j \in \sfS^*} f(j)$. Letting $C = \max \{\nicefrac{1}{k} , 
	\nicefrac{3m}{4M} + \nicefrac{m^2}{4M^2}) 
	\}$ and combining Lemma \ref{lem:squeeze} with Theorem \ref{thm:ratioBound},
	\begin{align*}
	f^{OBL} = f(\sfS) &\geq C \sum_{j \in \sfS}   f(j) \\ &\geq C \sum_{j \in \sfS^*}   f(j) \geq C \gamma_{\emptyset,k} f(\sfS^*) \geq C \left(\frac{m_k}{M_1}\right) f(\sfS^*) \\ &= C \left(\frac{m_k}{M_1}\right) f^{OPT}. \qedhere
		\end{align*} 
\end{proof}

\subsection{Proof of Theorem \ref{thm:greedyGuarantee}}
\begin{proof}
Let $l(\cdot)$ be the log-likelihood function and let $\sfS_i^G$ be the set selected by the Forward Stepwise algorithm at iteration $i$. Define $A(i)$ as the incremental greedy gain $f(\sfS_{i}^G) - f(\sfS_{i-1}^G)$ with $A(0) = 0$. Denote the remainder set at iteration $i$ as $\sfS_i^R = \sfS^* \backslash \sfS_i^G$, 
and define $B(i) = f(\sfS^*) - f(\sfS_{i}^G)$, the incremental gain from adding the optimal set. 
Lemma~\ref{lem:greedyIncrement} relates these two quantities.
\begin{lem}\label{lem:greedyIncrement}
At iteration $i$, the incremental gain from selecting the next greedy item is related to the incremental gain from adding the rest of the optimal set $\sfS^*$ by the following:
\begin{align*}
A(i+1) &\geq \frac{\gamma_{\sfS_i^G,k}}{k} B(i) \,.
\end{align*}
\end{lem}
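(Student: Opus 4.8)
The plan is to bound the greedy gain $A(i+1)$ from below by relating it to the sum of singleton marginal gains over the remainder set $\sfS_i^R = \sfS^*\setminus\sfS_i^G$, and then invoke the definition of the submodularity ratio. First I would observe that because the Forward Stepwise algorithm at step $i+1$ picks the single element $s\in[p]\setminus\sfS_i^G$ maximizing $f(\sfS_i^G\cup\{j\})-f(\sfS_i^G)$, we have for \emph{every} $j\in\sfS_i^R$ that $A(i+1)\geq f(\sfS_i^G\cup\{j\})-f(\sfS_i^G)$, and in particular $A(i+1)$ is at least the average of these marginal gains over $j\in\sfS_i^R$. Since $|\sfS_i^R|\leq k$, this gives
\begin{align*}
A(i+1)\;\geq\;\frac{1}{k}\sum_{j\in\sfS_i^R}\bigl[f(\sfS_i^G\cup\{j\})-f(\sfS_i^G)\bigr].
\end{align*}

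Next I would apply the definition of the submodularity ratio (Definition~\ref{def:submodularityRatio}) with $\sfL=\sfS_i^G$ and $\sfS=\sfS_i^R$: these are disjoint, $\sfL\subseteq\sfS_i^G$, and $|\sfS_i^R|\leq k$, so $\gamma_{\sfS_i^G,\sfR_i}\geq\gamma_{\sfS_i^G,k}$, which by definition of $\gamma_{\sfL,\sfS}$ yields
\begin{align*}
\sum_{j\in\sfS_i^R}\bigl[f(\sfS_i^G\cup\{j\})-f(\sfS_i^G)\bigr]\;\geq\;\gamma_{\sfS_i^G,k}\,\bigl[f(\sfS_i^G\cup\sfS_i^R)-f(\sfS_i^G)\bigr].
\end{align*}
Then I would use monotonicity of $f$ together with $\sfS^*\subseteq\sfS_i^G\cup\sfS_i^R$ (indeed $\sfS_i^G\cup\sfS_i^R=\sfS_i^G\cup\sfS^*\supseteq\sfS^*$) to conclude $f(\sfS_i^G\cup\sfS_i^R)\geq f(\sfS^*)$, hence $f(\sfS_i^G\cup\sfS_i^R)-f(\sfS_i^G)\geq f(\sfS^*)-f(\sfS_i^G)=B(i)$. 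Chaining the three inequalities gives $A(i+1)\geq\frac{\gamma_{\sfS_i^G,k}}{k}B(i)$, as claimed.

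The only mild subtlety — and the step I would be most careful about — is making sure the submodularity-ratio bound is applied on a support set where $\gamma$ is actually controlled: the denominator $f(\sfS_i^G\cup\sfS_i^R)-f(\sfS_i^G)$ must be nonnegative (true by monotonicity) and the sets must satisfy the cardinality constraint $|\sfS_i^R|\leq k$ with $\sfL=\sfS_i^G$ having the cardinality appropriate to the RSC/RSM domain used later in Theorem~\ref{thm:greedyGuarantee}; this is why the theorem assumes $(M,m)$ on $\Omega_{2k}$, since $|\sfS_i^G|\leq k$ and $|\sfS_i^R|\leq k$ together keep all relevant supports within size $2k$. Handling the edge case where $B(i)\leq 0$ (i.e. the greedy set already matches or beats the optimum) is trivial since then the inequality holds automatically as $A(i+1)\geq 0$. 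Otherwise the argument above is essentially a direct unrolling of definitions, so I expect no real obstacle beyond this bookkeeping.
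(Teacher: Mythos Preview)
Your proposal is correct and follows essentially the same argument as the paper: greedy maximality gives $A(i+1)$ as an upper bound on each singleton gain over $\sfS_i^R$, averaging (using $|\sfS_i^R|\le k$) yields the $\frac{1}{k}\sum$ bound, the submodularity ratio converts the sum into $\gamma_{\sfS_i^G,k}\bigl[f(\sfS_i^G\cup\sfS_i^R)-f(\sfS_i^G)\bigr]$, and monotonicity with $\sfS_i^G\cup\sfS_i^R\supseteq\sfS^*$ gives $B(i)$. Your added remarks on the $\Omega_{2k}$ bookkeeping and the $B(i)\le 0$ edge case are sound but not needed for the lemma itself.
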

\begin{proof}
Let $\sfS = \sfS_i^G$ be the set selected by the greedy algorithm at iteration $i$, $\sfS^*$ be the optimal feature set on $k$ variables, and $\sfS^R$ be the remainder set $\sfS^* \backslash \sfS$. $\sfS^R$ is a subset of the candidate variables available to the greedy algorithm at iteration $i+1$. Using Definition \ref{def:submodularityRatio} and the fact that $k \geq |\sfS^R|$,
\begin{align*}
k A(i+1) \geq |\sfS^R| A(i+1) &\geq |\sfS^R| \max_{j \in \sfS^R}  f(\sfS \cup j) - f(\sfS) \\
&\geq  \sum_{j \in \sfS^R} \left[ f(\sfS \cup j) - f(\sfS) \right]  \\ 
&\geq \gamma_{\sfS,|\sfS^R|} \left( f(\sfS \cup \sfS^R) - f(\sfS) \right) \geq \gamma_{\sfS,k} B(i) ,
\end{align*}
where the last inequality follows from the fact that $\sfS \cup \sfS^R \supseteq \sfS^*$.
\end{proof}

Given Theorem \ref{thm:ratioBound} and Lemma \ref{lem:greedyIncrement}, the rest of the proof follows the standard approximation bound for maximizing a normalized, monotone submodular function (refer to \cite{Nemhauser1978} 
or the survey \cite{Krause2014}). 
Next, observe that $A(i+1) = B(i) - B(i+1)$. Combining this with Lemma \ref{lem:greedyIncrement} and letting $C = \gamma_{\sfS_k^G,k}/k$, we have the following inequality:
\begin{align*}
B(i+1) &\leq \left(1 - C\right) B(i), 
\end{align*} 
which implies 
\begin{align*}
B(i) \leq \left(1 - C\right)^i B(0) ,
\end{align*}
for all iterations $1 \leq i \leq k$. Setting $i = k$ and substituting $B(k) = f^{OPT} - f^{FS}$ and $B(0) = f^{OPT}$,
\begin{align*}
f^{OPT} - f^{FS} &\leq \left( 1 - C \right)^k f^{OPT} \\ 
\Rightarrow f^{FS} &\geq f^{OPT} \left[1 -  \left(1 - C \right)^k \right]  \geq f^{OPT} \left(1 - e^{-\gamma_{\sfS_k^G,k}}\right).
\end{align*}
The claim follows from applying Theorem \ref{thm:ratioBound}.
\end{proof}

\subsection{Proof of Theorem \ref{thm:greedyGuaranteeExtra}}
\begin{proof}
First we prove the following lemma which bounds the ratio of the objective between optimal sets $\sfS_k$ and $\sfS_{k-1}$ in terms of their smoothness and convexity parameters. 
\begin{lem}\label{lem:optMult}
Let $\sfS_k$ be the optimal subset of size $k$, and let $m$ be the restricted strong concavity parameter on $\Omega_k$. Let $k'$ satisfy $\nicefrac{M^{\prime}}{m} < k' < k$, 
where $M^{\prime}$ is the restricted smoothness parameter of $l(\cdot)$ on $\tilde{\Omega}_k$.
Then for large enough $k$,
\begin{align*}
l(\bbeta^{(\sfS_{k^{\prime}})}) &\geq l(\bbeta^{(\sfS_{k})}) \Theta\left(\left(\frac{k'}{k}\right)^{\nicefrac{M^{\prime}}{m}}\right) 
\Rightarrow l(\bbeta^{(\sfS_{k/2})}) \geq l(\bbeta^{(\sfS_{k})}) \Theta\left(2^{\nicefrac{-M^{\prime}}{m}}\right) .
\end{align*}
\end{lem}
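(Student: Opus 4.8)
The plan is to show that the optimal objective degrades only polynomially as the budget shrinks, by establishing a one-step recursion relating $f(\sfS_t)$ to $f(\sfS_{t-1})$ and then telescoping from $t=k$ down to $t=k'$. Here $f(\sfS) = l(\bbeta^{(\sfS)}) - l(\bold0)$ is the normalized set function, which is monotone and nonnegative, so $f(\sfS_{k'})\le\dots\le f(\sfS_k)$; the stated bounds on $l$ follow from the corresponding bounds on $f$ since the multiplicative factor will be at most $1$. Throughout I use that, since $\Omega_t\subseteq\Omega_k$ and $\tilde\Omega_t\subseteq\tilde\Omega_k$, the parameters $m$ (on $\Omega_k$) and $M'$ (on $\tilde\Omega_k$) are conservative substitutes at every intermediate size $t\le k$.

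First I would prove the recursion
\[
f(\sfS_{t-1}) \;\geq\; \Bigl(1-\tfrac{M'}{mt}\Bigr) f(\sfS_t), \qquad \tfrac{M'}{m} < t \le k .
\]
The engine is first-order optimality, used twice. Since $\bbeta^{(\sfS_t)}$ maximizes $l$ over vectors supported on $\sfS_t$, we have $[\nabla l(\bbeta^{(\sfS_t)})]_j = 0$ for all $j\in\sfS_t$. (i) Pick $j\in\sfS_t$ minimizing $(\bbeta^{(\sfS_t)}_j)^2$, so $(\bbeta^{(\sfS_t)}_j)^2\le\tfrac1t\|\bbeta^{(\sfS_t)}\|_2^2$, and let $\by$ equal $\bbeta^{(\sfS_t)}$ with coordinate $j$ zeroed. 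Applying restricted smoothness ($M'$) to $(\bbeta^{(\sfS_t)},\by)\in\tilde\Omega_t$ and using that the gradient vanishes on $j$ gives $l(\by)\ge l(\bbeta^{(\sfS_t)})-\tfrac{M'}{2}(\bbeta^{(\sfS_t)}_j)^2$; and since $\by$ is supported on the size-$(t-1)$ set $\sfS_t\setminus\{j\}$, optimality of $\sfS_{t-1}$ gives $l(\bbeta^{(\sfS_{t-1})})\ge l(\by)$. (ii) Applying restricted strong concavity ($m$) to $(\bbeta^{(\sfS_t)},\bold0)\in\Omega_t$, again with the gradient term vanishing, yields $\tfrac m2\|\bbeta^{(\sfS_t)}\|_2^2\le f(\sfS_t)$. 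Chaining (i) and (ii) gives the recursion. Telescoping from $t=k$ to $t=k'+1$ then yields
\[
f(\sfS_{k'}) \;\geq\; f(\sfS_k)\prod_{t=k'+1}^{k}\Bigl(1-\frac{M'}{mt}\Bigr).
\]

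It remains to show $\prod_{t=k'+1}^{k}(1-\tfrac{a}{t}) = \Theta\bigl((k'/k)^{a}\bigr)$ with $a=M'/m$. Writing the product as $\prod_{t=k'+1}^k\tfrac{t-a}{t} = \tfrac{\Gamma(k+1-a)\,\Gamma(k'+1)}{\Gamma(k'+1-a)\,\Gamma(k+1)}$ and invoking the standard asymptotic $\Gamma(N+1-a)/\Gamma(N+1)=N^{-a}(1+O_a(1/N))$ at $N=k$ and $N=k'$ gives the claim, provided $k'\to\infty$ with $k$ (true when $k'=k/2$), so that the $O_a(1/N)$ errors are absorbed into the $\Theta$. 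Equivalently one can sandwich $\sum_{t=k'+1}^k\log(1-a/t)$ between $-a\sum 1/t$ and $-a\sum 1/t - a^2\sum 1/t^2$ (the latter valid once $t\ge 2a$, the $O(1)$ smaller terms costing only a constant factor) and compare the harmonic sums with $\log(k/k')$ by integral bounds. Finally, setting $k'=k/2$ turns $(k'/k)^{M'/m}$ into $2^{-M'/m}$ and needs only $k>2M'/m$, i.e. $k$ large enough, giving the displayed implication.

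The main obstacle is this last product estimate: one must be careful that the hidden $\Theta$-constants depend only on the fixed ratio $M'/m$ and not on $k$, and must control the small-$t$ regime (where $t$ is close to $M'/m$ and the bound $\log(1-a/t)\approx -a/t$ is poor) — this is exactly why the ``large enough $k$'' hypothesis, pushing $k'$ well above $M'/m$, is needed. The recursion itself is then routine, the only real idea being to invoke the first-order optimality condition $\nabla_{\sfS_t} l(\bbeta^{(\sfS_t)})=0$ both when perturbing toward $\bold0$ and when dropping a coordinate.
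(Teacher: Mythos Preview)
Your proposal is correct and follows essentially the same route as the paper: drop the smallest-magnitude coordinate of $\bbeta^{(\sfS_t)}$, use $M'$-smoothness together with first-order optimality (so the linear term vanishes) to bound the loss, then use $m$-strong concavity at $(\bbeta^{(\sfS_t)},\bold0)$ to convert $\|\bbeta^{(\sfS_t)}\|_2^2$ into $f(\sfS_t)$, obtaining the recursion $f(\sfS_{t-1})\ge(1-\tfrac{M'}{mt})f(\sfS_t)$, which is then telescoped. The only differences are cosmetic: the paper works directly with $l$ under the normalization $l(\bbeta^{(\emptyset)})=0$ rather than with $f$, and it dispatches the product $\prod_{t=k'+1}^k(1-\tfrac{M'}{mt})=\Theta((k'/k)^{M'/m})$ by citation to \cite{Das2011}, whereas you spell it out via the Gamma-ratio asymptotic---a welcome addition, since that is where the ``large enough $k$'' hypothesis is actually consumed.
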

\begin{proof}
Let $j$ be the index that minimizes $|\bbeta^{(\sfS_k)}_j|^2$. By $M^{\prime}$-smoothness on $\tilde{\Omega}_k$ and the fact that the min is smaller than the average,
\begin{align*}
l(\bbeta^{(\sfS_{k-1})}) &\geq l(\bbeta^{(\sfS_{k})}_{\sfS_k \backslash \{j\}}) \\
&\geq l(\bbeta^{(\sfS_{k})}) + \langle \nabla l(\bbeta^{(\sfS_{k})}), \bbeta^{(\sfS_{k})}_{\sfS_k \backslash \{j\}} - \bbeta^{(\sfS_{k})} \rangle - \frac{M^{\prime}}{2}\pnorm{\bbeta^{(\sfS_{k})}_{\sfS_k \backslash \{j\}} - \bbeta^{(\sfS_{k})}}{2}^2 \\
&=  l(\bbeta^{(\sfS_{k})}) - \frac{M^{\prime}}{2}|\bbeta^{(\sfS_{k})}_j|^2 \\
\Rightarrow \frac{l(\bbeta^{(\sfS_{k-1})})}{l(\bbeta^{(\sfS_{k})})} &\geq 1 - \frac{M^{\prime}\pnorm{\bbeta^{(\sfS_{k})}}{2}^2}{2kl(\bbeta^{(\sfS_{k})})} \;.
\end{align*}
Assuming that $l(\bbeta^{(\emptyset)}) = 0$ and using 
$m$-strong concavity on $\Omega_k$,
\begin{align*}
l(\bbeta^{(\emptyset)}) - l(\bbeta^{(\sfS_{k})}) &\leq -\frac{m}{2} \pnorm{\bbeta^{(\sfS_{k})} - \bbeta^{(\emptyset)}}{2}^2 \Rightarrow -\frac{\pnorm{\bbeta^{(\sfS_{k})}}{2}^2}{l(\bbeta^{(\sfS_{k})})} \geq -\frac{2}{m} \\
\Rightarrow \frac{l(\bbeta^{(\sfS_{k-1})})}{l(\bbeta^{(\sfS_{k})})} &\geq 1 - \frac{M^{\prime}}{k m} \;.
\end{align*}
Then applying iteratively for $\nicefrac{M^{\prime}}{m}$ constant, $k$ large, and $\nicefrac{M^{\prime}}{m} < k' < k$, as in \cite{Das2011} we have
\begin{align*}
l(\bbeta^{(\sfS_{k^{\prime}})}) &\geq l(\bbeta^{(\sfS_{k})}) \prod_{j=k'+1}^k 1 - \frac{M^{\prime}}{j m} = l(\bbeta^{(\sfS_{k})}) \Theta\left(\left(\frac{k'}{k}\right)^{\nicefrac{M^{\prime}}{m}}\right) . \qedhere
\end{align*}
\end{proof}

Observe that the assumptions of Lemma \ref{lem:optMult} are satisfied. Combining with Theorem \ref{thm:greedyGuarantee},
\begin{align*}
l(\bbeta^{(\sfS_{k}^G)}) &\geq l(\bbeta^{(\sfS_{k/2}^G)}) \geq l(\bbeta^{(\sfS_{k/2})})\left(1 - e^{-\gamma_{\sfS_{k/2}^G,k/2}}\right) \\
&\geq l(\bbeta^{(\sfS_{k})}) \Theta\left(2^{\nicefrac{-M^{\prime}}{m}}\right) \left(1 - e^{-\gamma_{\sfS_{k/2}^G,k/2}}\right) \\
\Rightarrow l(\bbeta^{(\sfS_{k}^G)}) &\geq l(\bbeta^{(\sfS_{k})}) \Theta\left(2^{\nicefrac{-M^{\prime}}{m^{\prime}}}\right) \left(1 - e^{-\nicefrac{m^{\prime}}{M^{\prime}}}\right) . \qedhere
\end{align*}
\end{proof}

\subsection{Proof of Theorem \ref{thm:ompGuarantee}}
\begin{proof}
The key idea at each step $i$ is to lower bound the incremental gain from the index chosen by OMP. This is similar to the proof of Theorem \ref{thm:greedyGuarantee}, as well as \cite{Khanna2017matrix} in which a matrix completion objective is considered. Let $\sfS = \sfS_i^{P}$ be the set chosen by OMP up to iteration $i$. Given $\sfS$, let $v$ be the index that would be selected by running one additional step of OMP. Define $D(i+1) = f(\sfS_{i+1}^{P}) -f(\sfS) = l(\bbeta^{(\sfS \cup \{v\})}) - l(\bbeta^{(\sfS)})$, and define $\tilde{B}(i) = f(\sfS^{*}) -f(\sfS)$.
\begin{lem}\label{lem:ompIncrement}
	At iteration $i$, the incremental gain from selecting the next item via OMP is related to the incremental gain from adding the rest of the optimal set $\sfS^*$ by the following:
	\begin{align*}
		D(i+1) \geq \frac{m_{i+k}}{k\tilde{M}_{i+1}} \tilde{B}(i) .
	\end{align*}
\end{lem}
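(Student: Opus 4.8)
The plan is to adapt the two–sided curvature argument behind Theorem~\ref{thm:ratioBound}, but to route everything through the gradient $\nabla l(\bbeta^{(\sfS)})$ at the current iterate, because OMP's selection rule is phrased in terms of that gradient rather than in terms of marginal function gains. Write $\sfS = \sfS_i^P$, so that $\bbeta^{(\sfS)} = \argmax_{\supp(\bbeta)\subseteq\sfS} l(\bbeta)$. The first thing I would record is the first–order optimality of the refit step (line~7 of Algorithm~\ref{alg:omp}): since $l$ is differentiable and $\bbeta^{(\sfS)}$ is optimal on support $\sfS$, we have $\nabla l(\bbeta^{(\sfS)})_j = 0$ for all $j\in\sfS$. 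Consequently the index $v$ selected in line~5 satisfies $|\nabla l(\bbeta^{(\sfS)})_v| = \pnorm{\nabla l(\bbeta^{(\sfS)})}{\infty}$ and may be taken outside $\sfS$. (If $\nabla l(\bbeta^{(\sfS)}) = \bold0$ the inequality is trivial: then $\tilde B(i)\le 0$ by restricted concavity, while $D(i+1)\ge 0$ by monotonicity.)

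Next I would lower bound $D(i+1)$ using smoothness. Take $\by = \bbeta^{(\sfS)} + \alpha\,\vece_v$; then $(\bbeta^{(\sfS)},\by)\in\tilde{\Omega}_{i+1}$ (they differ in one coordinate and are $(i+1)$-sparse), so restricted smoothness with parameter $\tilde M_{i+1}$ gives $l(\by) - l(\bbeta^{(\sfS)}) \ge \alpha\,\nabla l(\bbeta^{(\sfS)})_v - \tfrac{\tilde M_{i+1}}{2}\alpha^2$. Optimizing the scalar, $\alpha = \nabla l(\bbeta^{(\sfS)})_v / \tilde M_{i+1}$, and using $l(\bbeta^{(\sfS\cup\{v\})}) \ge l(\by)$ (optimality on the enlarged support) yields $D(i+1) \ge \pnorm{\nabla l(\bbeta^{(\sfS)})}{\infty}^2 / (2\tilde M_{i+1})$.

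Then I would upper bound $\tilde B(i)$ by the same gradient norm. Since $\bbeta^{(\sfS\cup\sfS^*)}$ maximizes $l$ over supports contained in $\sfS\cup\sfS^*$, we get $\tilde B(i) = l(\bbeta^{(\sfS^*)}) - l(\bbeta^{(\sfS)}) \le l(\bbeta^{(\sfS\cup\sfS^*)}) - l(\bbeta^{(\sfS)})$. Applying restricted strong concavity to the pair $(\bbeta^{(\sfS)},\bbeta^{(\sfS\cup\sfS^*)})\in\Omega_{i+k}$ and maximizing over vectors supported on $\sfS\cup\sfS^*$, exactly as in the derivation of \eqref{eq:boundDenom2} (with $\sfL=\sfS$), bounds this by $\tfrac{1}{2 m_{i+k}}\pnorm{\nabla l(\bbeta^{(\sfS)})_{\sfS\cup\sfS^*}}{2}^2$; invoking $\nabla l(\bbeta^{(\sfS)})_\sfS=0$ to discard the $\sfS$-coordinates, this is $\tfrac{1}{2m_{i+k}}\pnorm{\nabla l(\bbeta^{(\sfS)})_{\sfS^*\setminus\sfS}}{2}^2 \le \tfrac{|\sfS^*\setminus\sfS|}{2m_{i+k}}\pnorm{\nabla l(\bbeta^{(\sfS)})}{\infty}^2 \le \tfrac{k}{2m_{i+k}}\pnorm{\nabla l(\bbeta^{(\sfS)})}{\infty}^2$. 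Rearranging gives $\pnorm{\nabla l(\bbeta^{(\sfS)})}{\infty}^2 \ge \tfrac{2m_{i+k}}{k}\tilde B(i)$, and substituting into the bound on $D(i+1)$ produces $D(i+1) \ge \tfrac{m_{i+k}}{k\tilde M_{i+1}}\tilde B(i)$, as claimed.

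I do not expect a serious obstacle: this is essentially a re-packaging of pieces already assembled for Theorem~\ref{thm:ratioBound}. The one point that needs genuine care is the sparsity bookkeeping that pins down which subscripted parameters appear — $(\bbeta^{(\sfS)},\by)\in\tilde{\Omega}_{i+1}$ forces the smoothness constant $\tilde M_{i+1}$, whereas $(\bbeta^{(\sfS)},\bbeta^{(\sfS\cup\sfS^*)})\in\Omega_{i+k}$ forces the concavity constant $m_{i+k}$ — together with the double use of first–order optimality (to identify $v$ with the $\ell_\infty$-maximizer of the gradient, and to kill the $\sfS$-coordinates). Getting both exactly right is what yields the constant $m_{i+k}/(k\tilde M_{i+1})$ rather than something looser. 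This is also the structural reason OMP differs from Forward Stepwise (Lemma~\ref{lem:greedyIncrement}): the greedy increment is compared directly to a difference of function values via the submodularity ratio, while OMP's increment must be funnelled through the gradient, which is precisely why its smoothness dependence sits on $\tilde{\Omega}_{i+1}$ and why the exponent in Theorem~\ref{thm:ompGuarantee} can be sharper, as noted in Remark~\ref{rem:ompImprove}.
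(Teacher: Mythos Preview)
Your proposal is correct and follows essentially the same argument as the paper: lower bound $D(i+1)$ by $\tfrac{1}{2\tilde M_{i+1}}\pnorm{\nabla l(\bbeta^{(\sfS)})}{\infty}^2$ via restricted smoothness along $\vece_v$, upper bound $\tilde B(i)$ by $\tfrac{1}{2m_{i+k}}\pnorm{\nabla l(\bbeta^{(\sfS)})_{\sfS^R}}{2}^2$ via \eqref{eq:boundDenom2}, and link the two with $\pnorm{\cdot}{2}^2 \le k\pnorm{\cdot}{\infty}^2$. Your version is slightly more explicit than the paper's in invoking first-order optimality (to ensure $v\notin\sfS$ and to drop the $\sfS$-coordinates of the gradient) and in handling the degenerate case $\nabla l(\bbeta^{(\sfS)})=\bold0$, but these are refinements rather than a different route.
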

\begin{proof}
	We begin similar to the proof of Theorem \ref{thm:ratioBound}. Let $M=\tilde{M}_{i+1}$, $m=m_{i+k}$, and $\vece_v$ be the unit vector with one at coordinate $v$. By Definition \ref{def:RSCRSM} with $\vecx = \bbeta^{(\sfS)} $ and $\vecy = \bbeta^{(\sfS)} + \alpha \vece_v$ for any scalar $\alpha$,
	\begin{align*}
		D(i+1) \geq l(\vecy) - l(\vecx) &\geq \left\langle \nabla l(\bbeta^{(\sfS)}) , \alpha \vece_v \right\rangle - \frac{M}{2}\alpha^2 \\ 
		&= \alpha \pnorm{\nabla l(\bbeta^{(\sfS)})}{\infty} - \frac{M}{2}\alpha^2 \; ,
	\end{align*}
	since OMP chooses the coordinate which maximizes the gradient. 
	Substituting 
	\begin{align*}
	\alpha =  \frac{\pnorm{\nabla l(\bbeta^{(\sfS)})}{\infty}}{M}   \; , 
	\end{align*}
		we have 
	\begin{align*}
	D(i+1) \geq \frac{1}{2M}\pnorm{\nabla l(\bbeta^{(\sfS)})}{\infty}^2
	\end{align*}
	Let $\sfS^R = \sfS^* \backslash \sfS$. Since $|\sfS^R| \leq k$,
	\begin{align*}
	D(i+1) &\geq \frac{1}{2kM} \sum_{j \in \sfS^R} \left\langle \nabla l(\bbeta^{(\sfS)}) ,  \vece_j \right\rangle^2 = \frac{1}{2kM} \pnorm{\nabla l(\bbeta^{(\sfS)})_{\sfS^R}}{2}^2. 
	\end{align*}
	Substituting \eqref{eq:boundDenom2} into the above and noting that $\sfS \cup \sfS^R \supseteq \sfS^*$, we have
	\begin{align*}
	D(i+1) &\geq \frac{m}{kM} \left( l(\bbeta^{(\sfS \cup \sfS^R)}) - l(\bbeta^{(\sfS)}) \right) \geq  \frac{m}{kM} \tilde{B}(i) \; . \qedhere
	\end{align*}
\end{proof}
Given Lemma \ref{lem:ompIncrement}, the rest of the proof follows that of Theorem \ref{thm:greedyGuarantee}.
\end{proof}

\subsection{Proof of Theorem \ref{thm:paramrecovery}}
\begin{proof}
	Let $C = C_{s,r}$ and $\Delta = \thetagreedy^r - \param^s$, which is at most an ($s+r$)-sparse vector. Recall that by the definition of Restricted Strong Concavity on $\Omega_{s+r}$ we have
	\begin{equation}
	l(\thetagreedy^r) - l(\param^s) - \langle \nabla l (\param^s),\Delta \rangle \leq \frac{-m_{s+r}}{2} \|\Delta\|_2^2 .\label{eq:rscapply2}
	\end{equation}
	Furthermore, simple calculations show that
	\begin{equation}
	l(\thetagreedy^r)-l(\param^s) \geq (1-C) [ l(\bold0)  - l(\param^s)] .	\label{eq:diff1}	
	\end{equation}
	Subtracting $\langle \nabla l (\param^s),\Delta \rangle$ from both sides of \eqref{eq:diff1} we have
	\begin{equation*}
	\label{eq:doubt2}
	l(\thetagreedy^r)-l(\param^s) - \langle \nabla l (\param^s),\Delta \rangle \geq - \langle \nabla l (\param^s),\Delta \rangle + (1-C) [ l(\bold0) - l(\param^s)  ] \;.
	\end{equation*}
	Applying \eqref{eq:rscapply2} yields
	\begin{equation*}
	\frac{-m_{s+r}}{2} \|\Delta\|_2^2 \geq - \langle \nabla l (\param^s),\Delta \rangle + (1-C) [ l(\bold0) - l(\param^s)  ] \;.
	\end{equation*}
	Next, note that
	\begin{equation*}
	- \langle \nabla l(\param^s) , \Delta \rangle \geq - \|\nabla l(\param^s)\|_{2,s+r} \|\Delta\|_2 \;.
	\end{equation*}
	Thus,
	\begin{equation*}
	\frac{-m_{s+r}}{2} \|\Delta\|_2^2 \geq - \|\nabla l(\param^s)\|_{2,k} \|\Delta\|_2 + (1-C) [ l(\bold0) - l(\param^s)  ] \;.
	\end{equation*}
	Recalling that for any positive numbers $2a b \leq c a^2 + b^2/c$ and flipping the above inequality,
	\begin{equation*}
	\frac{m_{s+r}}{2} \|\Delta\|_2^2 \leq \frac{  \|\nabla l(\param^s)\|_{2,s+r}^2}{m_{s+r}} +  \frac{m_{s+r} \|\Delta\|_2^2}{4} + (1-C) [ l(\param^s) - l(\bold0)] \;.
	\end{equation*}
	Rearranging terms we have the final result.
\end{proof}

\subsection{Proof of Corollary \ref{cor:paramLinear}}
\begin{proof}
Using a result of \cite{Negahban2012journal}, we have that 
\begin{align*}
\|\nabla l(\param^s)\|_{2,s+r}^2 \leq (s+r) \pnorm{\nabla l(\param^s)}{\infty}^2 \leq \frac{(s+r) \sigma^2 \log p}{n} \; ,
\end{align*}
with probability at least $1 - \nicefrac{1}{p}$.

  The minimum eigenvalue of the matrix $\Sigma$ is $1$, while the maximum $s$-sparse eigenvalue behaves like $1+s$. Hence, an RIP type condition will not hold. However, in our setting, we simply require a bound on $M_1$. It can be shown using tail bounds for $\chi^2$-random variables that with high probability $M_1 \leq 4$. Letting $\rho(\Sigma)^2 = \max_i \Sigma_{ii}$, and using a result by Raskutti et. al.~\cite{Raskutti2010}, we have that for all $\vecv \in \bbR^p$,
  \begin{align*}
  \frac{\pnorm{\matX \vecv}{2}}{\sqrt{n}} &\geq \nicefrac{1}{4} \pnorm{\Sigma^{\nicefrac{1}{2}}\vecv}{2} - 9\rho(\Sigma)\sqrt{\frac{\log p}{n}} \pnorm{\vecv}{1} \\
  &\geq \left(\left(1 - \nicefrac{1}{c} \right)\frac{\lambda_{\min}(\Sigma)}{16}   + \left(1 - c\right)81\rho(\Sigma)^2\frac{\log p}{n} (s+r)\right)\pnorm{\vecv}{2}^2  \\
  \Rightarrow m_{s+r} &\geq \min_{\substack{\vecv : \pnorm{\vecv}{2}=1, \\\pnorm{\vecv}{0}\leq s+r}} \frac{\pnorm{\matX \vecv}{2}^2}{n} \geq \frac{1}{32} - \frac{162(s+r)\log p}{n} \; ,
    \end{align*}
   with high probability. Therefore,
    $\gamma \geq  \frac{1}{128} - \frac{81(s+r) \log p}{2n}$ and
with probability at least $1 - p^{-\Omega(1)} - e^{-\Omega(n)}$,
  \begin{equation*}
    \|\thetagreedy^r - \param^*\|_2^2 \leq \frac{4}{m_{s+r}^2} \frac{(s+r) \sigma^2 \log p}{n} + \frac{8(s+1)}{m_{s+r}} (1-C_{s,r}) \;,
  \end{equation*}
  where we have used the fact that $l(\param^*) - l(\bold0) \leq \lambda_{\max}(\hat{\Sigma}_s) \leq 2(s+1)$ with high probability. Note that using arguments from~\cite{Negahban2012preprint,loh2015} we can apply the above results to the setting of generalized linear models.

Now let $(s+r)\sigma^2 \log p = o(n)$
and $r = \Omega(s \log n)$. Combined with Corollary~\ref{cor:pastk}, this implies that
$\|\thetagreedy^r - \param^*\|_2^2 =  n^{-\Omega(1)}$ with probability $1 - p^{-\Omega(1)} - e^{-\Omega(n)}$.
\end{proof}

\section{Experiments} 
Next we evaluate the performance of our greedy algorithms with feature selection experiments on simulated and real-world datasets. A bias term $\beta_0$ is added to the regression by augmenting the design matrix with a column of ones. 
\paragraph{The Data}
A synthetic experiment was conducted as follows: first each row of a $600 \times 200$ design matrix $\matX$ is generated independently according to a first order AR process ($\alpha = 0.3$ and noise variance $\sigma^2=5$). This ensures that the features are heavily correlated with each other. Bernoulli $\pm 1$ (\textit{i.e.}, Rademacher) random variables are placed on $50$ random indices to form the true support $\thetabar^k$, and scaled such that  $\pnorm{\bbeta}{2}^2=5$. Then responses $\vecy$ are computed via a logistic model. We also conduct an experiment on a subset of the RCV1 Binary text classification dataset \cite{rcv1Dataset}. $10,\!000$ training and test samples are used in $47,\!236$ dimensions. Since there is no ground truth, a logistic regression is fit using a subset of at most $700$ features.

\paragraph{Algorithms and Metrics}
The Oblivious, Forward Stepwise (FS), and OMP algorithms were implemented using a logistic log-likelihood function
given $\matX$ and $\vecy$ (see Appendix~\ref{sec:glmAppendix}).
We implemented $3$ additional algorithms. \textit{Lasso} fits a logistic regression model with $\ell_1$ regularization. \textit{Lasso-Pipeline} recovers the sparse support using Lasso and then fits regression coefficients on this support with a separate, unregularized model. The regularization parameter was swept to achieve outputs with varying sparsity levels. 
\textit{Forward Backward} (FoBa) \cite{foba} first runs FS at each step and then drops any features if doing so would decrease the objective by less than half the latest marginal gain.

Our main metric for each algorithm is the normalized objective function $\functionvector(\thetagreedy^s)$  - $\functionvector(\bf 0)$ for the output sparsity $s \in \{1,\ldots,70\}$. 
We also compare the sets $\supp(\thetagreedy^s)$ and $\supp(\thetabar^k)$ using area under ROC and percent of true support recovered.
Finally, we measure generalization accuracy by drawing additional observations $(\feature,y_i)$ from the same distribution as the training data.
\paragraph{Results}

Figure \ref{fig:arLogistic} shows the results of our synthetic experiment averaged over $20$ runs. For all metrics, Oblivious performs worse than OMP which is slightly worse than FS and FoBa. This matches intuition and the series of bounds in Section \ref{sec:derivation}. 
We also see that the Lasso-Pipeline performs noticeably worse than all algorithms except Oblivious and Lasso. This suggests that 
greedy feature selection degrades more gracefully than Lasso in the case of correlated features.

Figure \ref{fig:rcv1Logistic} shows similar results for the high-dimensional RCV1 Binary dataset. Due to their large running time complexity, FS and FoBa were omitted. While all algorithms have roughly the same generalization accuracy using $300$ features, OMP has the largest log-likelihood.

\begin{figure}[ht]
\centering
\begin{subfigure}{0.65\linewidth}
\includegraphics[width=\linewidth]{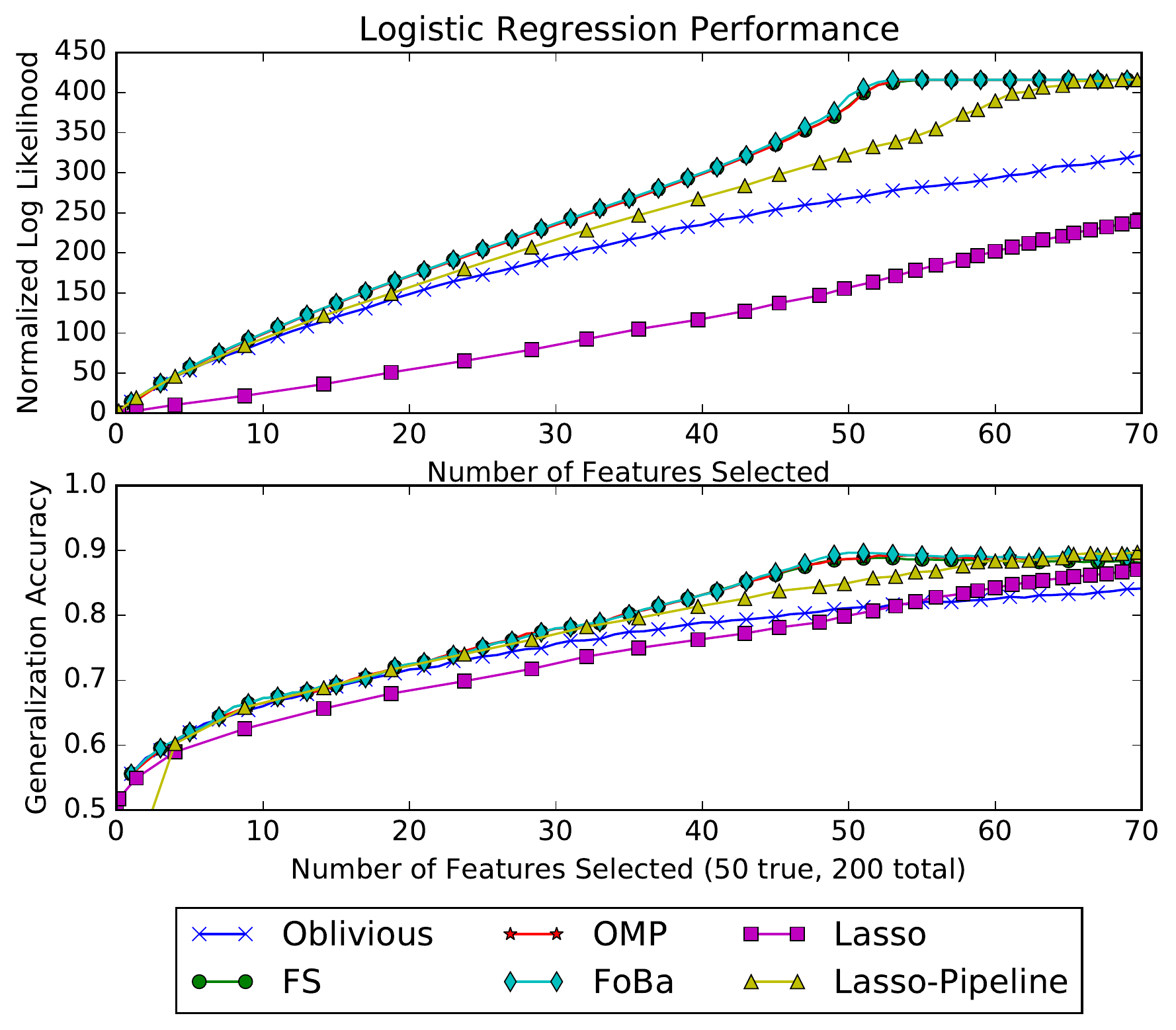}
\caption{}
\label{fig:arLL}
\end{subfigure} \\
\begin{subfigure}{0.65\linewidth}
\includegraphics[width=\linewidth]{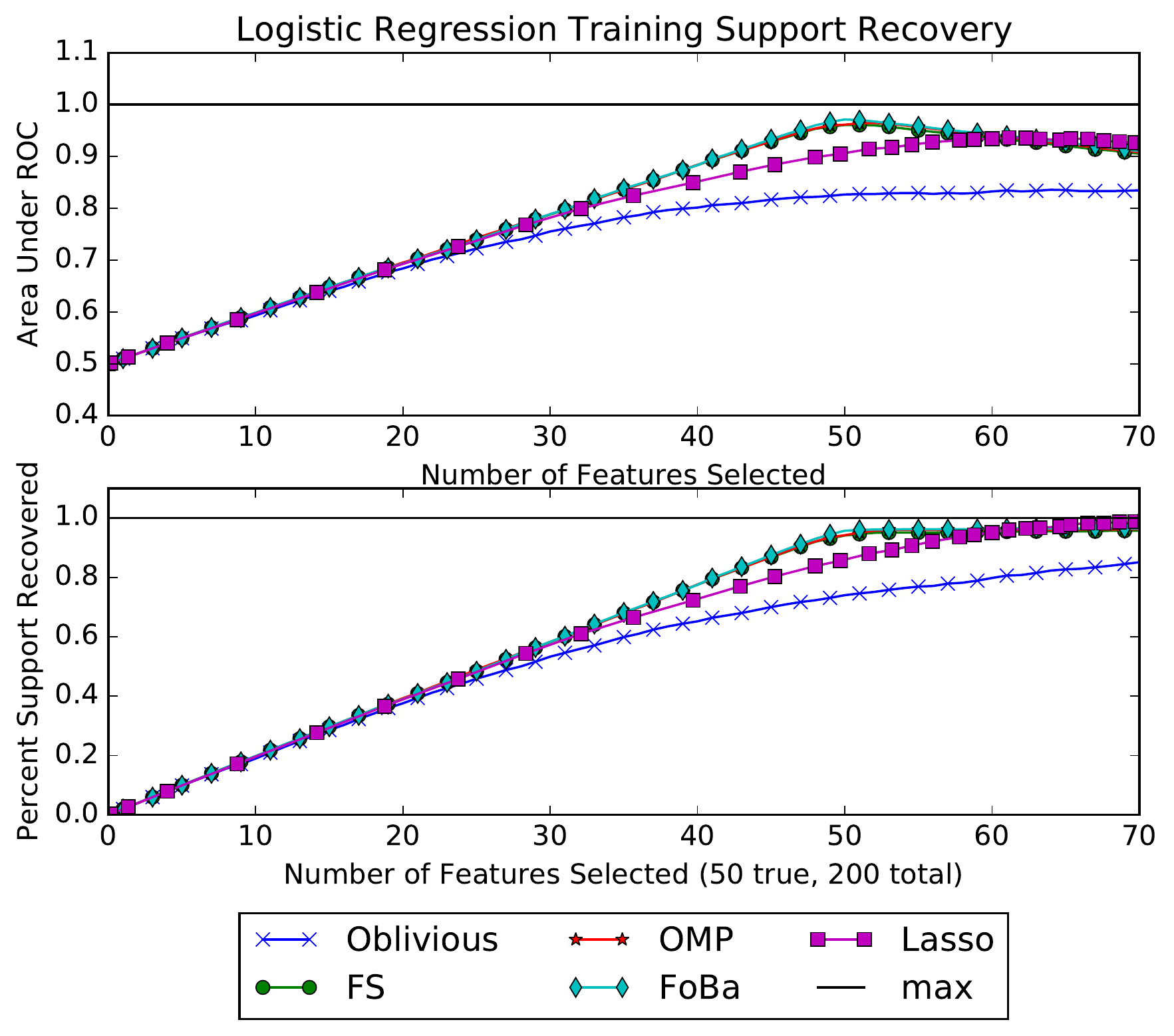}
\caption{}
\label{fig:arSupport}
\end{subfigure}
\caption{\label{fig:arLogistic}Synthetic Dataset - $\alpha = 0.3$, $n=600$ training and test samples, $p=200$ dimensions with true support on $50$ features, averaged over $20$ runs. 
	(a) The greedy algorithms perform better than Lasso and Oblivious algorithms, but beyond $50$ steps they overfit to noise in the training data. While Lasso outperforms Oblivious in support recovery (b), its regression suffers from regularization bias.
	}
\end{figure}

\begin{figure}[ht]
	\centering
		\includegraphics[width=0.65\linewidth,trim=0 1.1in 0 0, clip=true]{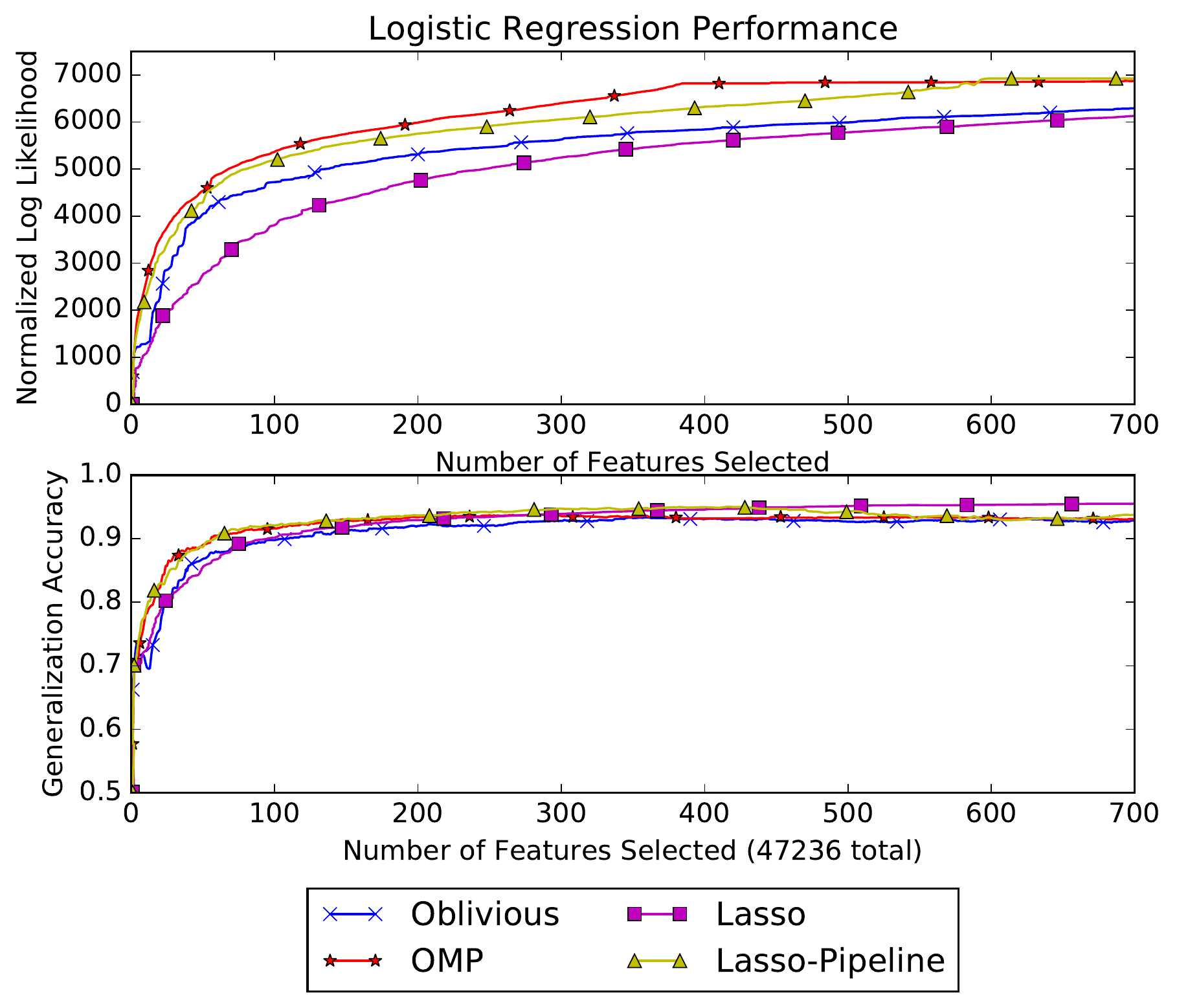}
	\caption{\label{fig:rcv1Logistic}RCV1 Binary Dataset - $n=10,\!000$, $p=47,\!236$. OMP outperforms Lasso-Pipeline.
		 }
\end{figure}

\section{Conclusions}
We have extended the results of \cite{Das2011} and shown that functions satisfying RSC also satisfy a relaxed form of submodularity that can be used to analyze the performance of greedy algorithms compared to the best sparse solution. Experimental results confirm that greedy feature selection outperforms regularized approaches in a nonlinear regression model. Directions for future work include similar analysis for other greedy algorithms that incorporate group sparsity \cite{Negahban2012journal} or thresholding, and applications beyond sparse regression. Bounds on dictionary selection (analogous to those in \cite{Das2011}) 
also apply to general likelihood functions satisfying RSC and RSM.

\clearpage 

\appendix
\section{Motivating Example (Linear Regression)}
To show the impact of submodularity, we construct a linear regression example. Even in $p=3$ dimensions, the greedy forward selection algorithm's output can be arbitrarily off from the optimal $R^2$. Consider the following variables:

\begin{align*}
\vecy &= \begin{bmatrix} 1 & 0 & 0 \end{bmatrix}^T \\ 
\vecx_1 &= \begin{bmatrix} 0 & 1 & 0 \end{bmatrix}^T \\
\vecx_2 &= \begin{bmatrix} z & \sqrt{1-z^2} & 0 \end{bmatrix}^T \\
\vecx_3 &= \begin{bmatrix} 2z & 0 & \sqrt{1-4z^2} \end{bmatrix}^T 
\end{align*}

All variables have unit norm and we wish to choose the $2$-subset of $\{\vecx_1,\vecx_2,\vecx_3\}$ that best estimates $\vecy$. Since $R^2_1 = 0$, $R^2_2 = z^2$, and $R^2_3 = 4z^2$, $\vecx_3$ will be selected first ($S_1^G = \{3\}$) if $z > 0$. $\vecx_2$ will be chosen next ($S_2^G = \{3,2\}$), and solving for $R^2$ for this pair,
\begin{align*}
R^2_{3,2} &= (\vecy^T \matX_{3,2}) (\matX_{3,2}^T \matX_{3,2})^{-1} (\matX_{3,2}^T \vecy) \\ 
&= \frac{1}{1-4z^4}\begin{bmatrix}2z & z \end{bmatrix}
\begin{bmatrix}1 & -2z^2 \\ -2z^2 & 1
\end{bmatrix} \begin{bmatrix}2z \\ z \end{bmatrix} = \frac{5z^2 - 8z^4}{1 - 4z^4} ,
\end{align*}

which goes to zero as $z \rightarrow 0^+$. However, $\vecy = -\frac{\sqrt{1-z^2}}{z} \vecx_1 + \frac{1}{z}\vecx_2$ which makes $R^2_{1,2} = 1$ for the optimal set $\{\vecx_1, \vecx_2\}$ ($S_2 = \{1,2\}$).

\section{Greedy Selection for GLMs}\label{sec:glmAppendix}
In this section, we state guarantees for feature selection in sparse generalized linear regression using the framework developed in Section~\ref{sec:derivation}. 
For introducing sparsity, a relevant regularizer (such as $\ell_1$) is often used. An alternative to regularization is to apply Algorithms~\ref{alg:oblivious}~--~\ref{alg:omp} to the log-likelihood function. To use guarantees presented in Section~\ref{sec:derivation}, we derive sample complexity conditions on the design matrix $\bX$ that are sufficient to bound the submodularity ratio $\gamma_{\sfU,k}$ with high probability.

Recall that Theorems~\ref{thm:obliviousGuarantee}~--~\ref{thm:greedyGuaranteeExtra} require strong convexity and bounded smoothness on a sparse support. While in general GLMs are not strongly convex, nor do they have bounded smoothness, it can be shown that on the \emph{restricted} set of sparse supports under some mild restrictions on the design matrix,  they possess both these traits. Moreover, note that some regularizers such as $\ell_2$ that are widely applied for GLMs automatically imply strong convexity. We present the analysis for both regularized and unregularized regression. 

For continuity, we reintroduce some notation here. We represent the data as $(\bx_i, y_i)$, where $\forall i, \bx_i \in \bbR^p$ are features, and $y_i \in \bbR$ represents the response. The log conditional can be written in its \emph{canonical} form as~\cite{Dey00}: 
\begin{equation}
\label{eq:canonical}
\log p( y | \bx; \bbeta)  = h^{-1}(\tau) y \bbeta^\top x - Z(\bbeta, \bx) + g(y, \tau),
\end{equation}
where $Z(\cdot)$ is the log partition function,  and $\bbeta, \tau$ are the  parameters ($\tau$ is also called the dispersion parameter). For $n$ observations, we can write the log-likelihood as
\begin{equation}
\label{eq:glmlogLikelihood}
l_{GLM}(\bbeta) := \sum_{i=1}^n \log p(y_i | \bx_i; \bbeta) \;.
\end{equation}

The parameters of this distribution can then be learned by maximizing the log-likelihood. Recall that equivalently we can minimize its negative.
Also, typically a regularization term is added for stability and identifiability. The \emph{loss} function $g$ to minimize for learning can be written as
\begin{equation}
g(\bbeta) := - l_{GLM}(\bbeta)  + \eta \| \bbeta\|_2^2 .
\label{eqn:GLMloss}
\end{equation}  

Similar to the \eqref{eq:mlfunction}, we can define a normalized set function $ f_{GLM1}(\cdot)$ associated with $g(\cdot)$ as
\begin{equation*}
  \max_{| \sfS | \leq k} f_{GLM1}(\sfS)\, \Leftrightarrow\, \min_{\substack{\bbeta: \bbeta_{\sfS^c = 0} \\ | \sfS | \leq k}} g(\bbeta)\;.
\end{equation*}
Note that increasing the support set of $\bbeta$ does not decrease the log-likelihood, so our set function $f_{GLM1}(\cdot)$ is indeed monotone. Further, note that for normalizing the set function, we can use $f_{GLM1}(\emptyset) = g(\mathbf{0})$.

The Hessian of $g$ at any point $\bbeta$ can be written as
\begin{equation}
\bH(\bbeta) := \frac{\partial^2 g (\bbeta)}{\partial \bbeta \partial \bbeta^\top } = \bX^\top \bD \bX + \eta \bI, 
\label{eq:glmHessian}
\end{equation} 
where $\bD$ is a diagonal matrix with $\bD_{ii} = h^{-1}(\tau) Z''(\bbeta, \bx_i)$.  
Next, we state assumptions required for the sample complexity bounds. We assume that 
$\bD_{ii}$ is upper bounded by $s$
at any value of the domain of $\bbeta$. This implies that 
$\eta\bI \preccurlyeq  \bH(\bbeta) \preccurlyeq s \bX^\top \bX + \eta \bI$
for all $\bbeta$.

Let $\sfT \subset [p]$ so that $| \sfT| \leq r$. For any vector $\bv$, define $\bv_\sfT$ be the vector formed by replacing all indices in $[p] \backslash \sfT$ of  $\bv$ by 0. let $\cP_\sfT$ be the operator that achieves this i.e. $\cP_\sfT \bv = \bv_\sfT$. 

\begin{asm}
 We make the following assumptions for Proposition~\ref{thm:bahmani}. Let the rows of $\bX$ be generated i.i.d. from some underlying distribution, so that $\bbE[\bx\bx^\top] = \bC$.  For all $\sfT \subset [p], | \sfT | \leq r$,
\begin{enumerate}
\item $\| \bx_\sfT \|_2  \leq R$, and
\item None of the matrices  $\cP_\sfT^\top \bC \cP_\sfT  $ are the zero matrix.
\end{enumerate}
 \label{asm:regularized}
 \end{asm}

Further, define $\theta_\sfT := \lambda_{\text{max}} (\cP_\sfT^\top \bC \cP_\sfT )$, let $\bar{\theta} = \max_{{\sfT \subset [p], |\sfT| \leq r  }} \theta_\sfT$, $\tilde{\theta} = \min_{{\sfT \subset [p], |\sfT| \leq r  }} \theta_\sfT$.

\begin{prop}[from ~\cite{Bahmani2013}]
With Assumption~\ref{asm:regularized}, for $ \delta \in (0,1) $ and $n > \frac{ R (\log r + r (1 + \log \frac{p}{k} - \log \delta)}{\tilde{\theta} (1 + \epsilon) \log (1+ \epsilon) -\epsilon}  $, $\lambda_{\text{max}} (\cP_\sfT ^\top\bX^\top \bX \cP_\sfT)\leq (1+\epsilon) \bar{\theta} ) $ with probability $(1 - \delta)$.
\label{thm:bahmani}
\end{prop}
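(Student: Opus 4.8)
The plan is to recognize $\cP_\sfT^\top \bX^\top \bX \cP_\sfT = \sum_{i=1}^n \cP_\sfT^\top \bx_i \bx_i^\top \cP_\sfT$ as a sum of $n$ independent rank-one positive semidefinite matrices supported on the $|\sfT|$-dimensional coordinate block indexed by $\sfT$, control it with a matrix Chernoff inequality for a single fixed $\sfT$, and then take a union bound over all admissible supports $\{\sfT : |\sfT| \le r\}$.

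First I would fix $\sfT$ with $|\sfT|\le r$ and set $Y_i := \cP_\sfT^\top \bx_i \bx_i^\top \cP_\sfT$. Each $Y_i$ is PSD and rank one, with unique nonzero eigenvalue $\|\bx_{i,\sfT}\|_2^2 \le R$ by Assumption~\ref{asm:regularized}(1), so $\lambda_{\max}(Y_i)\le R$ surely. Since the rows are i.i.d.\ with $\bbE[\bx\bx^\top]=\bC$, we get $\bbE[Y_i]=\cP_\sfT^\top\bC\cP_\sfT$ and hence $\lambda_{\max}\big(\sum_{i=1}^n\bbE[Y_i]\big)=n\,\theta_\sfT$, which is strictly positive by Assumption~\ref{asm:regularized}(2). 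The matrix Chernoff tail bound for sums of independent bounded PSD matrices (in the form due to Tropp) then gives, for every $\epsilon>0$,
\begin{equation*}
\Pr\!\left[\lambda_{\max}\!\big(\cP_\sfT^\top \bX^\top \bX \cP_\sfT\big)\ge (1+\epsilon)\,n\,\theta_\sfT\right]\le |\sfT|\cdot\exp\!\left(-\frac{n\,\theta_\sfT}{R}\big[(1+\epsilon)\log(1+\epsilon)-\epsilon\big]\right).
\end{equation*}
Because $\tilde\theta\le\theta_\sfT\le\bar\theta$, the target event $\{\lambda_{\max}\ge(1+\epsilon)n\bar\theta\}$ is contained in the event above; replacing $\theta_\sfT$ by $\tilde\theta$ in the exponent (which only weakens the bound) and $|\sfT|$ by $r$ in the prefactor makes the right-hand side uniform over all choices of $\sfT$.

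Next I would union-bound over all $\sfT\subseteq[p]$ with $|\sfT|\le r$. There are at most $\binom{p}{r}$ of them (for $r\le p/2$), and $\log\binom{p}{r}\le r\big(1+\log(p/r)\big)$, so the overall failure probability is at most $r\,\binom{p}{r}\exp\!\big(-\tfrac{n\tilde\theta}{R}[(1+\epsilon)\log(1+\epsilon)-\epsilon]\big)$. Setting this $\le\delta$ and solving for $n$ gives $n \gtrsim \dfrac{R\big(\log r + r(1+\log(p/r)) - \log\delta\big)}{\tilde\theta\big[(1+\epsilon)\log(1+\epsilon)-\epsilon\big]}$, which is the stated threshold (with $p/k$ in place of $p/r$ under the paper's conventions); on the complementary event one has $\lambda_{\max}(\cP_\sfT^\top\bX^\top\bX\cP_\sfT)\le(1+\epsilon)n\theta_\sfT\le(1+\epsilon)n\bar\theta$ for all such $\sfT$ simultaneously, which is the claim.

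I expect the main obstacle to be arranging that the exponential concentration from the matrix Chernoff step beats the $\binom{p}{r}\approx e^{r\log(p/r)}$ blow-up in the union bound: this is precisely what forces $n$ to scale linearly in both $r\log(p/r)$ and $R/\tilde\theta$, and it is why one must first compress to the $|\sfT|$-dimensional subspace — the dimensional prefactor in the Chernoff inequality is the ambient dimension, so keeping it at $|\sfT|\le r$ rather than $p$ is what keeps the union bound affordable. A secondary care point is the bookkeeping of the two inequalities $\tilde\theta\le\theta_\sfT\le\bar\theta$: one wants the largest mean at the deviation level (hence $\bar\theta$) but the smallest mean in the exponent (hence $\tilde\theta$), and both substitutions must be taken in the direction that loosens the bound.
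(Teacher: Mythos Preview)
The paper does not actually prove this proposition: it is quoted verbatim as a result from \cite{Bahmani2013} and used as a black box to derive Corollary~\ref{thm:samplecomplexity}. So there is no ``paper's own proof'' to compare against.

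Your argument is the standard one and is essentially what appears in Bahmani et al.: write $\cP_\sfT^\top\bX^\top\bX\cP_\sfT$ as an i.i.d.\ sum of bounded rank-one PSD matrices on the $|\sfT|$-dimensional block, apply the Tropp matrix Chernoff upper tail, and then union-bound over the $\binom{p}{r}$ supports. Your handling of the two directions $\tilde\theta\le\theta_\sfT\le\bar\theta$ is correct, and your observation that one must work in the restricted $|\sfT|$-dimensional subspace so that the dimensional prefactor is $r$ rather than $p$ is exactly the right point.

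Two bookkeeping items worth flagging. First, Assumption~\ref{asm:regularized}(1) as written bounds $\|\bx_\sfT\|_2\le R$, whereas the rank-one eigenvalue is $\|\bx_\sfT\|_2^2$; for the stated threshold to come out with $R$ (not $R^2$) in the numerator you are implicitly reading the assumption as $\|\bx_\sfT\|_2^2\le R$, which is indeed how the source states it. Second, the proposition as printed omits the factor of $n$ on the right-hand side (your derivation correctly produces $(1+\epsilon)n\bar\theta$); this is a normalization slip in the transcription, not in your argument.
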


\begin{cor}[Regularized GLM Sample Complexity]
Under Assumption~\ref{asm:regularized}, for $ \delta \in (0,1) $ and $n > \frac{ R (\log r + r (1 + \log \frac{p}{k} - \log \delta)}{\tilde{\theta} (1 + \epsilon) \log (1+ \epsilon) -\epsilon}  $, with probability ($1-\delta)$, the submodularity ratio of $f_{GLM1}$,  $\gamma_{\sfU,r} \geq {\eta}/{(\eta + s(1+\epsilon)\bar{\theta} })$. 
\label{thm:samplecomplexity}
\end{cor}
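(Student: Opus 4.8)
The plan is to combine the structural result of Theorem~\ref{thm:ratioBound} with operator bounds on the GLM Hessian \eqref{eq:glmHessian} and the concentration statement of Proposition~\ref{thm:bahmani}. Since $f_{GLM1}$ is defined (up to the sign convention) exactly as in \eqref{eq:mlfunction}, with the concave objective $l:=-g(\cdot)$ in place of the log-likelihood and with normalization $f_{GLM1}(\emptyset)=g(\mathbf{0})$, Theorem~\ref{thm:ratioBound} applies verbatim and gives $\gamma_{\sfU,r}\geq m_{|\sfU|+r}/M_{|\sfU|+r}$ once we produce valid restricted strong-concavity and restricted-smoothness constants for $-g$ on the relevant domain of sparse vectors. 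Because $g$ is twice differentiable, Definition~\ref{def:RSCRSM} can be verified through the second-order (mean-value) form of Taylor's theorem: for sparse $\bx,\by$ whose difference is supported on a set $\sfT$ of size at most $r$, the quadratic remainder equals $\tfrac12(\by-\bx)^\top\bH(\xi)(\by-\bx)$ for some $\xi$ on the segment $[\bx,\by]$, and this is squeezed between $\tfrac12\lambda_{\min}\pnorm{\by-\bx}{2}^2$ and $\tfrac12\lambda_{\max}\pnorm{\by-\bx}{2}^2$, where the eigenvalues are those of the principal submatrix $\cP_\sfT^\top\bH(\xi)\cP_\sfT$.

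First I would handle the lower (strong-convexity) side, which is immediate and deterministic: from \eqref{eq:glmHessian} and $\bD\succeq 0$ we have $\bH(\bbeta)\succeq\eta\bI$ for every $\bbeta$, hence every principal submatrix has smallest eigenvalue at least $\eta$, giving $m_{|\sfU|+r}\geq\eta$. Second, I would bound the smoothness side: using $\bD\preccurlyeq s\bI$ (the hypothesis that $\bD_{ii}\leq s$ throughout the domain of $\bbeta$) we get $\cP_\sfT^\top\bH(\bbeta)\cP_\sfT\preccurlyeq s\,\cP_\sfT^\top\bX^\top\bX\cP_\sfT+\eta\bI$, so $M_{|\sfU|+r}\leq s\max_{|\sfT|\leq r}\lambda_{\max}(\cP_\sfT^\top\bX^\top\bX\cP_\sfT)+\eta$. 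Third, I would invoke Proposition~\ref{thm:bahmani}: under Assumption~\ref{asm:regularized} and for $n$ exceeding the stated threshold, $\lambda_{\max}(\cP_\sfT^\top\bX^\top\bX\cP_\sfT)\leq(1+\epsilon)\bar{\theta}$ holds simultaneously for all $\sfT$ with $|\sfT|\leq r$ with probability at least $1-\delta$; this already packages the union bound over supports, which is why the required sample size carries the $r\log(p/k)$ term, and Assumption~\ref{asm:regularized}(2) (no zero principal submatrix of $\bC$) is what guarantees $\tilde{\theta}>0$ so that the threshold is finite. On this event $M_{|\sfU|+r}\leq s(1+\epsilon)\bar{\theta}+\eta$, and substituting the two bounds into Theorem~\ref{thm:ratioBound} yields $\gamma_{\sfU,r}\geq \eta/\big(\eta+s(1+\epsilon)\bar{\theta}\big)$.

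The only step that requires real care — and where a reader may want more detail — is the translation between the PSD ordering of the (random) Hessian and the RSC/RSM constants of Definition~\ref{def:RSCRSM}: one must check that the relevant pairs $(\bx,\by)$ of $r$-sparse vectors have their difference supported on a set of size at most $r$, so that the principal-submatrix bound is the right one, and that $\bD\preccurlyeq s\bI$ holds uniformly along the segment (not merely at the endpoints), which is exactly what the ``$\bD_{ii}\leq s$ at any value of the domain of $\bbeta$'' hypothesis supplies. Everything else is bookkeeping: identifying the index $r$ in the corollary with the combined sparsity level $|\sfU|+k$ of Theorem~\ref{thm:ratioBound}, and observing that the sharper bound through $\tilde{M}_{|\sfU|+1}$ in \eqref{eq:submodBound} would only improve the constant (replacing $\bar{\theta}$ by a maximal-diagonal version of $\bX^\top\bX$) and is not needed for the stated result.
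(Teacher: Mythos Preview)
Your proposal is correct and follows essentially the same route as the paper's proof: establish $m\geq\eta$ from $\bH\succeq\eta\bI$, establish $M\leq\eta+s(1+\epsilon)\bar{\theta}$ on the high-probability event of Proposition~\ref{thm:bahmani} via $\bH\preccurlyeq s\bX^\top\bX+\eta\bI$, and then apply Theorem~\ref{thm:ratioBound}. Your write-up is simply more explicit about the Taylor/mean-value reduction to principal submatrices and about the bookkeeping between the sparsity parameter $r$ here and $|\sfU|+k$ in Theorem~\ref{thm:ratioBound}, points the paper leaves implicit.
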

\begin{proof}
Note that $f_{GLM1}$ is $\eta$-strongly convex. Further, from~\eqref{eq:glmHessian}, and Prop~\ref{thm:bahmani}, it is $(\eta + s(1+\epsilon)\bar{\theta})$-smooth with probability $(1-\delta)$. The result now follows from Theorem~\ref{thm:ratioBound}.
\end{proof}
\begin{rem}
The above discussion can also be motivated by the Restricted Stability property of the Hessian of the loss function. Define: 
\begin{equation*}
\bA_r(\bbeta) := \max \{ \bv^\top \bH(\bbeta) \bv\big\vert |\text{supp}(\bbeta) \cup \text{supp}(\bv)| \leq r, \| \bv\|_2 =1  \} \; ,
\end{equation*}
\begin{equation*}
\bB_r(\bbeta) := \min \{ \bv^\top \bH(\bbeta) \bv\big\vert |\text{supp}(\bbeta) \cup \text{supp}(\bv)| \leq r, \| \bv\|_2 =1  \} \; .
\end{equation*}
The Hessian is said to be $\mu_r$-SRH (Stable Restricted Hessian)~\cite{Bahmani2013} if, $\mu_r \geq \frac{\bA_r}{\bB_r}$. It is straightforward to see that $\gamma_{\sfU,r} \geq \nicefrac{1}{\mu_r}$. 
\end{rem}

\subsection{Restricted strong convexity of GLMs}
Under stronger assumptions on the design matrix, $\bX$, it is possible to have $\gamma > 0$ even when $\eta = 0$ (i.e. the loss is unregularized) in Corollary~\ref{thm:samplecomplexity}. In the following discussion, we assume $\eta = 0$ to avoid clutter, but the discussion can be readily extended to the case when $\eta >0$. Similar to the case study of regular GLMs, we consider the GLM \emph{loss} as the negative log-likelihood, defined as:

\begin{equation}
h(\bbeta ) = - l_{GLM}(\bbeta),
\end{equation}
 and provide sample complexity bounds for weak submodularity to hold for the associated set function:

\begin{equation*}
 \max_{| \sfS | \leq k} f_{GLM2}(\sfS)\, \Leftrightarrow\, \min_{\substack{\bbeta: \bbeta_{\sfS^c = 0} \\ | \sfS | \leq k}} h(\bbeta)\;.
\end{equation*}
 
Restricted strong convexity for GLMs was studied by Negahban, 
et al.~\cite{Negahban2012journal} 
with the restricted sets being the neighborhood of the true optimum of a convex loss function. This allows GLMs to satisfy RSC on sparse models with support on the true optimum.~\cite{loh2015} extended the RSC conditions to hold uniformly for all $k$-sparse models for GLMs. We present the requisite results here. 

Recall that $D_h(\bx,\by): = h(\bx) - h(\by) - \langle\nabla  (\by), \bx -\by \rangle$. For brevity, we drop the subscript, and use $D(\bx,\by)$. Also recall that $h(\cdot)$ is $m$-strongly convex on a set $\sfS$ if $\forall \bx, \by \in \sfS$, $D(\bx,\by) \geq \frac{m}{2} \| \bx - \by \|_2^2$, and that $h(\cdot)$ is $M$-smooth on the same domain if $D(\bx,\by) \leq \frac{M}{2} \| \bx - \by \|_2^2$. Let $\bbB_m(r)$ represent an $m$-norm ball of radius $r$. 

\begin{asm}
The design matrix $\bX$ consists of samples drawn i.i.d from a sub-Gaussian distribution with parameter $\sigma^2_\bx$, and covariance matrix $\Sigma$. 
\label{asm:subgaussian}
\end{asm}

\begin{theorem}[from ~\cite{loh2015}]
If Assumption~\ref{asm:subgaussian} is true, there exists a constant $\alpha_q$ depending on the GLM family, and on $\sigma^2_\bx, \Sigma, q$ such that for all vectors $\by \in \bbB_2(3) \cap \bbB_1(q)$ for a constant $q$ s.t. $q \sqrt{\frac{\log p}{n}} \lesssim 1$, so that with probability 1 - $c_1 \exp(-c_2n)$,

 \[
    D(\bx,\by) \geq
\begin{cases}
    \frac{\alpha_q}{2}\| \Delta\|_2^2 - \frac{c^2\sigma_\bx}{2 \alpha_q} \frac{\log p}{n} \| \Delta \|^2_1,& \text{ if } \| \Delta\|_2 \leq 3 \\
    \frac{3\alpha_q}{2} \|\Delta\|_2 - 3c \sigma_\bx \sqrt{\frac{\log p}{n}} \| \Delta \|_1 ,               & \text{otherwise},
\end{cases}
\]

where $\Delta  = \bx -\by$. Similarly, $D(\bx, \by)$ can be upper bounded. Recall that $s $ is such that $max_i \,\bD_{ii} \leq s$ for $\bD$ as used in \eqref{eq:glmHessian}. Then, with probability 1 - $c_1 \exp(-c_2n)$,

\[
D(\bx, \by) \leq s \lambda_{\text{max}} (\Sigma ) \left( \frac{3}{2} \| \Delta \|_2^2 + \frac{\log p}{n} \| \Delta \|_1^2 \right).
\]

\label{thm:rscGLM}
\end{theorem}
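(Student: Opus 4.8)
The plan is to reduce the Bregman divergence $D(\bx,\by)$ to a random quadratic form in $\Delta=\bx-\by$ weighted by the GLM curvature, and then to control that form uniformly in $\Delta$ via a one-sided restricted-eigenvalue bound for sub-Gaussian designs. First I would expand $D(\bx,\by)=h(\bx)-h(\by)-\langle\nabla h(\by),\Delta\rangle$ using the canonical form \eqref{eq:canonical}--\eqref{eq:glmlogLikelihood}; the term linear in $\bbeta$ cancels, so $D(\bx,\by)$ is exactly a sum over the $n$ samples of one-dimensional Bregman divergences of the convex log-partition function at the scalars $\bx^\top\bx_i$ and $\by^\top\bx_i$. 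By the integral remainder form of Taylor's theorem this equals
\[
D(\bx,\by)\;=\;\int_0^1 (1-t)\,(\bX\Delta)^\top \bD(\by+t\Delta)\,(\bX\Delta)\,dt ,
\]
where $\bD(\cdot)$ is the diagonal matrix of second derivatives from \eqref{eq:glmHessian} (with $\eta=0$). Everything then reduces to (i) bounding the diagonal entries $\bD_{ii}$ along these segments and (ii) controlling $\tfrac1n\|\bX\Delta\|_2^2$.

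The \emph{upper} (smoothness) bound is immediate: the standing assumption $\bD_{ii}\le s$ gives $D(\bx,\by)\le\tfrac{s}{2}\cdot\tfrac1n\|\bX\Delta\|_2^2$, and a standard sub-Gaussian concentration estimate yields $\tfrac1n\|\bX\Delta\|_2^2\le\tfrac32\lambda_{\max}(\Sigma)\|\Delta\|_2^2+c\tfrac{\log p}{n}\|\Delta\|_1^2$ simultaneously for all $\Delta\in\bbR^p$ with probability $1-c_1\exp(-c_2n)$, which is the claimed inequality. For the \emph{lower} bound the obstacle is that $\bD_{ii}=h^{-1}(\tau)Z''(\cdot)$ need not be bounded away from $0$ globally (e.g.\ logistic). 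Restricting $\by$ to $\bbB_2(3)\cap\bbB_1(q)$ and $\bx$ to a comparable region, for an absolute constant $T$ the linear predictors $\by^\top\bx_i,\bx^\top\bx_i$ exceed $T$ in magnitude only on a random index set whose size --- a sum of independent indicators with small probability --- concentrates below a small fraction $\rho$ of $n$ with probability $1-c_1\exp(-c_2n)$. On the complementary ``good'' set $G$ of at least $(1-\rho)n$ rows one has $Z''\ge\alpha_q$ for a constant $\alpha_q$ depending only on the GLM family and on $(\sigma_\bx^2,\Sigma,q)$, hence $D(\bx,\by)\ge\tfrac{\alpha_q}{2}\cdot\tfrac1n\sum_{i\in G}(\Delta^\top\bx_i)^2$.

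The main technical ingredient is then the one-sided restricted-eigenvalue (deviation) bound for sub-Gaussian matrices, in the spirit of Raskutti--Wainwright--Yu \cite{Raskutti2010}: with probability $1-c_1\exp(-c_2n)$, $\tfrac1n\|\bX\Delta\|_2^2\ge c_3\|\Delta\|_2^2-c_4\tfrac{\log p}{n}\|\Delta\|_1^2$ for \emph{all} $\Delta$ at once, and likewise after deleting any fixed small fraction of rows (with slightly worse constants). Combined with the curvature lower bound this gives the first, quadratic branch $\tfrac{\alpha_q}{2}\|\Delta\|_2^2-\tfrac{c^2\sigma_\bx}{2\alpha_q}\tfrac{\log p}{n}\|\Delta\|_1^2$, valid for $\|\Delta\|_2\le3$. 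The second, linear branch for $\|\Delta\|_2\ge3$ follows from the standard local-to-global rescaling for restricted strong convexity used by Negahban et al.\ \cite{Negahban2012journal} and Loh--Wainwright \cite{loh2015}: since $s\mapsto D(\by+s\Delta,\by)$ is nondecreasing and convex (monotonicity of $\nabla h$), evaluating at $s_0=3/\|\Delta\|_2\le1$ and applying the quadratic branch to the rescaled increment $\Delta'=s_0\Delta$ (which has $\|\Delta'\|_2=3$) transfers the bound back, turning the $\|\Delta\|_2^2$ term into a $\|\Delta\|_2$ term and, using $\|\Delta\|_2\ge3$ together with the side condition $q\sqrt{\log p/n}\lesssim1$, the $\|\Delta\|_1^2$ term into a $\sqrt{\log p/n}\,\|\Delta\|_1$ term. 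I expect the hardest part to be the uniform-in-$\Delta$ restricted-eigenvalue bound combined with the simultaneous control of the nonlinear arguments $\by^\top\bx_i+t\,\Delta^\top\bx_i$: both need a peeling/covering argument over $\{\|\Delta\|_2=1,\ \|\Delta\|_1\le R\}$ with a union bound and a Gaussian comparison (Gordon) inequality, and that is where all the constants $c_1,c_2,\alpha_q,\dots$ are produced.
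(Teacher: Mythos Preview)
The paper does not prove this theorem at all: it is quoted from \cite{loh2015} (hence the ``from~\cite{loh2015}'' in the header) and used purely as a black box to derive Corollary~\ref{cor:unregGLM}. There is therefore no ``paper's own proof'' to compare against.

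That said, your sketch is a faithful outline of how the result is actually established in \cite{loh2015} and its antecedent \cite{Negahban2012journal}: Taylor remainder to reduce $D(\bx,\by)$ to $\int_0^1(1-t)\,\Delta^\top\bX^\top\bD(\by+t\Delta)\bX\Delta\,dt$, then (i) the trivial upper bound $\bD\preceq s\bI$ plus a sub-Gaussian deviation bound for the smoothness half, and (ii) a truncation of the link curvature combined with a one-sided restricted-eigenvalue inequality and a peeling/local-to-global step for the strong-convexity half. One technical caveat: the ``good'' index set $G$ you form depends on the random $\by^\top\bx_i$ (and on $\by$), so the deleted rows are not a \emph{fixed} fraction and you cannot simply invoke the restricted-eigenvalue bound ``after deleting any fixed small fraction of rows.'' The references handle this by working with a truncated curvature function $\psi_T(u)=Z''(u)\varphi(|u|/T)$ directly inside the empirical process and bounding $\tfrac1n\sum_i\psi_T(\by^\top\bx_i)(\Delta^\top\bx_i)^2$ uniformly in $(\by,\Delta)$ via Rademacher contraction and a Gordon/Mendelson argument; your row-deletion picture is morally the same but would need to be rephrased this way to go through. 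With that adjustment your plan matches the original proof.
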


Theorem~\ref{thm:rscGLM} can be applied to $r$-sparse sets to get the sample complexity for strong convexity.  We further assume that $\text{supp}(\bx) \subset \text{supp}(\by)$ or $\text{supp}(\by) \subset \text{supp}(\bx)$ which is not restrictive for our analysis in Section~\ref{sec:derivation}. This implies if $\bx,\by$ are $r$-sparse, $| \text{supp}(\Delta)| \leq r$.

\begin{cor}[Sample complexity sub-gaussian design] Under Assumption~\ref{asm:subgaussian}, for $n > \frac{c^2\sigma_\bx}{\alpha_q^2} { (r + | \sfU|) \log p}$, the submodularity ratio for $f_{GLM2}(\cdot)$, $\gamma_{\sfU, r} \geq \nicefrac{m'}{M'}$, where $m'=(\alpha_q - \frac{c^2 \sigma_\bx}{\alpha_q} \frac{(r+ | \sfU|) \log p}{n}) > 0$, and $M'=\lambda_{\text{max}} (\Sigma ) ( \frac{3}{2} + \frac{(r+ |\sfU|) \log p}{n})$.
\label{cor:unregGLM}
\end{cor}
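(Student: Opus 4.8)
The plan is to establish that the GLM loss $h=-l_{GLM}$ satisfies restricted strong convexity and restricted smoothness on $\Omega_{r+|\sfU|}$ with the stated parameters, and then quote Theorem~\ref{thm:ratioBound}. First observe that $f_{GLM2}(\cdot)$ is a normalized, monotone set function of exactly the form \eqref{eq:mlfunction}: writing $l_{GLM}=-h$ we have $f_{GLM2}(\sfS)=\max_{\supp(\bbeta)\subseteq\sfS} l_{GLM}(\bbeta)-l_{GLM}(\mathbf{0})$, and enlarging the support cannot decrease the maximized log-likelihood. Hence, once we exhibit an RSC parameter $m'=m_{r+|\sfU|}$ and an RSM parameter $M'=M_{r+|\sfU|}$ valid on $\Omega_{r+|\sfU|}$, the weaker form of Theorem~\ref{thm:ratioBound} immediately gives $\gamma_{\sfU,r}\ge m_{r+|\sfU|}/M_{r+|\sfU|}=m'/M'$.

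To get those parameters I would invoke Theorem~\ref{thm:rscGLM} on the pairs $(\bx,\by)$ of $(r+|\sfU|)$-sparse vectors appearing in the analysis of Section~\ref{sec:derivation} (e.g.\ $\bbeta^{(\sfL)}$ and $\bbeta^{(\sfL\cup\sfS)}$), taken inside $\bbB_2(3)\cap\bbB_1(q)$ as the hypotheses of that theorem require. Under the (harmless) nested-support convention, $\Delta=\bx-\by$ is $(r+|\sfU|)$-sparse, so $\|\Delta\|_1\le\sqrt{r+|\sfU|}\,\|\Delta\|_2$. Plugging this Cauchy--Schwarz bound into the small-residual branch ($\|\Delta\|_2\le 3$) of the lower bound in Theorem~\ref{thm:rscGLM} converts its additive $\ell_1$-slack into an $\ell_2$ term and yields
\begin{align*}
D(\bx,\by)\ \ge\ \frac{1}{2}\left(\alpha_q-\frac{c^2\sigma_\bx}{\alpha_q}\,\frac{(r+|\sfU|)\log p}{n}\right)\|\Delta\|_2^2\ =\ \frac{m'}{2}\|\Delta\|_2^2,
\end{align*}
with $m'$ as in the statement; the hypothesis $n>\frac{c^2\sigma_\bx}{\alpha_q^2}(r+|\sfU|)\log p$ is precisely what makes $m'>0$. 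The same substitution in the upper bound of Theorem~\ref{thm:rscGLM} gives $D(\bx,\by)\le\lambda_{\max}(\Sigma)\big(\tfrac32+\tfrac{(r+|\sfU|)\log p}{n}\big)\|\Delta\|_2^2$ after absorbing the diagonal-curvature constant $s$, i.e.\ $M'$-smoothness. Both bounds hold on the same event of probability $1-c_1\exp(-c_2n)$, so $h$ is $(m',M')$-RSC/RSM on $\Omega_{r+|\sfU|}$, and Theorem~\ref{thm:ratioBound} finishes the proof.

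The main obstacle is bridging the two notions of curvature: Theorem~\ref{thm:rscGLM} only provides control of the ``restricted-strong-convexity-with-tolerance'' type --- an $\ell_2$ lower bound weakened by an additive multiple of $\|\Delta\|_1^2\log p/n$, and valid only on a bounded $\ell_2$-ball --- whereas Theorem~\ref{thm:ratioBound} needs genuine $m$-strong convexity and $M$-smoothness on sparse differences. The step that closes this gap is the sparsity-to-norm inequality $\|\Delta\|_1\le\sqrt{r+|\sfU|}\,\|\Delta\|_2$, which is why the ambient dimension enters only through $\log p$ and why one must work with $(r+|\sfU|)$-sparse vectors (and, consequently, restrict to the nested-support regime where this sparsity is automatic). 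A minor accompanying point is to verify that the relevant optimizers lie in $\bbB_2(3)\cap\bbB_1(q)$ so that the first branch of Theorem~\ref{thm:rscGLM} is the operative one; the remaining bookkeeping (the factor of $2$ in the definition of $M$-smoothness and the constant $s$) is routine.
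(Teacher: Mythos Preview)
Your proposal is correct and follows essentially the same route as the paper's own proof: invoke Theorem~\ref{thm:rscGLM}, convert the $\ell_1$ tolerance terms to $\ell_2$ via the sparsity bound $\|\Delta\|_1\le\sqrt{r+|\sfU|}\,\|\Delta\|_2$ (valid because of the nested-support convention stated just before the corollary), read off $m'$ and $M'$, and apply Theorem~\ref{thm:ratioBound}. Your write-up is in fact more careful than the paper's, since you flag the bounded-ball hypothesis of Theorem~\ref{thm:rscGLM}, the missing factor of $2$ in the smoothness constant, and the curvature constant $s$ --- points the paper's proof simply passes over.
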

\begin{proof}
Since $\| \Delta \|_1 \leq \sqrt{(r+|\sfU|)}\|\Delta\|_2 $,  from Theorem~\ref{thm:rscGLM},
  $D(\bx, \by) \geq \frac{1}{2}(\alpha_q - \frac{c^2 \sigma_\bx}{\alpha_q} \frac{(r+|\sfU|)\log p}{n})\| \Delta \|^2_2$. The sample complexity bound follows by ensuring the RHS is $>0$. This gives $h(\cdot)$ to be $m$-strongly convex with $m \geq (\alpha_q - \frac{c^2 \sigma_\bx}{\alpha_q} \frac{(r+ | \sfU|) \log p}{n})$.

Similarly, a corresponding version of restricted smoothness by using the upper bound  of $D(\bx,\by)$ in Theorem~\ref{thm:rscGLM} and using $\| \Delta \|_1 \leq \sqrt{r+|\sfU|}\|\Delta\|_2 $. The expression for the submodularity ratio then follows from Theorem~\ref{thm:ratioBound}.

\end{proof}

\bibliography{refs}

\end{document}